\documentclass[journal]{IEEEtran}

\usepackage{amsmath}
\usepackage{amssymb}
\usepackage{cases}
\usepackage{graphicx}
\usepackage{nicematrix}

\usepackage[sort,compress]{cite}
\usepackage{booktabs}

\usepackage{amsthm}

\newtheorem{theorem}{Theorem}

\newtheorem{corollary}{Corollary}

\theoremstyle{definition}

\newtheorem{remark}{{Remark}}

\usepackage{threeparttable}    
\usepackage{makecell}

\usepackage{arydshln} 
\usepackage{mdframed} 

\usepackage{setspace}
\usepackage[]{xcolor}
\usepackage{subcaption}
\usepackage{wrapfig}
\usepackage{multirow}
\usepackage{mathtools}
\usepackage{empheq}
\usepackage{color}
\usepackage{bbm}
\usepackage{xspace}

\usepackage[shortlabels]{enumitem}

\usepackage{tikz} 
\usepackage{tkz-graph}
\usepackage{tkz-berge}
\usetikzlibrary{backgrounds,fit,shapes,snakes,arrows,shapes.geometric,positioning,automata}
\usetikzlibrary{intersections,patterns,shapes.misc}
\usetikzlibrary{decorations.pathmorphing}
\usepackage{pgfplots}
\pgfplotsset{compat=1.16}

\tikzstyle{block} = [rectangle, rounded corners, minimum width=3cm, minimum height=1cm,text centered, draw=black, fill=red!30]
\tikzstyle{new} = [rectangle, rounded corners, minimum width=1cm, minimum
height=1cm,text centered, draw=black, fill=blue!10!white, dashed]
\tikzstyle{arrow} = [thick,->,>=stealth]
\usetikzlibrary{calc, quotes}
\usetikzlibrary{arrows.meta}

\usepackage[linesnumbered,ruled,vlined]{algorithm2e}

\usepackage[]{algpseudocode}

\usepackage{fontawesome}

\usepackage{soul}
\usepackage{comment}
\usepackage[]{url}
\usepackage[normalem]{ulem}
\useunder{\uline}{\ul}{}

\usepackage[pdftex, colorlinks=true, linkcolor=blue, citecolor = blue, urlcolor = cyan, filecolor=black, pagebackref=false, hypertexnames=false]{hyperref}
\usepackage{cleveref}
\crefname{problem}{Problem}{Problems}
\crefname{example}{Example}{Examples}
\crefname{section}{Sec.}{Secs.}
\Crefname{section}{Section}{Sections}
\Crefname{table}{Table}{Tables}
\crefname{table}{Table}{Tabs.}
\crefname{figure}{Fig.}{Figs.}
\crefname{algorithm}{Algorithm}{Algorithms}
\crefname{remark}{Remark}{Remarks}
\crefname{assumption}{Assumption}{Assumption}
\crefname{theorem}{Theorem}{Theorems}
\crefname{proposition}{Proposition}{Propositions}
\crefname{lemma}{Lemma}{Lemmas}
\crefname{corollary}{Corollary}{Corollaries}
\crefname{assumption}{Assumption}{Assumptions}
\crefname{definition}{Definition}{Definitions}

\usepackage{bbding}

\usepackage{nicematrix} 

\def\bfa{\mathbf{a}}
\def\bfb{\mathbf{b}}

\def\bff{\mathbf{f}}
\def\bfg{\mathbf{g}}

\def\bfn{\mathbf{n}}
\def\bfp{\mathbf{p}}

\def\bfs{\mathbf{s}}

\def\bfu{\mathbf{u}}
\def\bfv{\mathbf{v}}

\def\bfy{\mathbf{y}}
\def\bfz{\mathbf{z}}

\def\bfA{\mathbf{A}}

\def\bfC{\mathbf{C}}
\def\bfD{\mathbf{D}}
\def\bfE{\mathbf{E}}
\def\bfF{\mathbf{F}}
\def\bfG{\mathbf{G}}
\def\bfH{\mathbf{H}}
\def\bfI{\mathbf{I}}
\def\bfJ{\mathbf{J}}
\def\bfK{\mathbf{K}}
\def\bfL{\mathbf{L}}
\def\bfM{\mathbf{M}}
\def\bfN{\mathbf{N}}

\def\bfP{\mathbf{P}}
\def\bfQ{\mathbf{Q}}
\def\bfR{\mathbf{R}}
\def\bfS{\mathbf{S}}
\def\bfT{\mathbf{T}}

\def\bfX{\mathbf{X}}

\def\bfZo{\mathbf{0}}

\def\bfvarepsilon {\boldsymbol{\varepsilon}}
\def\bfepsilon {\boldsymbol{\epsilon}}

\def\bfomega{\boldsymbol{\omega}}

\def\bftheta{\boldsymbol{\theta}}

\def\bfPhi{\boldsymbol{\Phi}}

\def\scaleF#1{\scalebox{1.}{$#1$}}

\DeclareMathOperator{\Exp}{Exp}
\DeclareMathOperator{\Log}{Log}

\DeclareMathOperator{\se}{\mathfrak{se}}

\newcommand{\bmat}{\begin{bmatrix}}
\newcommand{\emat}{\end{bmatrix}}

\providecommand{\optional}[1]{{}}

\providecommand{\techreport}[1]{{}}  

\usepackage{tabularx}
\usepackage{etoc}

\def\rfD{\mathrm{D}}

\def\GSD{\mathbf{G}}
\UseRawInputEncoding

\def\revised #1{{#1}}
\def\revised #1{        {#1}}

\def\SO3{{\mathbf{SO}(3)}}

\begin{document}
\title{Equivariant Filter Transformations for Consistent and Efficient Visual--Inertial Navigation}


\author{Chungeng Tian,~\IEEEmembership{Graduate Student Member,~IEEE}, Fenghua He,~\IEEEmembership{Member,~IEEE}, and Ning Hao,~\IEEEmembership{Member,~IEEE}
\thanks{ The authors are with School of Astronautics, Harbin Institute of Technology, Harbin, 150000, China. {(email: tcghit@outlook.com; hefenghua@hit.edu.cn; haoning@hit.edu.cn).} Corresponding author: {Fenghua He}.}%
}

\maketitle
\thispagestyle{plain}
\pagestyle{plain}

\begin{abstract}
This paper presents an equivariant filter (EqF) transformation approach for visual--inertial navigation. 
By establishing analytical links between EqFs with different symmetries, the proposed approach enables systematic consistency design and efficient implementation.
First, we formalize the mapping from the global system state to the local error-state and prove that it induces a \revised{first-order linear transformation relationship} between the error-states of any two EqFs. Second, we derive transformation laws for the associated linearized error-state systems and unobservable subspaces. These results yield a general consistency design principle: for any unobservable system, a consistent EqF with a state-independent unobservable subspace can be synthesized by transforming the local coordinate chart, thereby avoiding ad hoc symmetry analysis. 
Third, to mitigate the computational burden arising from the non-block-diagonal Jacobians required for consistency, 
we propose two efficient implementation strategies. These strategies exploit the Jacobians of a simpler EqF with block-diagonal structure to accelerate covariance operations while preserving consistency.
Extensive Monte Carlo simulations and real-world experiments validate the proposed approach in terms of both accuracy and runtime. The code is available at \url{https://github.com/HITCSC/T-EqF}.
\end{abstract}

\begin{IEEEkeywords}
Equivariant filter, visual--inertial navigation, estimation consistency, coordinate transformation.
\end{IEEEkeywords}


\section{Introduction}

\IEEEPARstart
{V}{isual}--Inertial Navigation Systems (VINS), which fuse high-frequency inertial measurements with visual observations, have become a core technology for autonomous robots operating in GPS-denied environments\cite{jin2020camera,genevaOpenVINSResearchPlatform2020,duan2022stereo,wang2025intelligent,wang2025givlslam}. \revised{Among filter-based nonlinear estimators, Equivariant Filters (EqFs), which exploit system symmetries to lift the estimation problem from the state manifold to a suitable symmetry group, have recently attracted increasing attention\cite{vangoorEquivariantFilterEqF2023, ge2022equivariant,fornasier2022EquivariantFilterDesign,tao2025equivariant}. This view unifies classical error-state Kalman filter (ESKF) \cite{eskf,trawny2005indirect,chang2017indirect} and invariant extended Kalman filter (I-EKF) \cite{barrauInvariantExtendedKalman2017,ge2026difference, shi2023invariant,he2025invariant,xia2025invariantekfbased} constructions and provides a general route to geometric estimator design.
Because EqF relies on equivariant properties rather than requiring the original system model to be defined on a Lie group, it provides a flexible framework for designing estimators for complex navigation systems.}



Despite this progress, two critical problems remain. First, universal design criteria for consistent EqFs are still lacking. \revised{In VINS, state-dependent unobservable subspaces may cause the estimator to acquire spurious information along unobservable directions, leading to overconfident estimates \cite{huangAnalysisImprovementConsistency2008}. Second, symmetries or transformations introduced for consistency may destroy the block-diagonal Jacobian structure used by efficient VINS filters, causing expensive covariance propagation when many landmarks are maintained \cite{yangDecoupledRightInvariant2022}. Although recent works \cite{tian2025teskf,chen2024visualinertial, tian2025unobservable} address this trade-off for specific estimators, a general principle for arbitrary EqFs is still missing.}

To address these challenges, this paper develops an EqF transformation framework for consistent and efficient VINS. The central idea is to characterize each EqF by its \textit{global-local map}, which maps the system state in a neighborhood of the current estimate to the local error-state. We prove that, for any two EqFs of the same system, the composition of their global-local maps induces a transformation relationship between their error-states. This result provides a unified mechanism for transferring filter structures across EqFs: the associated linearized dynamics, measurement Jacobians, and unobservable subspaces follow explicit transformation laws.
Building on this relationship, we turn EqF consistency design into a coordinate-transformation problem: by transforming the local coordinate chart, the unobservable subspace can be made state-independent, yielding consistent EqFs without ad hoc symmetry selection. 
The same transformation viewpoint also reveals how the covariance operations of a more consistent target EqF can be related to those of a structurally simple auxiliary EqF.
To reduce the computational burden arising from non-block-diagonal Jacobians, we further propose two implementation strategies 
that use this transformation relationship to exploit the block-diagonal Jacobians of the auxiliary EqF while preserving the consistency of the target EqF.

The main contributions of this paper are summarized as follows:
\begin{itemize}
\item A universal transformation relationship, with analytical expressions, that links error-states and transfers the associated Jacobians and observability properties across arbitrary EqFs.
\item A systematic methodology for designing consistent EqFs, applicable to general unobservable systems.
\item Two efficient EqF implementations that achieve runtime comparable to ESKF.
\end{itemize}

\revised{
\section{Related Work}}
\label{sec:related_work}

\subsection{\revised{Equivariant and Symmetry-Based Filtering}}
\revised{ Filter-based VINS commonly uses ESKF formulations, which represent attitude errors through local perturbations on the special orthogonal group \cite{eskf}. I-EKF further exploits invariant errors on the special Euclidean group to improve the structure of the linearized dynamics and the observability properties of navigation estimators\cite{barrauInvariantExtendedKalman2017}. Recent work \cite{barrauGeometryNavigationProblems2023} extends the invariant filtering framework using the two-frame group to better handle states in the body frame.
 EqF generalizes the symmetry-based view by using equivariant system properties rather than requiring the original model itself to be a Lie-group system\cite{vangoorEquivariantFilterEqF2023}.
 EqF remains applicable even when the system dynamics do not strictly satisfy the group-affine property \cite{fornasier2022EquivariantFilterDesign}.
 Moreover, if the system exhibits output equivariance, the EqF formulation can further reduce the linearization error of the measurement model \cite{vangoor2023eqvio}.
 Recent work \cite{fornasier2025equivariant} shows that ESKF and I-EKF, including the two-frame extension \cite{barrauGeometryNavigationProblems2023}, can be cast as EqFs. Building on this view, this paper derives explicit transformation relationships between arbitrary EqFs, clarifying how independently presented formulations are connected.}







\subsection{\revised{Inconsistency Issue in VINS}}
\revised{Estimator inconsistency in VINS has been widely attributed to the mismatch between the true unobservable directions and those of the linearized system\cite{huangAnalysisImprovementConsistency2008}. First-estimate Jacobian methods \cite{huangAnalysisImprovementConsistency2008,chenFEJ2ConsistentVisualInertial2022} and observability-constrained methods \cite{heschConsistencyAnalysisImprovement2014} enforce the correct null space by modifying the Jacobians, but may sacrifice Jacobian optimality. Another line of work seeks to make the unobservable subspace state-independent so that the observability properties are preserved under linearization \cite{hao2025transformationbased,song2024affine}. This can be achieved through symmetry selection (e.g., I-EKF \cite{barrauEKFSLAMAlgorithmConsistency2016}, Robocentric formulation \cite{huai2022robocentric}, and EqVIO \cite{vangoor2023eqvio}) or error-state transformations \cite{tian2025teskf}.
However, a general design principle for choosing appropriate symmetries to achieve consistency is still lacking. Moreover, current transformation-based methods are limited to improving the consistency of ESKF.}

\subsection{\revised{Efficient Implementation for Consistent VINS}}
\revised{The computational issue considered in this paper is therefore not generic VINS acceleration \cite{pengSqrt2025}, but rather the bottleneck introduced by consistency-improving methods. In VINS, symmetries or transformations used to maintain a state-independent unobservable subspace may destroy the block-diagonal propagation structure by coupling landmark errors with navigation errors. To mitigate this issue, \cite{yangDecoupledRightInvariant2022} combines invariant errors with first-estimate Jacobians to decouple landmark uncertainty, but this comes at the cost of Jacobian optimality and estimation accuracy. Recent works \cite{chen2024visualinertial,tian2025teskf} identify the relationship between the error-states of ESKF and I-EKF and exploit the block-diagonal propagation of ESKF to accelerate I-EKF.
Our work is closely related to \cite{tian2025teskf} in consistency design and efficient computation, but extends the transformation idea from ESKF/I-EKF to arbitrary EqFs and offers more choices for efficient implementation.}

\section{Preliminaries}
\label{sec:preliminaries}%
\subsection{Manifolds and Lie Groups}

For a smooth manifold $\mathcal{M}$, let $\mathrm{T}_{\xi}\mathcal{M}$ denote the tangent space at $\xi \in \mathcal{M}$. Given a differentiable mapping $h: \mathcal{M} \to \mathcal{N}$ between two smooth manifolds, its differential at $\xi' \in \mathcal{M}$ is denoted by the linear map \begin{equation}
\mathrm{D}_{\xi|\xi'} h(\xi): \mathrm{T}_{\xi'}\mathcal{M} \to \mathrm{T}_{h(\xi')}\mathcal{N},
\end{equation} which pushes forward a tangent vector $v \in \mathrm{T}_{\xi'}\mathcal{M}$ to $\mathrm{D}_{\xi|\xi'} h(\xi)[v] \in \mathrm{T}_{h(\xi')}\mathcal{N}$.

A Lie group $\mathbf{G}$ is a smooth manifold endowed with a group structure. For any $X, Y \in \mathbf{G}$, group multiplication is denoted by $XY$, the inverse of $X$ by $X^{-1}$, and the identity element by $I$. The tangent space at the identity, $\mathfrak{g} := \mathrm{T}_I \mathbf{G}$, is the Lie algebra of $\mathbf{G}$. As an $n$-dimensional vector space, $\mathfrak{g}$ is isomorphic to $\mathbb{R}^n$ via the \emph{wedge} operator $(\cdot)^\wedge: \mathbb{R}^n \to \mathfrak{g}$, with its inverse denoted by the \emph{vee} operator $(\cdot)^\vee: \mathfrak{g} \to \mathbb{R}^n$. The group adjoint map $\mathrm{Ad}_X: \mathfrak{g} \to \mathfrak{g}$ is defined as the differential of the conjugation mapping at the identity, i.e., \begin{equation}
\mathrm{Ad}_X [v] := \mathrm{D}_{Y|I} (X Y X^{-1}) [v].
\end{equation} The corresponding adjoint matrix $\mathbf{Ad}_X^{\vee} \in  \mathbb{R}^{n\times n}$ is defined such that $\mathbf{Ad}_X^{\vee} \mathbf{v} := (\mathrm{Ad}_X [\mathbf{v}^\wedge])^\vee$ for any $\mathbf{v} \in \mathbb{R}^n$.

\subsection{Right Group Actions}
A right group action of a Lie group $\mathbf{G}$ on a smooth manifold $\mathcal{M}$ is a smooth map $\phi:  \mathbf{G}\times \mathcal{M} \to \mathcal{M}$ that satisfies \begin{align}
\phi(I,\xi) &= \xi, \\
\phi(X_2,\phi( X_1,\xi)) &= \phi(X_1 X_2,\xi),
\end{align} for all $\xi \in \mathcal{M}$ and $X_1, X_2 \in \mathbf{G}$. For a fixed $X \in \mathbf{G}$, the partial map $\phi_X: \mathcal{M} \to \mathcal{M}$ is defined as $\phi_X(\xi) := \phi(X,\xi)$. Similarly, for a fixed $\xi \in \mathcal{M}$, the partial map $\phi_\xi: \mathbf{G} \to \mathcal{M}$ is defined as $\phi_\xi(X) := \phi(X,\xi)$.

\section{Transformation between EqFs}
\label{sec:transformation}
In this section, we present the transformation relationships between EqFs constructed from different symmetries. We first briefly introduce the VINS model and the EqF-based estimator, then derive the transformation between the error-states of different EqFs, and finally show that this relationship extends to the linearized error-state system.
\subsection{System Model}
Consider a robot equipped with an \revised{inertial measurement unit (IMU)} and a camera moving in a 3D environment. The system state to be estimated is
           \begin{align}
                \xi= (\bfR, \bfv, \bfp, \bfb_{\omega},\bfb_a, \bff_1,\dots,\bff_m),
        \end{align}
where $\bfR \in \SO3$, $\bfv \in \mathbb{R}^3$, and $\bfp \in \mathbb{R}^3$ are the orientation, velocity, and position of the IMU in the world frame, respectively;
$\bfb_{\omega} \in \mathbb{R}^3$ and $\bfb_a \in \mathbb{R}^3$ are the gyroscope bias and accelerometer bias, respectively;
 $\bff_i \in \mathbb{R}^3$ is the position of the $i$-th landmark in the world frame.

The system dynamics are given by:
\begin{equation}
        \left\{
        \begin{array}{ll}
                \dot{\bfR}  = \bfR [\bfomega_m - \bfb_{\omega}- \bfn_{\omega}]_\times, \quad\quad &\dot{\bfb}_{\omega}  = \bfn_{\text{w}\omega}, \\
                \dot{\bfv}  = \bfR (\bfa_m - \bfb_a - \bfn_a) + \bfg, \quad \quad & \dot{\bfb}_a  = \bfn_{\text{w}a},  \\
                \dot{\bfp}  = \bfv, \quad  \quad  &  \dot{\bff}_i  = \bfZo, 
        \end{array} \right.
\label{equ:system_dynamics}
\end{equation}
where $\bfomega_m$, $\bfa_m$ are the raw angular velocity and acceleration, respectively; $\bfg$ is the gravity vector; $\bfn = [\bfn_{\omega}^\top, \bfn_a^\top, \bfn_{\text{w}\omega}^\top, \bfn_{\text{w}a}^\top]^\top$
        is the process noise, with $\bfn_{\omega}$, $\bfn_a$ being the measurement noise of gyroscope and accelerometer, and $\bfn_{\text{w}\omega}$, $\bfn_{\text{w}a}$ being the random walk noise of gyroscope and accelerometer biases.

When the camera observes the $i$-th landmark, the measurement model is given by
\begin{equation}
        \bfz_i = h(^{I}\bfp_{f_i}) + \bfepsilon_i,
        \label{equ:measurement_model}
\end{equation}
where $^{I}\bfp_{f_i} = \bfR^\top (\bff_i - \bfp)$ and
$h = \pi \circ \Upsilon$. $\Upsilon: \mathbb{R}^3 \to \mathbb{R}^3$ transforms the landmark position from the IMU frame to the camera frame, and $\pi: \mathbb{R}^3 \to \mathbb{R}^2$ is the projection function. By stacking all visual measurements, we obtain the measurement vector $
        \bfz = \begin{bmatrix}
                \cdots &
                \bfz_i^\top
                \cdots  
        \end{bmatrix}^\top
$, and the corresponding noise vector $\bfepsilon = \begin{bmatrix}
                \cdots &
                \bfepsilon_i^\top & 
                \cdots 
        \end{bmatrix}^\top$.
        
\subsection{Equivariant Filter} \label{sec:sd_eqf}
We briefly review the SD-EqF, formulated on the semi-direct bias group
$\GSD:= \mathbf{SE}_{2}(3)\ltimes \mathfrak{se}(3) \times \mathbb{R}^{3m}$,
to illustrate the fundamental components of the EqF framework, including the symmetry group, group action, global error, and local coordinate.

\textbf{Symmetry group:}
An element of $\GSD$ is $X = (C, \gamma, p)$, where
$C =(R,a,b) \in \mathbf{SE}_{2}(3)$, $\gamma \in \se(3)$, and
$p \in \mathbb{R}^{3m}$. For any $X_1, X_2 \in \GSD$, the group multiplication is defined as: 
\begin{equation}
X_1 X_2 = (C_1 C_2, \gamma_1 + \mathrm{Ad}_{\Gamma ({C_1})}[\gamma_2], p_1 + p_2),
\end{equation}
where $\Gamma({C_1}) = (R,a) \in \mathbf{SE}(3)$. The identity is $(I,0,0)$ and
$X^{-1} = (C^{-1}, -\mathrm{Ad}_{\Gamma(C)^{-1}}[\gamma], -p)$.

\textbf{Group action:} To define the group action, we partition the system state as $\xi = (\bfA, \bfb, \bff)$, where $\bfA := (\bfR, \bfv, \bfp) \in \mathbf{SE}_2(3)$, $\bfb := (\bfb_{\omega}, \bfb_a) \in \mathbb{R}^{6}$, and $\bff := (\bff_1, \dots, \bff_m) \in \mathbb{R}^{3m}$. The group action $\phi: \GSD \times \mathcal{M} \to \mathcal{M}$ is then defined as: \begin{equation}
\phi(X, \xi) = (\bfA C, \mathbf{Ad}^{\lor}_{\Gamma (C)^{-1}}(\bfb - \gamma^{\lor}), \bff + p).
\end{equation} Given an arbitrary fixed state $\xi^{\circ } \in \mathcal{M}$, state estimation is equivalent to estimating an element $\hat X \in \bfG$, such that the estimated system state is expressed as: \begin{equation}
\hat \xi = \phi(\hat X, \xi^{\circ } ).
\end{equation} In this work, the origin of SD-EqF is chosen as $\xi^{\circ }  = (\bfI, \bf0, \bf0)$.

\textbf{Global error:} Let $\xi$ and $X$ denote the true state and its group representation, while $\hat{\xi}$ and $\hat{X}$ denote their estimated counterparts. The equivariant error is defined as: \begin{equation}
e = \phi(\hat{X}^{-1}, \xi).
\end{equation} Notably, $e = \xi^{\circ } $ if and only if $\hat X = X$. Thus, the global error resides in a neighborhood of $\xi^{\circ } $, i.e., $e \in \mathcal{U}(\xi^{\circ } ) \subset \mathcal{M}$.

\textbf{Local coordinate:} To facilitate estimator design, the global error $e$ is mapped to a vector space via a local coordinate chart $\vartheta: \mathcal{U}(\xi^{\circ }) \to \mathbb{R}^N$: \begin{equation}
\boldsymbol{\varepsilon} = \vartheta(e),
\end{equation}
 satisfying $\vartheta(\xi^{\circ } ) = \mathbf{0}$. 
In this paper, the local coordinate of the global error is referred to as the \textit{error-state} of EqF, as it serves a functionally analogous role to the error-state in ESKF.
For the SD-EqF, the local coordinate is explicitly given by: \begin{equation}
\vartheta(e) = \begin{pmatrix} \log(e_{\mathbf{A}})^\lor, & e_{\mathbf{b}}, & e_{\bff} \end{pmatrix}.
\end{equation}

The linearized error-state system of an EqF plays a crucial role in estimator design, as it governs uncertainty propagation and the computation of state corrections. Rather than deriving it separately through the input action and kinematic lift for each specific EqF \cite{vangoorEquivariantFilterEqF2023}, the next subsection shows that it can be efficiently derived by leveraging the transformation relationship between different EqFs.

\subsection{Transformation between Error-States}
This subsection elaborates on the transformation relationships between different EqFs. We first derive the transformation matrix between the error-states of two arbitrary EqFs and then use it to derive the linearized error-state system of one EqF from that of another.

\begin{figure}[!th]
    \centering
    \resizebox{0.43\textwidth}{!}{\input{tikz/trinity.tex}}
    \caption{Error-state relationship between two arbitrary EqFs. The arrows can be reversed to represent the inverse relationship.}
    \label{fig:error_state_relationship}
\end{figure}

To bridge distinct EqF formulations, we define the \textit{global-local map} $\varphi := \vartheta \circ \phi_{\hat{X}^{-1}}: \mathcal{U}(\hat{\xi}) \to \mathbb{R}^N$, which maps a state in a neighborhood of the current estimate on the manifold to its local error coordinate in the vector space.
This mapping not only characterizes the relationship between the true state and the error-state within a single EqF, but also serves as a bridge across different EqFs, as illustrated in Fig. \ref{fig:error_state_relationship}. Specifically, the interaction between these global-local maps induces a linear transformation between the corresponding error-states, as formalized in the following theorem.
\begin{theorem}
        \label{lemma:0}%
        Given two arbitrary EqFs of the same system associated with $\varphi$ and $\varphi^*$, there exists a \revised{first-order linear transformation relationship} between their error-states:%
        \begin{equation}
                \bfvarepsilon^* = \bfT \bfvarepsilon,
                \label{equ:relationship}
        \end{equation}
        where $\bfT$ is a nonsingular matrix given by%
        \begin{equation}
            \bfT =  \rfD_{\xi|\hat{\xi}} \varphi^* (\xi) \cdot  \rfD_{\bfvarepsilon|\bf0} \varphi^{-1}(\bfvarepsilon).%
        \label{equ:T_general}%
        \end{equation}%
\end{theorem}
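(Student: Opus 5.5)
The plan is to write the second error-state as a composition of the two global-local maps and then linearize that composition at the current estimate. Both EqFs describe the same true state $\xi$ near the same estimate $\hat\xi$. Hence
\[
\bfvarepsilon = \varphi(\xi)
\quad\text{and}\quad
\bfvarepsilon^* = \varphi^*(\xi),
\qquad\text{so}\qquad
\bfvarepsilon^* = \varphi^*\circ\varphi^{-1}(\bfvarepsilon).
\]
Everything then reduces to three facts: $\varphi$ is locally invertible, the composite map sends $\mathbf{0}$ to $\mathbf{0}$, and its differential at $\mathbf{0}$ is nonsingular.

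\textbf{Step 1: local diffeomorphism.} I would first show that $\varphi=\vartheta\circ\phi_{\hat X^{-1}}$ is a local diffeomorphism from $\mathcal{U}(\hat\xi)$ onto a neighborhood of $\mathbf{0}\in\mathbb{R}^N$, where $N=\dim\mathcal{M}$.
\begin{itemize}
\item The partial map $\phi_{\hat X^{-1}}$ is a diffeomorphism of $\mathcal{M}$, with inverse $\phi_{\hat X}$. This follows from the right-action axioms: $\phi_{\hat X}\circ\phi_{\hat X^{-1}}=\phi_{\hat X^{-1}\hat X}=\mathrm{id}$, and symmetrically in the other order.
\item It maps $\hat\xi=\phi(\hat X,\xi^\circ)$ to $\phi(\hat X^{-1},\phi(\hat X,\xi^\circ))=\phi(\hat X\hat X^{-1},\xi^\circ)=\xi^\circ$.
\item The chart $\vartheta$ is a diffeomorphism of $\mathcal{U}(\xi^\circ)$ onto its image, with $\vartheta(\xi^\circ)=\mathbf{0}$.
\end{itemize}
Therefore $\varphi(\hat\xi)=\mathbf{0}$, the inverse $\varphi^{-1}=\phi_{\hat X}\circ\vartheta^{-1}$ is smooth near $\mathbf{0}$, and $\varphi^{-1}(\mathbf{0})=\hat\xi$. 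The same argument applies to $\varphi^*$ with its own group, action, origin and chart. Since both filters estimate the same $\hat\xi$, we get $\varphi^*(\hat\xi)=\mathbf{0}$ as well.

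\textbf{Step 2: first-order expansion.} The map $\Psi:=\varphi^*\circ\varphi^{-1}$ is smooth on a neighborhood of $\mathbf{0}$ and satisfies $\Psi(\mathbf{0})=\mathbf{0}$. A Taylor expansion gives
\[
\bfvarepsilon^*=\Psi(\bfvarepsilon)=\rfD_{\bfvarepsilon|\mathbf 0}\Psi\,[\bfvarepsilon]+O(\|\bfvarepsilon\|^2).
\]
By the chain rule, evaluated at $\varphi^{-1}(\mathbf 0)=\hat\xi$, the linear part is
\[
\rfD_{\bfvarepsilon|\mathbf 0}\Psi=\rfD_{\xi|\hat\xi}\varphi^*(\xi)\cdot\rfD_{\bfvarepsilon|\mathbf0}\varphi^{-1}(\bfvarepsilon)=\bfT .
\]
This is exactly \eqref{equ:T_general}, and it identifies $\bfT$ as the matrix of the first-order linear relationship \eqref{equ:relationship}.

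\textbf{Step 3: nonsingularity.} Both factors of $\bfT$ are invertible linear maps between $N$-dimensional spaces, because each is the differential of a local diffeomorphism:
\[
\rfD_{\bfvarepsilon|\mathbf0}\,\varphi^{-1}:\ \mathbb{R}^N\to\mathrm{T}_{\hat\xi}\mathcal{M},
\qquad
\rfD_{\xi|\hat\xi}\,\varphi^*:\ \mathrm{T}_{\hat\xi}\mathcal{M}\to\mathbb{R}^N .
\]
Their composition is therefore a nonsingular $N\times N$ matrix. An explicit inverse is $\bfT^{-1}=\rfD\varphi\cdot\rfD(\varphi^*)^{-1}$, which one checks by differentiating the identities $\varphi\circ\varphi^{-1}=\mathrm{id}$ and $\varphi^*\circ(\varphi^*)^{-1}=\mathrm{id}$.

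\textbf{Main obstacle.} The delicate point is Step 1, namely ensuring that the two EqFs are compatible. The two filters may use different groups, origins and charts, so I must justify that the estimates coincide ($\hat\xi^*=\hat\xi$) and that the true state lies in the intersection of the two domains $\mathcal{U}(\hat\xi)$. The dimension count matters here: each chart $\vartheta$ must be a genuine chart of $\mathcal{M}$, so that $N=N^*=\dim\mathcal{M}$ and $\bfT$ is square. I would state this as the standing assumption that both error-states are minimal local coordinates of the same state manifold. Once this is in place, the remaining steps are routine chain-rule arguments.
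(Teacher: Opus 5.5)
Your proposal is correct and follows the paper's proof essentially step for step. You write $\bfvarepsilon^* = \varphi^*(\varphi^{-1}(\bfvarepsilon))$, note that $\bfvarepsilon=\mathbf{0}$ maps to $\bfvarepsilon^*=\mathbf{0}$, linearize with the chain rule to get $\bfT$, and argue nonsingularity because both factors are differentials of (local) diffeomorphisms; your extra care---checking $\varphi(\hat\xi)=\mathbf{0}$ from the right-action axioms, and stating the implicit assumptions that both filters share $\hat\xi$ and use minimal charts of equal dimension so that $\bfT$ is square---fills in points the paper leaves unstated without changing the argument.
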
%
\begin{proof}
Since both EqFs satisfy 
\begin{equation}
         \varphi^{-1}(\bfvarepsilon) = \xi  = {\varphi^{*}}^{-1}(\bfvarepsilon^*),
\end{equation}
we have the following relationship between the two error-states:
\begin{equation}
        \bfvarepsilon^* = \varphi^*\left( \varphi^{-1}(\bfvarepsilon) \right). 
        \label{equ:relationship_proof}
\end{equation}
Note that when $\bfvarepsilon = \bf0$, it follows that $\xi = \hat{\xi}$, and consequently $\bfvarepsilon^* ={\varphi^{*}}(\hat{\xi})= \bf0$. 
By linearizing \eqref{equ:relationship_proof} at $\bfvarepsilon = \bf0$ and neglecting higher-order terms, we obtain the transformation relationship \eqref{equ:relationship}.

To show that $\bfT$ is nonsingular, note that the global-local map $\varphi$ (and similarly $\varphi^*$) is a diffeomorphism, as it is the composition of a coordinate chart $\vartheta$ and a smooth group action $\phi_{\hat{X}^{-1}}$, both of which are diffeomorphisms. By the inverse function theorem and the chain rule, the Jacobians $\rfD_{\xi|\hat{\xi}} \varphi^*(\xi)$ and $\rfD_{\bfvarepsilon|\bf0} \varphi^{-1} (\bfvarepsilon)$ are linear isomorphisms at their respective evaluation points. Consequently, $\bfT$ is the composition of two linear isomorphisms, which is itself a linear isomorphism, thereby ensuring that $\bfT$ is nonsingular.
\end{proof}
\begin{remark}
The transformation matrix inherently depends on the estimated system state $\hat{\xi}$, given that the maps $\phi_{\xi^{\circ}}: \bfG \to \mathcal{M}$ and $\phi_{\xi^{\circ}}^*: \bfG^* \to \mathcal{M}$ are bijective. By definition, the global-local maps $\varphi$ and $\varphi^*$ are parameterized by the estimated group elements $\hat{X} \in \bfG$ and $\hat{X}^* \in \bfG^*$, respectively. As a result, the transformation matrix $\mathbf{T}$ is influenced by the estimated elements. In the context of VINS, $\phi_{\xi^{\circ}}$ and $\phi_{\xi^{\circ}}^*$ are usually bijective. Consequently, $\mathbf{T}$ can be expressed as a function of the estimated system state, i.e., $\mathbf{T} = \mathbf{T}(\hat{\xi})$, which guarantees a well-defined transformation.
\label{remark:1}
\end{remark}

\begin{remark}
    The transformations among various EqF formulations exhibit a consistent algebraic structure characterized by the following properties:
    \begin{itemize}
        \item \textbf{Transitivity}: 
        The transformation matrices satisfy a chain rule (Fig. \ref{fig:transitivity}). For any three EqFs $\varphi$, $\varphi'$, and $\varphi^*$, the maps obey $\mathbf{T}^{\varphi^*}_{\varphi'} = \mathbf{T}^{\varphi^*}_{\varphi} \mathbf{T}^{\varphi}_{\varphi'}$.
        This allows complex transformations to be decomposed into simpler intermediate steps, facilitating the analysis of complex global-local maps.

        \item \textbf{Path independence}: 
        A direct consequence of transitivity is that the final transformation between any two EqFs is independent of the intermediate path.
        In practice, one can therefore use the simplest, most familiar EqF (e.g., ESKF) as a universal baseline for deriving other variants, without compromising the correctness of the results.
    \end{itemize}
\end{remark}

Based on Theorem \ref{lemma:0}, we can establish the transformation relationship of the linearized error-state system  between EqFs, which is summarized in the following corollary.
\begin{corollary}
    Consider two arbitrary EqFs of the same system associated with $\varphi$ and $\varphi^*$. If the linearized error-state system of the EqF for $\varphi$ is given by
    \begin{subequations}
    \begin{align}
    \dot{\bfvarepsilon} &= \bfF \bfvarepsilon + \bfG \bfn, \\
    \tilde{\bfz} &= \bfH \bfvarepsilon + \bfepsilon, 
    \end{align}
    \label{equ:eqf_general}%
    \end{subequations}
    then the linearized error-state system for $\varphi^*$ satisfies
    \begin{subequations}
    \begin{align}
        \dot{\bfvarepsilon}^* &= \bfF^* \bfvarepsilon^* + \bfG^* \bfn, \\
        \tilde{\bfz} &= \bfH^* \bfvarepsilon^* + \bfepsilon,
    \end{align}
    \label{equ:eqf_transformed}%
    \end{subequations}
    where
    \begin{equation}
        \bfF^* = \dot{\bfT} \bfT^{-1} + \bfT \bfF \bfT^{-1},  \bfG^* = \bfT \bfG,  \bfH^* = \bfH \bfT^{-1},
    \end{equation}
    and $\bfT$ is the transformation matrix given by \eqref{equ:T_general}.
    \label{corollary:1}
\end{corollary}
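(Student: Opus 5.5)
The plan is to substitute the first-order relation of Theorem \ref{lemma:0}, $\bfvarepsilon^* = \bfT\bfvarepsilon$ with $\bfT = \bfT(\hat{\xi})$ (Remark \ref{remark:1}), into the linearized error-state system \eqref{equ:eqf_general}. The transformed dynamics and measurement model are then read off directly, keeping only first-order terms in $\bfvarepsilon$ and $\bfn$, the same approximation level at which \eqref{equ:eqf_general} is stated.

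\textbf{Measurement equation.} Since $\bfT$ is nonsingular by Theorem \ref{lemma:0}, we can invert the relation to get $\bfvarepsilon = \bfT^{-1}\bfvarepsilon^*$. Inserting this into $\tilde{\bfz} = \bfH\bfvarepsilon + \bfepsilon$ gives
\begin{equation*}
\tilde{\bfz} = \bfH\bfT^{-1}\bfvarepsilon^* + \bfepsilon,
\end{equation*}
so $\bfH^* = \bfH\bfT^{-1}$. The same residual $\tilde{\bfz}$ appears in both systems because both EqFs predict the measurement from the same estimate $\hat{\xi}$. The measurement noise $\bfepsilon$ is unchanged.

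\textbf{Dynamics.} The estimate $\hat{\xi}$ evolves in time, so $\bfT(\hat{\xi}(t))$ is time-varying. Differentiating $\bfvarepsilon^* = \bfT\bfvarepsilon$ with the product rule gives
\begin{equation*}
\dot{\bfvarepsilon}^* = \dot{\bfT}\bfvarepsilon + \bfT\dot{\bfvarepsilon} = \dot{\bfT}\bfvarepsilon + \bfT\bfF\bfvarepsilon + \bfT\bfG\bfn.
\end{equation*}
Substituting $\bfvarepsilon = \bfT^{-1}\bfvarepsilon^*$ then yields
\begin{equation*}
\dot{\bfvarepsilon}^* = (\dot{\bfT}\bfT^{-1} + \bfT\bfF\bfT^{-1})\bfvarepsilon^* + \bfT\bfG\bfn,
\end{equation*}
which gives the stated expressions for $\bfF^*$ and $\bfG^*$.

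\textbf{Main obstacle.} The delicate step is justifying that differentiating the first-order relation is legitimate. The exact relation is $\bfvarepsilon^* = \varphi^*(\varphi^{-1}(\bfvarepsilon))$, and both global-local maps depend on $\hat{X}(t)$. I would write this exact relation as $\bfvarepsilon^* = \bfT(\hat{\xi})\bfvarepsilon + O(\|\bfvarepsilon\|^2)$, with a remainder that is smooth in $\hat{\xi}$. Its time derivative is then $\dot{\bfT}\bfvarepsilon + \bfT\dot{\bfvarepsilon}$ plus terms of order $\|\bfvarepsilon\|^2$ and $\|\bfvarepsilon\|\,\|\bfn\|$, and these are dropped, consistent with the linearization. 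In this expansion, $\dot{\bfT}$ is computed along the estimator trajectory as $\rfD\bfT(\hat{\xi})[\dot{\hat{\xi}}]$. The key point is that $\bfT$ depends only on the estimate, not on the true state, so no extra $\bfvarepsilon$-dependent terms arise at first order.
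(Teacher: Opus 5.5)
Your proposal is correct and matches the paper's proof: differentiate $\bfvarepsilon^* = \bfT\bfvarepsilon$ with the product rule, substitute the $\varphi$-dynamics and $\bfvarepsilon = \bfT^{-1}\bfvarepsilon^*$ to read off $\bfF^*$ and $\bfG^*$, and substitute $\bfvarepsilon = \bfT^{-1}\bfvarepsilon^*$ into the measurement equation to obtain $\bfH^*$. Your extra argument is a sound addition that the paper omits: differentiating the exact relation discards only terms of order $\|\bfvarepsilon\|^2$ and $\|\bfvarepsilon\|\,\|\bfn\|$, because $\bfT$ depends only on the estimate.
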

\begin{proof}
   See the supplementary material.
\end{proof}

\begin{figure}[t]
    \centering
    {\resizebox{0.4\textwidth}{!}{\begin{tikzpicture}[
    node distance=1.cm, 
    eqf_box/.style={
        rectangle, 
        draw=myblue, 
        fill=white, 
        very thick, 
        rounded corners=5pt,
        inner sep=8pt,          
        align=left, 
        minimum width=2.7cm,
        minimum height=1.6cm, 
        font=\footnotesize,     
        drop shadow={opacity=0.1},
    },
    node_title/.style={
        font=\small\sffamily\bfseries,
        color=darkgrey,
        inner sep=5pt
    },
    trans_arrow/.style={
        <-, >={Stealth[inset=0pt, length=5pt]}, 
        line width=1.1pt, 
        color=darkgrey
    },
    label_style/.style={
        font=\scriptsize\sffamily, 
        black!90
    }
]
\usetikzlibrary{arrows.meta, positioning, calc, backgrounds, fit, shadows}
\definecolor{myblue}{RGB}{41, 128, 185}
\definecolor{mylightblue}{RGB}{235, 245, 251}
\definecolor{myred}{RGB}{192, 57, 43}
\definecolor{darkgrey}{RGB}{52, 73, 94}
    \node[eqf_box] (phi_star) {
        Error-State: $\boldsymbol{\varepsilon}^*$ \\
        Jacobians: $\mathbf{F}^*, \mathbf{G}^*, \mathbf{H}^*$ \\
        Unobs. Sub: $\mathbf{N}^*$
    };
    \node[node_title, below=2pt of phi_star] (title_star) {EqF with $\varphi^*$};

    \node[eqf_box, right=of phi_star] (phi_mid) {
        Error-State: $\boldsymbol{\varepsilon}$ \\
        Jacobians: $\mathbf{F}, \mathbf{G}, \mathbf{H}$ \\
        Unobs. Sub: $\mathbf{N}$
    };
    \node[node_title, below=2pt of phi_mid] (title_mid) {EqF with $\varphi$};

    \node[eqf_box, right=of phi_mid] (phi_prime) {
        Error-State: $\boldsymbol{\varepsilon}'$ \\
        Jacobians: $\mathbf{F}', \mathbf{G}', \mathbf{H}'$ \\
        Unobs. Sub: $\mathbf{N}'$
    };
    \node[node_title, below=2pt of phi_prime] (title_prime) {EqF with $\varphi'$};

    \draw[trans_arrow] (phi_star.east) -- (phi_mid.west) 
        node[midway, above=-1pt, label_style] {$\mathbf{T}^{\varphi^*}_{\varphi} $};

    \draw[trans_arrow] (phi_mid.east) -- (phi_prime.west) 
        node[midway, above=-1pt, label_style] {$\mathbf{T}^{\varphi}_{\varphi'} $};

    \path (phi_star.north) edge[
        trans_arrow, 
        dashed, 
        myred, 
        bend right=-15, 
        draw opacity=0.8
    ] node[midway, above, yshift=0pt, align=center, font=\scriptsize] {
        \textcolor{myred}{\textbf{Transitivity of Transformation}} \\
        $\mathbf{T}^{\varphi^*}_{\varphi'} = \mathbf{T}^{\varphi^*}_{\varphi} \mathbf{T}^{\varphi}_{\varphi'} $
    } (phi_prime.north);
\end{tikzpicture}}%
    }
    \caption{The transitivity property of transformation among different EqFs.}
    \label{fig:transitivity}
\end{figure}

\subsection{Example: Transformation from ESKF to SD-EqF}
As a special but important case, ESKF can be viewed as an EqF based on the special orthogonal group $\mathbf{SO}(3)\times \mathbb{R}^{12+3m}$ \cite{fornasier2025equivariant}. In ESKF, the global-local map is simple and defined as the general minus $\ominus$ on the group, i.e.,
\begin{equation}
        \begin{split}
                \varphi(\xi) &= \xi \ominus \hat{\xi} \ 
               \label{equ:oplus}\\
               &= \left(
                        \begin{array}{c}
                                \text{Log} (\hat{\bfR}^{-1} \bfR),
                                \bfv - \hat{\bfv},
                                \bfp - \hat{\bfp}, \\
                                \bfb_\omega - \hat{\bfb}_\omega ,
                                \bfb_a - \hat{\bfb}_a ,\\
                                \bff_1 - \hat{\bff}_1 ,
                                \cdots, 
                                \bff_m - \hat{\bff}_m 
                        \end{array}
                \right).
        \end{split}
\end{equation}
Substituting the global-local maps of ESKF and SD-EqF into Theorem \ref{lemma:0}, we can get the explicit expression of the transformation matrix from ESKF to SD-EqF as follows:
\begin{equation}
    \bfT =
     \setlength{\arraycolsep}{4pt}
    \left[
    \begin{NiceArray}{ccccc|cccc}[
        margin, 
        nullify-dots, 
    ]
        \hat{\bfR} & \bfZo & \bfZo & \bfZo & \bfZo & \Block{5-4}{\hspace{3pt}\scaleF{\bfZo}_{15\times3m}} \\
        {[\hat{\bfv}]_\times} \hat{\bfR} & \bfI_3 & \bfZo & \bfZo & \bfZo & &&& \\
        {[\hat{\bfp}]_\times} \hat{\bfR} & \bfZo & \bfI_3 & \bfZo & \bfZo & && &\\
        \bfZo & \bfZo & \bfZo & \hat{\bfR} & \bfZo & &&&\\
        \bfZo & \bfZo & \bfZo & [\hat{\bfv}]_\times \hat{\bfR} & \hat{\bfR} & &&&\\
        \hline
        \Block{2-5} {\scaleF{\bfZo}_{3m\times15}} &&& & & \Block{2-4}{\scaleF{\bfI}_{3m}} \\
        & &&& & &&&\\ 
    \end{NiceArray}\right].
    \label{equ:Transformation_eskf_etg}
\end{equation}

The linearized error-state system of the ESKF is simpler to derive and well-studied. Substituting this system and the transformation into \eqref{equ:eqf_transformed} yields the linearized error-state system of the SD-EqF (see supplementary material), which matches the result from the conventional EqF approach \cite{vangoorEquivariantFilterEqF2023}.
\revised{
In the supplementary material, we further derive the transformation relationships among other EqFs, including I-EKF and the recently proposed tangent-group \cite{fornasier2025equivariant} and scaled-orthogonal-transform-group \cite{vangoor2023eqvio} formulations.
 }


\section{Consistent EqF Design}
\label{sec:consistent_eqf_design}
This section leverages the identified transformation relationship to address inconsistency in EqF-based estimators. We first characterize how the unobservable subspaces of different EqFs evolve under transformation. We then use this relationship to develop a general design methodology that enforces a state-independent unobservable subspace through an appropriate coordinate transformation, thereby achieving consistency without requiring ad hoc symmetry analysis.

\subsection{Inconsistency Issue}
Observability refers to the ability to recover the initial states of a system from all available measurements. The set of states that cannot be uniquely determined constitutes the unobservable subspace. For the time-varying linear system \eqref{equ:eqf_general}, the local observability matrix $\mathcal{O}$ can be used for analysis and is defined as
\begin{equation}
        \mathcal{O}= \begin{bmatrix}
                \mathcal{O}_0^\top & \mathcal{O}^\top_1 & \cdots & \mathcal{O}_{N-1}^\top\\
                \end{bmatrix}^\top,
                \label{equ:observability_matrix}  
\end{equation}
with $\mathcal{O}_0 = \bfH$ and 
$
        \mathcal{O}_{k+1} = \mathcal{O}_{k} \bfF +  \dot {\mathcal{O}}_{k}.
$
The unobservable subspace is spanned by the columns of a basis matrix $\bfN$ with 
\begin{equation}
        \mathcal{O} \bfN = \bf0.
        \label{equ:unobservable_definition}
\end{equation}

By substituting the Jacobians of the SD-EqF into \eqref{equ:observability_matrix}-\eqref{equ:unobservable_definition}, we obtain its basis matrix $\bfN$ as follows:{
\small
\begin{equation}
\setlength{\arraycolsep}{2.9pt}
\begin{array}{rl}
   \mathbf{N} &= 
   \left[
        \begin{array}{c|c}
        \bfN_{\text{pos}} & \bfN_{\text{yaw}} 
        \end{array}
   \right] \\
   &=
   \left[
\begin{array}{ccccccccc}
\bfZo_{3\times 3} & \bfZo_{3\times 3} & \bfI_3 & \bfZo_{3\times 6} & \bfI_3 & \cdots & \bfI_3 \\
\hline
-\bfg^\top & \bfZo_{1\times 3} & \bfZo_{1\times 3} & \bfZo_{1\times 6} & ([{\hat{\bff}_{1}}]_\times \bfg)^\top & \cdots & ([{\hat{\bff}_{m}}]_\times \bfg)^\top
\end{array}
\right]^\top,
\end{array}
\label{equ:unobservable_subspace_eqf}%
\end{equation}}%
where the columns of
$\bfN_{\text{pos}} \in \mathbb{R}^{(15+3m) \times 3}$ correspond to the
unobservable directions of global position, and
$\bfN_{\text{yaw}} \in \mathbb{R}^{(15+3m) \times 1}$, which is state-dependent,
corresponds to the unobservable direction of global yaw.
As discussed in \cite{song2024affine}, if the unobservable directions are state-dependent, the filter tends to acquire spurious information along these unobservable directions, leading to inconsistency.
In the following subsections, we demonstrate that this inconsistency can be resolved by ensuring a state-independent unobservable subspace by transforming the local coordinate charts.

\subsection{Consistent EqF Design}

The unobservable subspaces of different EqFs are related by the transformation matrix, as summarized in the following corollary:
\begin{corollary}
        \label{corollary:NTN}
    Consider two arbitrary EqFs of the same system associated with $\varphi$ and $\varphi^*$. Their local observability matrices $\mathcal{O}$ and $\mathcal{O}^*$ satisfy:
    \begin{equation}
        \mathcal{O}^* = \mathcal{O} \mathbf{T}^{-1},
        \label{equ:MTM}
    \end{equation} where $\mathbf{T}$ is the transformation matrix given by \eqref{equ:T_general}.
       Moreover, if the system is unobservable, let $\mathbf{N}$ be a basis matrix for the unobservable subspace of the EqF associated with $\varphi$. Then, there exists a basis $\mathbf{N}^*$ for the unobservable subspace of the EqF associated with $\varphi^*$ such that:
    \begin{equation}
        \mathbf{N}^* = \mathbf{T} \mathbf{N}.
        \label{equ:NTN}
    \end{equation}
\end{corollary}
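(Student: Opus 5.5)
We prove \eqref{equ:MTM} block by block, showing by induction on $k$ that $\mathcal{O}_k^* = \mathcal{O}_k \bfT^{-1}$ for every $k$, where $\mathcal{O}_k^*$ is built from the transformed Jacobians of Corollary \ref{corollary:1}. Stacking the blocks then gives $\mathcal{O}^* = \mathcal{O}\bfT^{-1}$. The only inputs are the relations $\bfF^* = \dot{\bfT}\bfT^{-1} + \bfT\bfF\bfT^{-1}$ and $\bfH^* = \bfH\bfT^{-1}$, together with the nonsingularity of $\bfT$ from Theorem \ref{lemma:0}. Since $\bfT=\bfT(\hat\xi)$ varies with time, we also need $\bfT$ to be differentiable along the estimate trajectory, which Remark \ref{remark:1} and the smoothness of the global-local maps provide.

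\textbf{Base case.} We have $\mathcal{O}_0^* = \bfH^* = \bfH\bfT^{-1} = \mathcal{O}_0\bfT^{-1}$.

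\textbf{Inductive step.} Assume $\mathcal{O}_k^* = \mathcal{O}_k\bfT^{-1}$. Then
\begin{equation*}
\mathcal{O}_{k+1}^* = \mathcal{O}_k^*\bfF^* + \dot{\mathcal{O}}_k^* = \mathcal{O}_k\bfT^{-1}\dot{\bfT}\bfT^{-1} + \mathcal{O}_k\bfF\bfT^{-1} + \dot{\mathcal{O}}_k\bfT^{-1} + \mathcal{O}_k\tfrac{d}{dt}(\bfT^{-1}).
\end{equation*}
Differentiating $\bfT\bfT^{-1}=\bfI$ gives $\tfrac{d}{dt}(\bfT^{-1}) = -\bfT^{-1}\dot{\bfT}\bfT^{-1}$. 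This term cancels the first one exactly, leaving $(\mathcal{O}_k\bfF + \dot{\mathcal{O}}_k)\bfT^{-1} = \mathcal{O}_{k+1}\bfT^{-1}$. This cancellation is the crux of the argument. It is precisely why the extra term $\dot{\bfT}\bfT^{-1}$ appears in $\bfF^*$ and why the observability recursion contains the derivative term $\dot{\mathcal{O}}_k$. Keeping the signs and the order of the noncommuting factors straight is the main thing to check.

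\textbf{Unobservable subspace.} Set $\bfN^* := \bfT\bfN$. Then $\mathcal{O}^*\bfN^* = \mathcal{O}\bfT^{-1}\bfT\bfN = \mathcal{O}\bfN = \bfZo$, so the columns of $\bfN^*$ lie in $\ker\mathcal{O}^*$. Conversely, if $\mathcal{O}^*\mathbf{x}=\bfZo$, then $\mathcal{O}(\bfT^{-1}\mathbf{x})=\bfZo$, so $\bfT^{-1}\mathbf{x}=\bfN\mathbf{c}$ for some $\mathbf{c}$, and hence $\mathbf{x}=\bfN^*\mathbf{c}$. Thus $\ker\mathcal{O}^* = \bfT\ker\mathcal{O}$. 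Because $\bfT$ is nonsingular, $\bfT\bfN$ has full column rank whenever $\bfN$ does, so $\bfN^*$ is a genuine basis matrix of the same dimension. This establishes \eqref{equ:NTN}.
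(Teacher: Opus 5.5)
Your proposal is correct and follows the paper's proof essentially step for step. Like the paper, you induct on the blocks $\mathcal{O}_k$ using $\bfF^* = \dot{\bfT}\bfT^{-1} + \bfT\bfF\bfT^{-1}$ and $\bfH^* = \bfH\bfT^{-1}$, you rely on the same cancellation $\mathcal{O}_k\bfT^{-1}\dot{\bfT}\bfT^{-1} + \mathcal{O}_k\frac{d}{dt}(\bfT^{-1}) = \bfZo$, and you then set $\bfN^* = \bfT\bfN$. The only minor difference is in the last step: you show $\ker\mathcal{O}^* = \bfT\ker\mathcal{O}$ by double inclusion, whereas the paper combines linear independence of the columns of $\bfT\bfN$ with $\mathrm{rank}(\mathcal{O}^*) = \mathrm{rank}(\mathcal{O})$, and these are equivalent.
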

\begin{proof}
   See the supplementary material.
\end{proof}
This result suggests that estimation consistency can be achieved by selecting a global-local map whose induced transformation renders the unobservable subspace state-independent. By definition, a global-local map is composed of a group action and a local coordinate chart. Unlike conventional methods, which achieve consistency by selecting specific group actions, the following theorem provides a more general route based on transforming the local coordinate chart.
\begin{theorem}
   For any EqF defined by a global-local map $\varphi = \vartheta \circ \phi_{\hat{X}^{-1}}$ with unobservable subspace $\mathbf{N}$, 
        there exists a nonsingular matrix $\mathbf{T}$ such that 
    the EqF constructed via the transformed global-local map $\varphi^* = (\mathbf{T} \vartheta) \circ \phi_{\hat{X}^{-1}}$ 
    possesses a {state-independent} unobservable subspace.
    \label{theorem:consistent_eqf}  
\end{theorem}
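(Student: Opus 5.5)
Since $\mathbf{T}$ is a constant-shaped nonsingular matrix applied after the chart, I would first verify that the transformed map $\varphi^* = (\mathbf{T}\vartheta)\circ\phi_{\hat{X}^{-1}}$ is again a valid global-local map. Here $\mathbf{T}$ is allowed to depend on the estimate $\hat{\xi}$ (equivalently on $\hat X$, by Remark~\ref{remark:1}). The composite $\mathbf{T}\vartheta$ is a chart on $\mathcal{U}(\xi^\circ)$ with $\mathbf{T}\vartheta(\xi^\circ)=\mathbf{0}$, because a nonsingular linear map of $\mathbb{R}^N$ is a diffeomorphism. Hence $\varphi^*$ defines an EqF of the same system.

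I would then compute the induced transformation via Theorem~\ref{lemma:0}. Because $\varphi^* = \mathbf{T}\varphi$ exactly, the matrix in \eqref{equ:T_general} is
\[
\rfD_{\xi|\hat{\xi}}\varphi^*(\xi)\cdot\rfD_{\bfvarepsilon|\mathbf{0}}\varphi^{-1}(\bfvarepsilon)=\mathbf{T}\,\rfD\varphi\,(\rfD\varphi)^{-1}=\mathbf{T}.
\]
Corollary~\ref{corollary:NTN} then gives a basis $\mathbf{N}^* = \mathbf{T}\mathbf{N}$ for the unobservable subspace of the new EqF. This reduces the theorem to a purely linear-algebraic statement: for every state, find a nonsingular $\mathbf{T}(\hat\xi)$ such that $\mathbf{T}(\hat\xi)\mathbf{N}(\hat\xi)$ is a fixed matrix $\mathbf{N}_0$ independent of $\hat\xi$.

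For the construction, I would let $r$ be the dimension of the unobservable subspace and assume $\mathbf{N}(\hat\xi)\in\mathbb{R}^{N\times r}$ has full column rank $r$ for every $\hat\xi$ (it is a basis). Choose the constant target $\mathbf{N}_0 = [\mathbf{I}_r\;\mathbf{0}]^\top$, or any fixed full-rank $N\times r$ matrix. Complete $\mathbf{N}(\hat\xi)$ to a nonsingular square matrix $\mathbf{M}(\hat\xi) = [\mathbf{N}(\hat\xi)\;\mathbf{N}^\perp(\hat\xi)]$, for instance by taking $\mathbf{N}^\perp$ as a basis of the orthogonal complement. Then set $\mathbf{T}=\mathbf{M}_0\mathbf{M}(\hat\xi)^{-1}$ with $\mathbf{M}_0=[\mathbf{N}_0\;\mathbf{N}_0^\perp]$. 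This gives $\mathbf{T}\mathbf{N}=\mathbf{N}_0$ and $\mathbf{T}$ nonsingular. Any basis transformation of $\mathbf{N}^*$ spans the same subspace, so the unobservable subspace $\operatorname{span}(\mathbf{N}_0)$ is state-independent.

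The main obstacle is making $\mathbf{T}$ a legitimate, well-behaved function of the estimate. Corollary~\ref{corollary:1} involves $\dot{\mathbf{T}}$, so $\mathbf{T}(\hat\xi)$ should be at least differentiable along trajectories, and the completion $\mathbf{N}^\perp$ must be chosen smoothly. Global smooth completion can fail topologically, so I would argue locally in a neighborhood of the current estimate, which is all the EqF linearization requires. Even better, I would exploit structure. In the VINS case of \eqref{equ:unobservable_subspace_eqf}, $\mathbf{N}$ is already mostly constant and only $\mathbf{N}_{\text{yaw}}$ depends on the state. I would then use an explicit elementary (shear-type) $\mathbf{T}=\mathbf{I}+\mathbf{E}(\hat\xi)$ that cancels the state-dependent entries, for example $([\hat{\bff}_i]_\times\bfg)^\top$, using the rotation component. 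Such a $\mathbf{T}$ is smooth, nonsingular since it is block unitriangular, and yields a constant $\mathbf{N}^*$. I would present the general rank-completion argument as the proof and mention this explicit construction as the practical instance.
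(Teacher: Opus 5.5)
Your proposal is correct and follows the same route as the paper: you identify $\mathbf{T}$ as the induced transformation via Theorem~\ref{lemma:0}, use Corollary~\ref{corollary:NTN} to get the basis $\mathbf{T}\mathbf{N}$, and then choose $\mathbf{T}$ by a change of basis that makes $\mathbf{T}\mathbf{N}$ constant. The paper only cites linear algebra for that last step, whereas you give the explicit completion $\mathbf{T}=\mathbf{M}_0\mathbf{M}(\hat{\xi})^{-1}$ and correctly note that $\mathbf{T}$ must be differentiable along trajectories (because of $\dot{\mathbf{T}}$ in Corollary~\ref{corollary:1}), which in general holds only locally; your shear-type choice for VINS coincides with the paper's T-EqF transformation.
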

\begin{proof}
        First, according to Theorem \ref{lemma:0}, $\bfT$ is the transformation matrix between these two EqFs.
         Subsequently, by Corollary \ref{corollary:NTN}, the unobservable subspace of the new EqF is spanned by the columns of $\bfT\bfN$. Finally, the existence of a nonsingular matrix $\mathbf{T}$ that renders $\mathbf{T}\mathbf{N}$ state-independent is guaranteed by the properties of basis transformation in linear algebra \cite{tian2025unobservable}, which completes the proof.
\end{proof}
Theorem \ref{theorem:consistent_eqf} shifts the focus of consistent EqF design from identifying complex group symmetries to the more direct task of coordinate transformation. It therefore provides a general methodology for achieving consistency in arbitrary unobservable systems, especially when intuitive symmetry groups are not readily apparent.

\subsection{Example: Design Consistent EqF from SD-EqF}
\label{sec:design_consistent_eqf}
Here, we present an example of designing a consistent EqF for VINS based on Theorem \ref{theorem:consistent_eqf}.
In this example, SD-EqF serves as the auxiliary EqF with global-local map $\varphi$, and the newly designed EqF with global-local map $\varphi^*$  is named T-EqF\footnote{Outside theorems and corollaries, we abuse the superscript $*$ to distinguish T-EqF variables from SD-EqF ones.}.
The nonsingular matrix is chosen as 
    
\begin{equation}
    \bfT =
     \setlength{\arraycolsep}{4pt}
    \left[
    \begin{NiceArray}{cc|c}[
        margin, 
        nullify-dots, 
    ]
        \Block{1-2}{\scaleF{\bfI}_{15}}& &\scaleF\bfZo_{15\times 3m} \\
        \hline
        {[\hat{\bff}_{1}]_\times} & \bfZo_{3\times 12} &\Block{3-1} {\scaleF\bfI_{3m}}\\
                \vdots & \vdots& \\
                {[\hat{\bff}_{m}]_\times} & \bfZo_{3\times 12} & \\
    \end{NiceArray}\right].
    \label{equ:transformation_consistent_eqf}
\end{equation}
Substituting \eqref{equ:transformation_consistent_eqf} and \eqref{equ:unobservable_subspace_eqf} into \eqref{equ:NTN}, one can verify that the unobservable subspace of T-EqF is state-independent, thereby ensuring its consistency.
It is worth noting that we can utilize arbitrary EqF as the auxiliary EqF. If the auxiliary EqF is fixed as ESKF, T-EqF is reduced to T-ESKF in \cite{tian2025teskf}.

\subsection{\revised{Discussion}}
\revised{
The proposed consistency design is carried out at the level of linearized error-state systems.
Consequently, the resulting T-EqF may coincide, at the linearized-system level, with a native EqF derived from symmetry groups, such as the ISD-EqF discussed in the supplementary material. 
In such cases, the two EqFs have identical Jacobians and therefore share the same observability-related properties under the same linearization.}

\revised{
However, sharing the same first-order linearized system does not imply that the two EqFs are fully equivalent as nonlinear filters. 
Their global-local maps may still differ beyond first order, even when the transformation matrix between their local error coordinates reduces to the identity at the current estimate. 
Consequently, finite corrections may be injected into the state manifold through different inverse maps, and the higher-order terms of the error coordinates may differ. 
Thus, higher-order nonlinear properties of a native EqF, such as its full nonlinear error geometry or reduced estimate-dependence beyond the linearized model, are not guaranteed to be inherited by the T-EqF.}
\revised{A detailed discussion is provided in the supplementary material.}

\section{Efficient EqF Implementation}
\label{sec:efficient_eqf_implementation}
This section uses the identified transformation relationship to resolve the fundamental trade-off between theoretical consistency and computational efficiency in EqFs. While advanced EqFs guarantee consistency or other desirable properties, they often introduce non-block-diagonal Jacobians, which significantly increase the computational cost of covariance propagation relative to conventional filters \cite{yangDecoupledRightInvariant2022}. To mitigate this issue, we propose two implementation strategies that leverage the transformation relationship between EqFs: {Transforming Propagation (TP)} and {Transforming Correction (TC)}. These strategies exploit the block-diagonal structure of an auxiliary EqF (e.g., ESKF) for the heavy covariance computations, thereby preserving the consistency benefits of the target EqF with minimal computational overhead.

\subsection{Computational Bottleneck in Covariance Propagation}

\begin{figure}[t]
    \centering
    \includegraphics[width=0.99\linewidth]{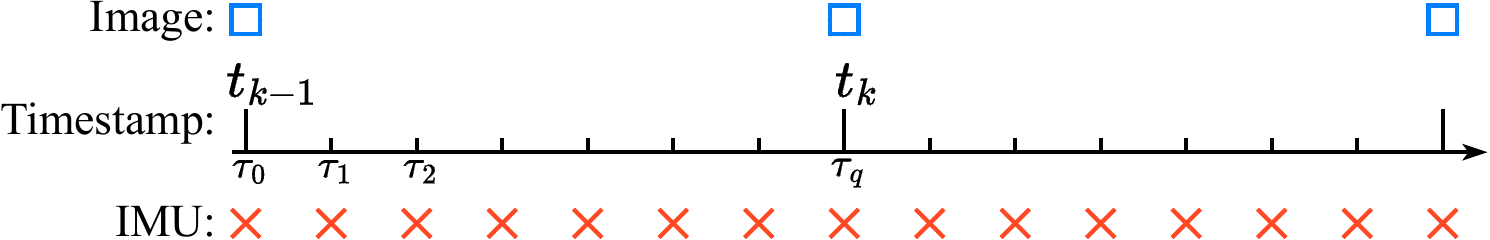}
    \caption{Different sampling frequencies of the IMU and camera.}
    \label{fig:freq}
\end{figure}

\label{sec:computational_bottleneck}
IMUs are typically sampled at a much higher frequency than cameras. Without loss of generality, let the IMU sampling frequency be $q$ times the camera sampling frequency, as shown in Fig.~\ref{fig:freq}.
Let $t_{k-1}$ and $t_{k}$ denote two consecutive camera measurement instants. Covariance propagation between these instants in the EqF framework follows the standard EKF structure: \begin{equation}
\bfP_{k|k-1} = \bfPhi_{k} \bfP_{k-1|k-1} {\bfPhi_{k}}^\top + \bfQ_{k}.
\label{equ:cov_propagation}
\end{equation}
where the state transition matrix $\bfPhi_k := \bfPhi(\tau_{q},\tau_0)$ and the accumulated noise matrix $\bfQ_k := \bfQ(\tau_{q},\tau_0)$ are computed iteratively over $q$ sub-intervals:
\begin{subequations}
\begin{align}
& \bfPhi(\tau_{i+1},\tau_{0}) = \bfPhi(\tau_{i+1},\tau_{i}) \bfPhi(\tau_{i},\tau_{0}), \\
& \begin{aligned}
\bfQ(\tau_{i+1},\tau_0) = & \ \bfPhi(\tau_{i+1},\tau_{i}) \bfQ(\tau_{i},\tau_0) \bfPhi(\tau_{i+1},\tau_{i})^\top \\
& +\bfQ(\tau_{i+1},\tau_i).
\end{aligned}
\end{align}
\label{equ:Q_accumulation_star}%
\end{subequations}
The analytical and explicit expressions for $\bfPhi(\tau_{i+1},\tau_{i})$ and $\bfQ(\tau_{i+1},\tau_i)$ are provided in the supplementary material.
In SD-EqF, $\bfPhi(\tau_{i+1},\tau_{i})$ and $\bfQ(\tau_{i+1},\tau_i)$ are block-diagonal: \begin{equation}
\bfPhi = \begin{bmatrix} \bfPhi_I & \bfZo \\ \bfZo & \bfI \end{bmatrix}, \quad
\bfQ = \begin{bmatrix} \bfQ_I & \bfZo \\ \bfZo & \bfZo \end{bmatrix},
\label{equ:PhiQ_blockdiagonal}
\end{equation} where $\bfPhi_I, \bfQ_I \in \mathbb{R}^{15\times 15}$ relate only to the IMU error-states. As a result, iterative computation of \eqref{equ:Q_accumulation_star} is highly efficient.

However, EqFs with better properties may have more complex Jacobians. For instance, in T-EqF, the transition matrix $\bfPhi^*(\tau_{i+1},\tau_i)$ is no longer block diagonal and the noise matrix $\bfQ^*(\tau_{i+1},\tau_i)$ contains a dense landmark--landmark block:
\begin{equation}
\bfPhi^* = \begin{bmatrix} \bfPhi^*_I & \bfZo \\ \bfPhi^*_{FI} & \bfI \end{bmatrix}, \quad
\bfQ^* = \begin{bmatrix} \bfQ^*_{I} & \bfQ^*_{IF}\\ \bfQ^*_{FI} & \bfQ^*_{FF} \end{bmatrix}.
\label{equ:PhiQ_blockdiagonal2}
\end{equation}
When the state vector contains a large number of landmarks, the computation of \eqref{equ:Q_accumulation_star} becomes a major bottleneck.
\revised{We next present two efficient implementations using T-EqF as an example, both applicable to any target EqF whose dense Jacobians can be related through the proposed transformation framework to a block-diagonal auxiliary EqF.}





\subsection{Implementation 1: Transforming Propagation}
\label{sec:TP}

To bypass the aforementioned bottleneck, our first strategy, TP, exploits the transformation relationship to streamline matrix operations. The theoretical foundation for this acceleration is established by the following corollary:
\begin{corollary}
        Consider two arbitrary EqFs of the same system, associated with the global-local maps $\varphi$ and $\varphi^*$, respectively. The accumulated transition matrices and noise matrices of these two EqFs are related by the transformation matrix:
        \begin{subequations}
                \begin{align}
                        & \bfPhi_k^* = \bfT(\hat{\xi}_{k|k-1}) \bfPhi_{k} \bfT(\hat{\xi}_{k-1|k-1})^{-1}, \\
                        & \bfQ_{k}^* = \bfT(\hat{\xi}_{k|k-1}) \bfQ_{k} \bfT(\hat{\xi}_{k|k-1})^\top,
                \end{align}
        \end{subequations}
        where $\bfT(\hat{\xi}_{k-1|k-1})$ and $\bfT(\hat{\xi}_{k|k-1})$ are evaluated at $\hat{\xi}_{k-1|k-1}$ and $\hat{\xi}_{k|k-1}$, respectively. 
        \label{corollary:PhiQ}
\end{corollary}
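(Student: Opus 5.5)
The plan is to work first in continuous time with the transition matrix and then pass to the discrete accumulation \eqref{equ:Q_accumulation_star}. By Corollary~\ref{corollary:1}, on $[t_{k-1},t_k]$ the two linearized error-state systems satisfy $\bfF^* = \dot{\bfT}\bfT^{-1} + \bfT\bfF\bfT^{-1}$ and $\bfG^* = \bfT\bfG$, where $\bfT(t) = \bfT(\hat{\xi}(t))$ is evaluated along the propagated estimate, which is a well-defined function of the state by Remark~\ref{remark:1}.

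For the transition matrices, define the candidate $\bfPsi(t,t_{k-1}) := \bfT(t)\bfPhi(t,t_{k-1})\bfT(t_{k-1})^{-1}$. Differentiating and using $\dot{\bfPhi} = \bfF\bfPhi$ gives
\[
\dot{\bfPsi} = \dot{\bfT}\bfPhi\bfT(t_{k-1})^{-1} + \bfT\bfF\bfPhi\bfT(t_{k-1})^{-1} = (\dot{\bfT}\bfT^{-1} + \bfT\bfF\bfT^{-1})\bfPsi = \bfF^*\bfPsi .
\]
Together with $\bfPsi(t_{k-1},t_{k-1}) = \bfI$, uniqueness of solutions of linear ODEs yields $\bfPsi = \bfPhi^*$. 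Evaluating at $t_k$, where the estimate is $\hat{\xi}_{k|k-1}$, and at $t_{k-1}$, where it is $\hat{\xi}_{k-1|k-1}$ (propagation starts from the corrected estimate), gives the first identity.

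For the noise matrices, write $\bfQ_k = \int_{t_{k-1}}^{t_k} \bfPhi(t_k,s)\bfG(s)\bfQ_c\bfG(s)^\top\bfPhi(t_k,s)^\top \,ds$ and analogously for $\bfQ^*_k$. The step just proved also holds on $[s,t_k]$, so $\bfPhi^*(t_k,s) = \bfT(t_k)\bfPhi(t_k,s)\bfT(s)^{-1}$. Combined with $\bfG^*(s) = \bfT(s)\bfG(s)$, the factors $\bfT(s)^{-1}\bfT(s)$ cancel inside the integrand. Pulling the constant $\bfT(t_k)$ and $\bfT(t_k)^\top$ out of the integral then gives $\bfQ^*_k = \bfT(\hat{\xi}_{k|k-1})\bfQ_k\bfT(\hat{\xi}_{k|k-1})^\top$.

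The discrete recursion \eqref{equ:Q_accumulation_star} can be handled the same way by induction on $i$. Assume the one-step relations $\bfPhi^*(\tau_{i+1},\tau_i) = \bfT_{i+1}\bfPhi(\tau_{i+1},\tau_i)\bfT_i^{-1}$ and $\bfQ^*(\tau_{i+1},\tau_i) = \bfT_{i+1}\bfQ(\tau_{i+1},\tau_i)\bfT_{i+1}^\top$, where $\bfT_i := \bfT(\hat{\xi}(\tau_i))$. Then the inner factors telescope in the product, and each sandwiched term $\bfPhi^*\bfQ^*\bfPhi^{*\top}$ collapses to $\bfT_{i+1}(\cdot)\bfT_{i+1}^\top$.

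I expect the main obstacle to be justifying these one-step discrete relations. They hold exactly if the discretization is the exact solution of the continuous ODE, as argued above. For approximate schemes (e.g.\ zero-order-hold with $\bfT$ frozen), they hold only to first order in $\delta\tau$. If the supplementary discretization is not exact, I would instead \emph{define} the target EqF's discrete Jacobians through these relations, which is consistent with Corollary~\ref{corollary:1}. I would also note that $\bfQ(\tau_{0},\tau_0) = \bfZo$ starts the induction.
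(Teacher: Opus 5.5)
Your proposal is correct and follows the paper's proof. You show that $\bfT(t)\bfPhi(t,t_{k-1})\bfT(t_{k-1})^{-1}$ solves the $\bfF^*$ transition ODE with identity initial condition, invoke uniqueness, and then substitute $\bfPhi^*(t_k,s)=\bfT(t_k)\bfPhi(t_k,s)\bfT(s)^{-1}$ and $\bfG^*=\bfT\bfG$ into the noise integral so that the inner factors cancel. Your extra induction over the discrete recursion, and your caveat about inexact discretizations, go beyond the paper's argument (which assumes the exact ODE and integral solutions) and are consistent with how the supplement obtains the target EqF's discrete Jacobians by transforming those of ESKF.
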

\begin{proof}
   See the supplementary material.
\end{proof}

\begin{algorithm}[t]
        \caption{Transforming Propagation (TP)}
        \label{alg:TP}
        \setstretch{1.1} 
        \KwIn {Covariance: \(\bfP_{k-1|k-1}^{*}\) and IMU measurements.}
        \KwOut {Covariance: \(\bfP_{k|k-1}^*\).}

    \tcp{\small $\mathbf{T}$: from auxiliary EqF to target EqF}
    \BlankLine
        \tcp{\small Small-sized matrix computation}
        Initialize \(\bfPhi_I(\tau_0,\tau_0)=\bfI_{15},\bfQ_{I}(\tau_0,\tau_0)=\bfZo_{15}\);\\
        \For{$i \in \{0,1,\dots,q-1\}$}{

        \( \bfPhi_I(\tau_{i+1},\tau_0) = \bfPhi_I(\tau_{i+1},\tau_{i}) \bfPhi_I(\tau_{i},\tau_0)\);

        \( \begin{aligned}
\bfQ_I(\tau_{i+1},\tau_0) = & \ \bfPhi_I(\tau_{i+1},\tau_{i}) \bfQ_I(\tau_{i},\tau_0) \bfPhi_I(\tau_{i+1},\tau_{i})^\top \\
& +\bfQ_I(\tau_{i+1},\tau_i) ;
\end{aligned} \)
        }
        \(
        \bfPhi_k = 
        \begin{bmatrix}
                        \bfPhi_I (\tau_q, \tau_0) & \bf0      \\
                        \bf0           & \bfI_{3m} \\
                \end{bmatrix}
        \) ;

               \(
        \bfQ_k = 
        \begin{bmatrix}
                        \bfQ_I (\tau_q, \tau_0) & \bf0      \\
                        \bf0           & \bfZo_{3m\times 3m} \\
                \end{bmatrix}
        \) ;
        \BlankLine
        \tcp{\small Full-state-size matrix computation}

        \(  \bfPhi_{k}^* =
                \setlength{\arraycolsep}{1pt}
                \bfT(\hat{\xi}_{k|k-1}) \bfPhi_k \bfT(\hat{\xi}_{k-1|k-1})^{-1}\);
                
        \(\bfQ_k^*=\bfT(\hat{\xi}_{k|k-1}) 
     \setlength{\arraycolsep}{3pt}
        \bfQ_k \bfT(\hat{\xi}_{k|k-1})^\top\); \\
        \(\bfP_{k|k-1}^* = \bfPhi_{k}^*  \bfP_{k-1|k-1}^* {\bfPhi_{k}^*}^\top + \bfQ_{k}^*\).
\end{algorithm}

By selecting a block-diagonal auxiliary EqF, such as SD-EqF or ESKF, TP restricts the expensive accumulation of transition and noise matrices to the fixed-size IMU sub-blocks. In this example, SD-EqF is used as the auxiliary EqF. As shown in Algorithm~\ref{alg:TP}, Lines 1--6 compute the accumulated SD-EqF matrices using the block-diagonal structure in \eqref{equ:PhiQ_blockdiagonal}. Lines 7--8 then recover the corresponding full-state T-EqF matrices through the transformation relationship, and Line 9 propagates the target covariance.
As a result, the full-state operations are performed only once per camera interval.

\subsection{Implementation 2: Transforming Correction } 
\label{sec:TC}

Instead of directly maintaining the T-EqF covariance $\bfP^*$ as in TP, TC tracks the equivalent covariance $\bfP := \bfT^{-1}\bfP^*\bfT^{-\top}$ in the auxiliary EqF coordinates; a detailed derivation is provided in the supplementary material. By Corollary~\ref{corollary:PhiQ}, the propagation of $\bfP$ follows the same equation as the auxiliary EqF covariance, avoiding dense propagation in the target state.
This additional correction-stage transformation is the defining feature of TC.
The only additional covariance operation appears after correction: since the state update changes the evaluation point of $\bfT$ from prior to posterior, a relative transformation is required to preserve the relation between $\bfP$ and $\bfP^*$. Algorithm~\ref{alg:TC} summarizes TC under the bijective setting of Remark~\ref{remark:1}, where the estimated group elements need not be explicitly maintained; the non-bijective case is discussed in the supplementary material. Lines 1--2 initialize the equivalent covariance, Lines 3--6 perform propagation and correction using the efficient SD-EqF Jacobians, Line 7 injects the Kalman correction through the T-EqF error geometry,  Line 8 applies the relative transformation
to prepare the covariance for the next recursion, and Line 9 optionally converts $\bfP$ back to $\bfP^*$ when the target covariance is required for output.

\begin{algorithm}[t]
    \setstretch{1.1} 
    \caption{Transforming Correction (TC)}
    \label{alg:TC}
    \KwIn{Initial guess $(\hat{\xi}_{0}$, $\mathbf{P}^*_{0})$ and all measurements.}
    \KwOut{Posterior $\hat{\xi}_{k|k}$, $\mathbf{P}_{k|k}^* $.}

    \tcp{\small $\mathbf{T}$: from auxiliary EqF to target EqF}

    \BlankLine

    \tcp{\small Initialization}
    $\hat{\xi}_{0|0} = \hat{\xi}_{0}$\;
    $\mathbf{P}_{0|0} = \mathbf{T}(\hat{\xi}_{0})^{-1} \mathbf{P}_{0}^* \mathbf{T}(\hat{\xi}_{0})^{-\top}$\;

    \BlankLine
    \tcp{\small Propagation from $t_{k-1}$ to $t_k$}
    Propagate $\hat{\xi}_{k-1|k-1}$ to $\hat{\xi}_{k|k-1}$\;
    $\mathbf{P}_{k|k-1} = {\bfPhi}_k \mathbf{P}_{k-1|k-1} {\bfPhi}_k^\top + \mathbf{Q}_k$\;

    \BlankLine
    \tcp{\small Correction at $t_k$}
    $\mathbf{K} = \mathbf{P}_{k|k-1} \mathbf{H}_{k}^\top (\mathbf{H}_{k} \mathbf{P}_{k|k-1} \mathbf{H}_{k}^\top + \mathbf{R}_{k})^{-1}$\;
    $\mathbf{P}_{k|k} = (\mathbf{I} - \mathbf{K} \mathbf{H}_{k}) \mathbf{P}_{k|k-1}$\;

    \BlankLine
    \tcp{Transforming correction at $t_k$}
    $\hat{\xi}_{k|k} = \left. {\varphi^*}^{-1}( \mathbf{T} (\hat{\xi}_{k|k-1}) \mathbf{K} \tilde{\mathbf{z}}_{k}\big) \right|_{\hat{\xi} = \hat{\xi}_{k|k-1}}$\;
    $\mathbf{P}_{k|k} \leftarrow \mathcal{T} \mathbf{P}_{k|k} \mathcal{T}^\top $ with $\mathcal{T} = \mathbf{T}(\hat{\xi}_{k|k})^{-1} \mathbf{T}(\hat{\xi}_{k|k-1})$\;
    $\mathbf{P}_{k|k}^* = \mathbf{T}(\hat{\xi}_{k|k}) \mathbf{P}_{k|k} \mathbf{T}(\hat{\xi}_{k|k})^\top$.  {\small \color{gray}\quad This line is optional.}
\end{algorithm}

\subsection{\revised{Computational Complexity Analysis}}
\label{sec:complexity_analysis}

\revised{
Let \(m\) denote the number of landmarks, \(q\) the number of IMU propagation sub-intervals between two consecutive camera frames, \(p\) the number of correction steps performed within one camera frame, and \(N=15+3m\) the dimension of the full error-state. 
In practical VINS pipelines, $p$ is larger than 1 due to the use of batched visual measurements and delayed feature initialization \cite{genevaOpenVINSResearchPlatform2020}. 
In our subsequent simulations and experiments, its average value typically falls between 3 and 5.
The costs reported in Table \ref{tab:complexity} are the additional costs relative to the auxiliary EqF, whose propagation already exploits the block-diagonal structure in \eqref{equ:PhiQ_blockdiagonal}. 
Common costs shared by all implementations, such as feature tracking, data association, projection Jacobian evaluation, and the standard measurement update, are not included.}

\revised{
For the naive implementation of T-EqF, the target transition and noise matrices in \eqref{equ:PhiQ_blockdiagonal2} are accumulated at every IMU sub-interval. 
After exploiting their zero and identity blocks, the remaining dominant operations update dense $3m\times3m$ landmark--landmark blocks, giving $O(qm^2)$ cost. 
In TP, the \(q\)-step propagation is first performed in the auxiliary coordinates using only fixed-size \(15\times15\) IMU blocks,  whose cost is negligible.
 The target covariance is then recovered once per camera interval through block-sparse transformations, 
which introduces an additional \(O(m^2)\) propagation cost. 
In TC, propagation is identical to that of the auxiliary EqF, so no additional propagation cost is introduced; however, the relative transformation in Algorithm \ref{alg:TC}, Line 8, is required after each correction. 
Exploiting its block sparsity gives \(O(m^2)\) per correction and \(O(pm^2)\) per camera frame. 
Thus, TP is typically more efficient than TC when multiple corrections are performed per camera frame. 
In terms of implementation effort, however, TC is generally easier to integrate into an existing EqF pipeline, since it can be implemented as a correction-stage refinement built on top of the auxiliary EqF. 
By contrast, TP requires a more complete implementation of the target EqF covariance propagation.
A detailed FLOP-level derivation is provided in the supplementary material.}

\begin{table}[!thp]
        \caption{Additional computational costs of different implementations compared to the auxiliary EqF.}
        \centering
    \setlength\tabcolsep{10pt}
        \begin{tabular}{lccc}
                \toprule
                 & \textbf{Naive} & \textbf{TP} & \textbf{TC} \\
                \midrule
                Propagation & $O(qm^2)$ & $O(m^2)$ & 0 \\
                Correction & 0 & 0 & $O(pm^2)$\\
                \bottomrule
        \end{tabular}
        \label{tab:complexity}
\end{table}

\section{Monte Carlo Simulations}
\label{sec:monte_carlo_simulations}

In this section, we validate the proposed consistent EqF design and its efficient implementation through Monte Carlo simulations. \revised{In addition to ESKF \cite{eskf}, SD-EqF \cite{fornasier2025equivariant}, and the proposed T-EqF, we consider two consistent EqF formulations extracted from the existing literature. The first is Invariant SD-EqF (ISD-EqF) \cite{fornasier2024hybrid}, which combines the advantages of SD-EqF and the invariant EKF. The second is SOT-EqF \cite{vangoor2023eqvio}, 
which retains the invariant IMU state and the {\it scaled orthogonal transform} group for landmarks.
}
 For a fair comparison, all estimators are implemented within the same OpenVINS-based VINS pipeline \cite{genevaOpenVINSResearchPlatform2020}, with identical data association, feature management, and initialization modules. The only differences among the EqF-based estimators lie in the design of the global-local maps and the corresponding Jacobians.

During testing, the IMU and camera measurements are generated using a high-fidelity simulator \cite{genevaOpenVINSResearchPlatform2020}, with detailed parameters listed in Table \ref{tab:sim_params}.
\revised{
This table summarizes the sensor noise parameters, sampling rates, camera configuration, and maximum number of landmarks used in the OpenVINS simulator. All parameters follow the default OpenVINS simulator configuration.}
The simulator takes a time-stamped pose trajectory as input, fits it by an $\mathbf{SE}(3)$ B-spline, and then generates inertial and visual measurements. Hence, the trajectories used in Table \ref{tab:rmse} are reference motion trajectories for simulation rather than raw image/IMU datasets.
\revised{These trajectories cover representative VINS scenarios, including outdoor/vehicle-like motions (Udel-Gara and Udel-Neig), micro-aerial-vehicle motion (EuRoC-V11), and indoor handheld motions. Detailed trajectory descriptions and a 3D visualization are provided in the supplementary material.}


\begin{table}[!thp]
    \caption{\revised{Simulator basic parameters}}
    \centering
    \footnotesize
    \setlength\tabcolsep{4.5pt}
    \begin{tabular}[]{cccc}
            \toprule
            { \textbf{Parameter}} & \textbf{Value} & \textbf{Parameter} & \textbf{Value}  \\
            \midrule
            Accel. White Noise    & 2.0e-03  $\text{m} / \text{s}^2 / \sqrt{\text{Hz}}$    &IMU Freq.           & 200 Hz  \\
            Accel. Random Walk    & 3.0e-03  $\text{m} / \text{s}^3 / \sqrt{\text{Hz}}$     & Cam. Freq.          & 10  Hz  \\
            Gyro. White Noise  & 1.7e-04  $\text{rad} / \text{s} / \sqrt{\text{Hz}}$      & Cam. Noise          & 1 pixel \\
             Gyro. Random Walk  & 2.0e-05  $\text{rad} / \text{s}^2 / \sqrt{\text{Hz}}$  &  Cam. Number         & Mono  \\
              Cam. Resolution     & 752 $\times$ 480 pixels & Max Landmarks       & 40 \\
            \bottomrule
    \end{tabular}
    \label{tab:sim_params}
\end{table}


\subsection{Estimation Consistency}
We first validate the proposed consistent EqF design by comparing estimator accuracy and consistency. The five estimators are tested on seven trajectories from OpenVINS. For each trajectory and each estimator, 100 Monte Carlo simulations are conducted.

The position and orientation root mean square errors (RMSEs) over the 100 runs are reported in Table \ref{tab:rmse}.
Since the unobservable subspaces of ESKF and SD-EqF are state-dependent, both filters suffer from inconsistency and degraded accuracy. By contrast, T-EqF shares the same state-independent unobservable subspace as ISD-EqF, resulting in consistent estimation and improved performance. \revised{SOT-EqF achieves the best accuracy in some cases because its local landmark parameterization reduces measurement linearization error. T-EqF nevertheless remains competitive. More importantly, T-EqF provides a systematic route for designing consistent EqFs for arbitrary systems and is compatible with designs that reduce linearization errors. In the supplementary material, we transform the ESKF-style IMU error into a right-invariant error while retaining a local landmark parameterization, thereby improving consistency and reducing linearization error simultaneously.} 

  \begin{table}[h]
    \caption{\revised{Orientation (deg) / Position (m) RMSE comparison in simulation.}}
    \centering
    \scriptsize
    \setlength\tabcolsep{2.2pt}
    \begin{tabular}[]{cccccc}
            \toprule
            {\textbf{Trajectory}} & \textbf{ESKF} & \textbf{SD-EqF} & \textbf{ISD-EqF} & \revised{\textbf{SOT-EqF}} & \textbf{T-EqF} \\
            \midrule
  Udel-Gore  & 0.360 / 0.102 & 0.360 / 0.102 & \textbf{0.254} / 0.092 & \revised{0.255 / \textbf{0.091}} & \textbf{0.254} / 0.092 \\
            Udel-Gara  & 0.245 / 1.686 & 0.249 / 1.704 & 0.222 / 1.655 & \revised{\textbf{0.219} / \textbf{1.568}} & 0.222 / 1.653 \\
            Udel-Arl-s & 1.291 / 0.141 & 1.291 / 0.141 & \textbf{0.756} / \textbf{0.124} & \revised{0.757 / \textbf{0.124}} & \textbf{0.756} / \textbf{0.124} \\ 
            Udel-Neig  & 1.625 / 10.94 & 1.569 / 10.65 & 1.137 / 7.847 & \revised{\textbf{1.123} / \textbf{7.823}} & 1.139 / 7.855 \\
            TUM-Corr   & 0.169 / 0.082 & 0.170 / 0.082 & \textbf{0.152} / \textbf{0.079} & \revised{0.153 / \textbf{0.079}} & \textbf{0.152} / \textbf{0.079} \\
            TUM-Magi   & 0.456 / 0.420 & 0.499 / 0.432 & \textbf{0.376} / 0.392 & \revised{0.380 / \textbf{0.372}} & \textbf{0.376} / 0.391\\
            EuRoC-V11 & 0.232 / 0.027 & 0.231 / 0.027 & \textbf{0.166} / \textbf{0.025} & \revised{\textbf{0.166} / \textbf{0.025}} & \textbf{0.166} / \textbf{0.025} \\
            \bottomrule
    \end{tabular}
    \label{tab:rmse}
\end{table}

To illustrate the consistency differences among the estimators, we run 1000 Monte Carlo simulations on the Udel-Gore trajectory. The orientation and position normalized estimation error squared (NEES) results are shown in Fig. \ref{fig:nees}. Compared with SD-EqF, the mean and 95\% bounds of the T-EqF NEES closely match the theoretical values, indicating good consistency. Fig. \ref{fig:error_bound} further shows the estimation errors. The T-EqF errors remain well bounded, whereas those of SD-EqF exceed the 3-sigma bound, especially in yaw.


\begin{figure}[!tp]
        \centering
        \includegraphics[width=0.95\linewidth]{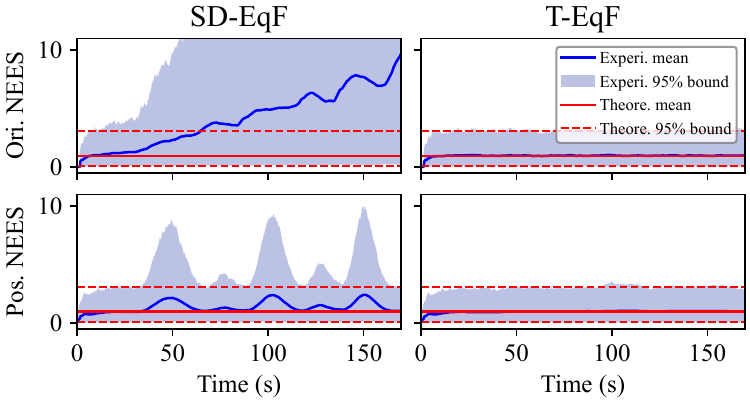}
        \caption{Orientation and position NEES over time of 1000 Monte Carlo simulations on Udel-Gore.
        The ESKF, ISD-EqF, and \revised{SOT-EqF} plots are omitted because they are nearly identical to those of SD-EqF and T-EqF, respectively.}
        \label{fig:nees}
\end{figure}

\begin{figure}[h]
        \centering
        \includegraphics[width=0.99\linewidth]{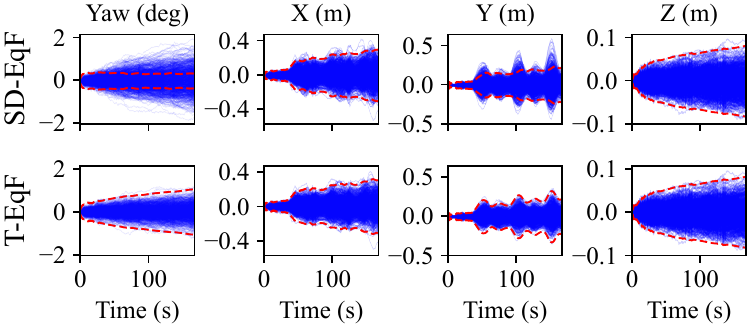}
        \caption{Estimation error (blue lines) and 3-sigma bounds (red dashed line) of IMU yaw and position of 1000 Monte Carlo simulations on Udel-Gore. Other estimators are omitted because their results are similar to those of SD-EqF and T-EqF.}
        \label{fig:error_bound}
\end{figure}
\begin{figure}[!h]
        \centering
        \includegraphics[width=0.95\linewidth]{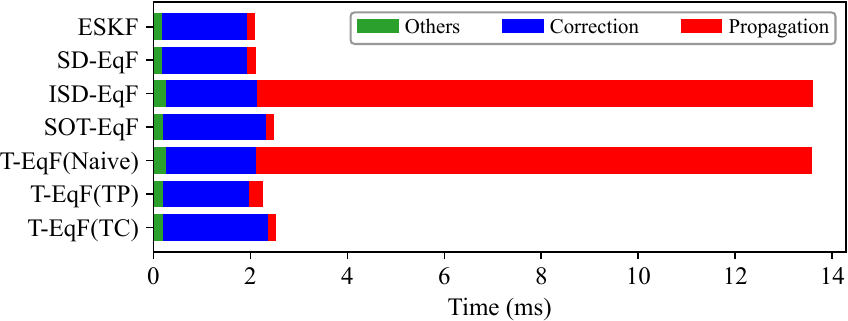}
        \caption{\revised{Average time to process one camera frame on Udel-Gore. The test platform is an x86 desktop (R9 7950X at 4.5~GHz).}}
        \label{fig:time_bar}
\end{figure}

\begin{figure*}[!ht]
        \centering
        \includegraphics[width=0.95\linewidth]{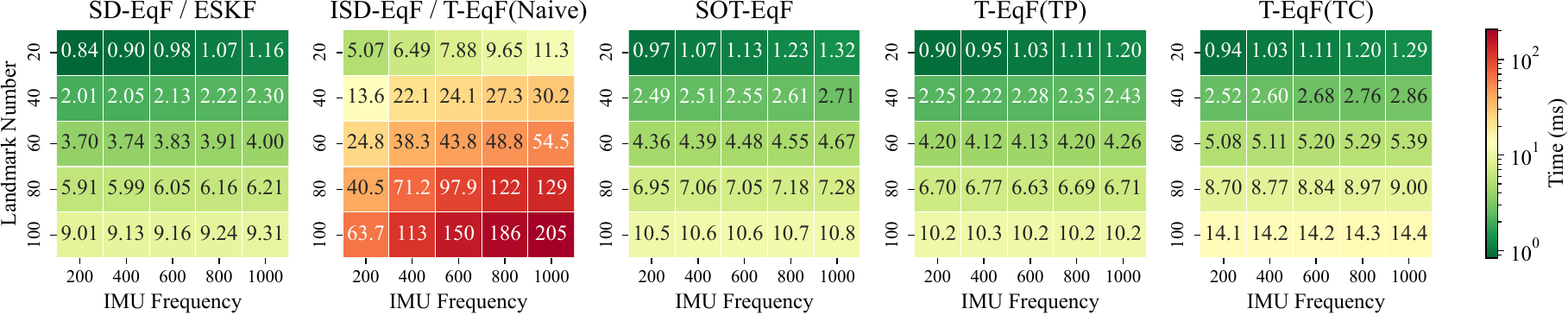}
        \caption{\revised{Average time to process one camera frame on Udel-Gore with varying numbers of landmarks and different IMU frequencies.
        The results for SD-EqF and ESKF are similar and are therefore merged into one plot, as are the results for ISD-EqF and T-EqF(Naive).
         The test platform is an x86 desktop (R9 7950X at 4.5~GHz).}}
        \label{fig:time_condition}
\end{figure*} 

\subsection{Computational Efficiency}

We next compare the runtime of ESKF, SD-EqF, ISD-EqF, \revised{SOT-EqF}, and three T-EqF implementations: naive propagation, TP, and TC. \revised{ESKF, SD-EqF, and SOT-EqF already have efficient covariance propagation, whereas ISD-EqF and naive T-EqF repeatedly process dense landmark--landmark covariance blocks during propagation.} \revised{All compared implementations exploit the available block sparsity to skip products involving zero and identity blocks.}


 The average processing time per camera frame is shown in Fig. \ref{fig:time_bar}. The naive T-EqF implementation clearly suffers from the covariance-propagation bottleneck, whereas the proposed TP and TC implementations substantially reduce the runtime.  
Furthermore, TP is more efficient than TC, especially during correction. As discussed in Section \ref{sec:complexity_analysis}, the additional transformation in TP is executed only once per camera frame, whereas the relative transformation in TC must be executed after every correction (including the feature initialization), leading to higher computational cost compared to TP. 
\revised{
Compared with SD-EqF and ESKF, SOT-EqF has a higher correction cost because its local landmark representation activates additional anchored IMU states during measurement updates \cite{genevaOpenVINSResearchPlatform2020}, increasing the dimension of the required matrix inversion.}




To evaluate the proposed TP and TC implementations under many landmarks and high-frequency IMU measurements, we run additional simulations with varying landmark numbers and IMU frequencies. The results are shown in Fig. \ref{fig:time_condition}. As both factors increase, the runtime of the naive T-EqF implementation grows substantially, whereas TP and TC maintain low computational costs. Compared with the simplest estimator, ESKF, TP incurs almost no additional cost, while TC introduces only minor overhead.


\section{Experiments on Real-world Datasets}
\label{sec:real_world_datasets}
\subsection{Dataset Experiment on EuRoC MAV}
The proposed method is evaluated on the public EuRoC MAV dataset \cite{burri2016EuRoCMicroAerial}, \revised{which provides stereo images, IMU measurements, and ground-truth trajectories collected by a micro aerial vehicle in indoor environments. To facilitate
reproducibility and ensure a fair comparison, all compared estimators use the default camera and landmark-state settings of the
OpenVINS \textbf{euroc\_mav} configuration, i.e., stereo input and at most 50
in-state landmarks. } Due to the complexities of real-world environments, such as motion blur, imperfect data association, and non-Gaussian noise, performance differences among the estimators are less pronounced than in simulation. Nevertheless, T-EqF maintains superior accuracy over ESKF and SD-EqF across most sequences, as reported in Table \ref{tab:euroc_rmse}. 

\begin{table}[htp]
    \caption{ \revised{Orientation (deg) and Position (m) RMSE on EuRoC MAV dataset.}}
    \centering
    \scriptsize
    \setlength\tabcolsep{2.9pt}
    \begin{tabular}{cccccc}
            \toprule
            { \textbf{Dataset}} & \textbf{ESKF} & \textbf{SD-EqF} & \textbf{ISD-EqF} & \textbf{\revised{SOT-EqF}} & \textbf{T-EqF} \\
            \midrule
                V1-01               & 1.184 / 0.129    & 1.537 / 0.170    & 1.625 / 0.122    & \revised{\textbf{1.084} / \textbf{0.102}}    & 1.620 / 0.123 \\
                V1-02             & 1.859 / \textbf{0.103}    & 0.863 / 0.138    & \textbf{0.512} / 0.120    & \revised{1.088 / 0.130}    & 0.631 / 0.116 \\
                V1-03          & 2.494 / 0.225    & \textbf{1.959} / 0.198    & 3.094 / 0.147    & \revised{3.227 / 0.138}    & 2.781 / \textbf{0.129} \\
                V2-01               & 1.315 / 0.113    & 1.303 / 0.160    & 0.870 / 0.133    & \revised{\textbf{0.747} / \textbf{0.088}}    & 0.927 / 0.132 \\
                V2-02             & 1.384 / 0.084    & 1.697 / \textbf{0.067}    & 1.400 / 0.072    & \revised{1.757 / 0.070}    & \textbf{1.343} / 0.090 \\
                V2-03          & 5.547 / 0.207    & 6.178 / 0.230    & \textbf{1.158} / 0.228    & \revised{1.390 / 0.205}    & 1.275 / \textbf{0.169} \\
                MH-01               & 5.598 / 0.453    & 3.746 / \textbf{0.262}    & 1.663 / 0.329    & \revised{1.781 / 0.327}    & \textbf{1.446} / 0.321 \\
                MH-02               & 3.426 / 0.382    & 4.029 / 0.295    & 1.512 / 0.406    & \revised{0.998 / \textbf{0.221}}    & \textbf{0.837} / 0.261 \\
                MH-03             & 3.009 / \textbf{0.236}    & 2.844 / 0.309    & 0.854 / 0.284    & \revised{\textbf{0.572} / 0.265}    & 0.660 / 0.332 \\
                MH-04          & 1.912 / 0.558    & 1.393 / 0.650    & 1.074 / 0.482    & \revised{1.123 / \textbf{0.438}}    & \textbf{0.951} / 0.632 \\
                MH-05          & 3.433 / 0.591    & 2.731 / 0.842    & 2.627 / 0.566    & \revised{\textbf{0.792} / \textbf{0.318}}    & 1.310 / 0.392 \\
                \midrule
                average                  & 2.597 / 0.257    & 2.357 / 0.277    & 1.366 / 0.241    & \revised{1.213 / \textbf{0.192}}    & \textbf{1.149} / 0.225 \\
            \bottomrule
    \end{tabular}
    \label{tab:euroc_rmse}
\end{table}
\begin{figure}
        \centering
        \includegraphics[width=0.95\linewidth]{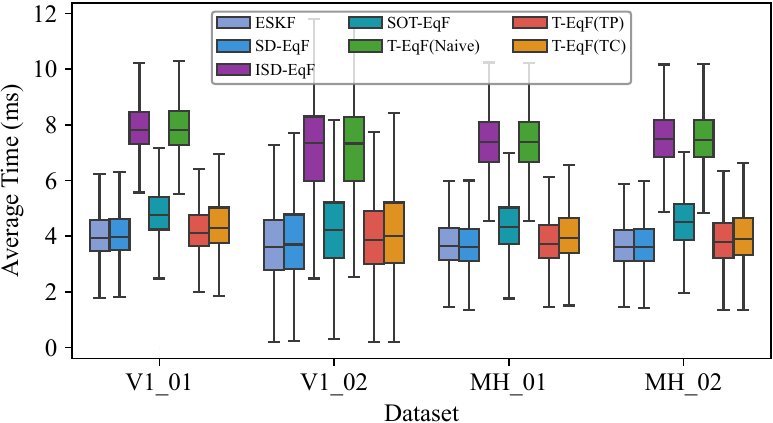}
        \caption{\revised{Average time of processing one camera frame (including prediction and correction) on EuRoC dataset.}}
        \label{fig:real_time_bar}
\end{figure}

 Furthermore, Fig. \ref{fig:real_time_bar} shows the average computational time per camera frame. Compared with the simulation results, the processing time on real-world datasets is higher because of the overhead of feature extraction and matching. However, as shown in Fig. \ref{fig:real_time_bar}, the proposed TP and TC implementations significantly reduce the computational latency of T-EqF, achieving efficiency comparable to ESKF and SD-EqF.

\subsection{Experiment on an Aerial Robot}
We further validate the proposed approach through experiments on a custom aerial robot platform (Fig. \ref{fig:platform} (a)).
The platform provides stereo imagery at 30 Hz (848 $\times$ 480 pixels) and IMU data at 200 Hz, while ground-truth data are obtained from a NOKOV motion capture system. We collect a total of six experimental sequences: three for the Eight motion pattern (Fig.~\ref{fig:platform}(b)) and three for the Zero motion pattern.
\revised{
        These reproducible closed-loop patterns were selected because their repeated
turns, heading changes, and continuous yaw rotation provide challenging
visual--inertial excitation while retaining easily inspected trajectory
shapes. The proposed method is independent of the trajectory shape and applies
to arbitrary motion trajectories.
}
Within each sequence, the corresponding pattern is repeated six times, resulting in a total flight distance of approximately 80--85 meters. 
\revised{
Except for the sensor noise parameters and the intrinsic and extrinsic parameters of the IMU and camera, all other settings are the same as those used in the EuRoC MAV experiments.}
 A representative frame displaying the tracked features is shown in Fig.  \ref{fig:platform} (c).
\begin{figure}[!t]
        \centering
        \includegraphics[width=0.95\linewidth]{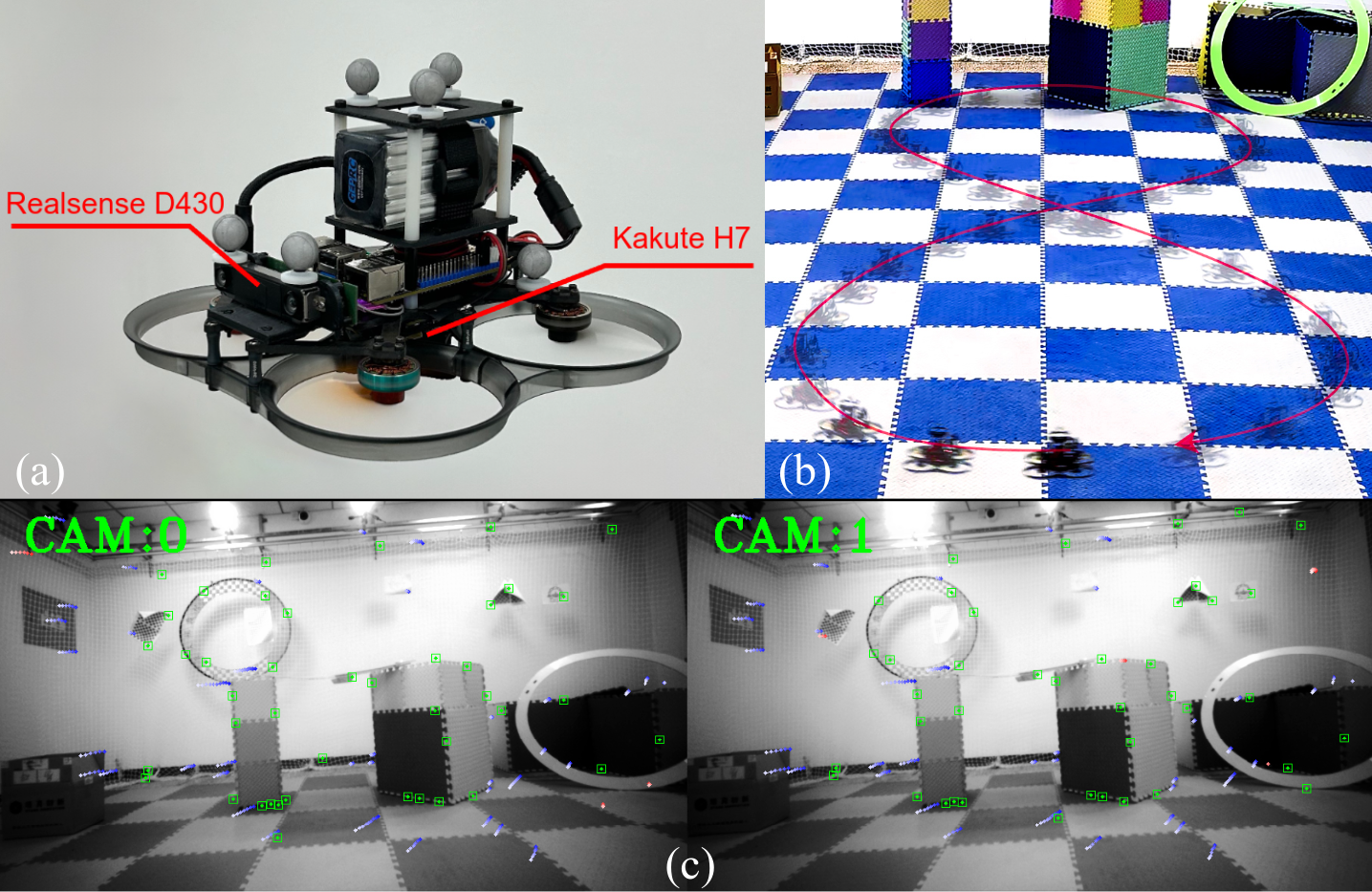}
        \caption{(a) Aerial robot with a RealSense D430 stereo camera and a Kakute H7 flight controller (MPU6000, 200 Hz) (b) Aerial robot flight trajectory shape. (c) A sample frame with tracked features in the experiment.}
        \label{fig:platform}
\end{figure}

\begin{figure}[h]
        \centering
        \includegraphics[width=0.95\linewidth]{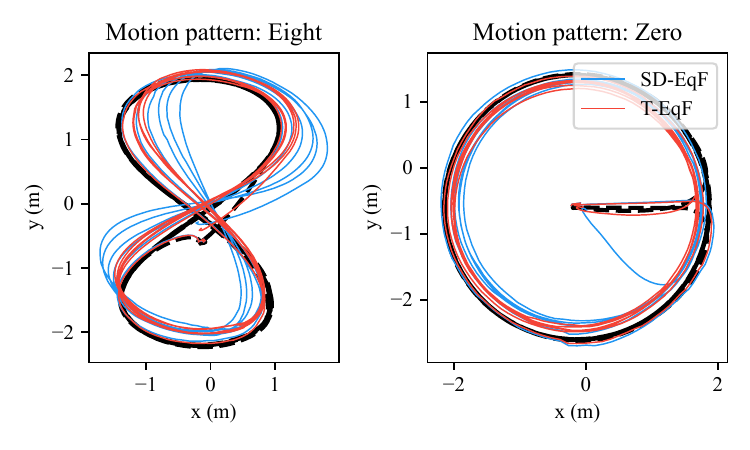}
        \caption{Estimated trajectories of SD-EqF (blue solid) and T-EqF (red solid) on the Eight and Zero motion patterns. The ground-truth trajectories are shown in black dashed lines.}
        \label{fig:exp_traj}
\end{figure}
Fig.  \ref{fig:exp_traj} compares the estimated trajectories of SD-EqF and T-EqF against ground truth for both motion patterns in one representative sequence, where the estimated trajectories are aligned to the ground truth using the first frame as the reference. T-EqF yields more accurate trajectory estimates than SD-EqF, especially in the yaw orientation. As discussed earlier, the state-dependent unobservable subspace of SD-EqF leads to overconfident estimates in this direction, resulting in significant drift over time. 

\begin{figure}
        \centering
        \includegraphics[width=0.95\linewidth]{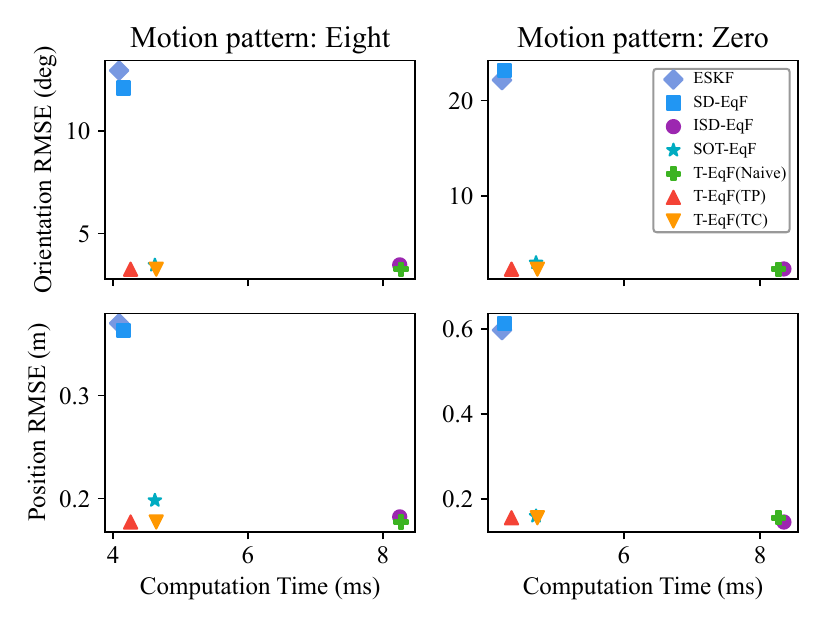}
        \caption{\revised{RMSE-Time joint comparison of these estimators on the Eight and Zero motion patterns. The closer to the bottom-left corner, the better the performance.}}
        \label{fig:exp_time2}
\end{figure}

Finally, the joint comparison of estimation accuracy and computational efficiency for all seven estimators is summarized in Fig. \ref{fig:exp_time2}.
The results form three distinct clusters. First, ESKF and SD-EqF are positioned in the top-left corner: they are computationally efficient but suffer from significant drift due to inconsistency. Second, the naive T-EqF and ISD-EqF appear in the bottom-right corner, demonstrating high estimation accuracy at the expense of a heavy computational penalty. Third, T-EqF with the proposed TP and TC implementations, \revised{together with SOT-EqF}, lies in the bottom-left corner, achieving the best overall performance by maintaining high accuracy while significantly reducing processing time, with TP delivering the best trade-off overall.

\section{Conclusion and Future Work}
\label{sec:conclusion}
This paper proposed an EqF transformation approach linking formulations with different symmetries for consistent and efficient visual--inertial navigation. By introducing the global-local map, we established a \revised{first-order transformation relationship} between the error-states of arbitrary EqFs and derived the corresponding laws for linearized error dynamics and unobservable subspaces. For unobservable systems, these results provide a systematic consistency design principle: a coordinate transformation can produce an EqF with a state-independent unobservable subspace without ad hoc symmetry analysis. To reduce the computational cost caused by non-block-diagonal Jacobians, we developed Transforming Propagation and Transforming Correction. Both exploit the block-diagonal structure of a simpler auxiliary EqF, achieving efficiency comparable to ESKF while preserving consistency. Monte Carlo simulations, EuRoC evaluations, and real-robot experiments validated the proposed approach in both consistency and efficiency.

The current framework focuses on transformations of state-error representations. Incorporating output equivariance may further reduce measurement linearization errors and improve estimation accuracy.



\bibliographystyle{IEEEtran}
\bibliography{main.bib}

\def\SUPPLEMENTINMAIN{}
\ifdefined\SUPPLEMENTINMAIN
\else
\documentclass{article}

\usepackage{geometry}
\geometry{
    textheight=8.5in,
    textwidth=6.5in,
    top=1.5in,
    headheight=12pt,
    headsep=25pt,
    footskip=30pt
}
\setstretch{1.2}

\usepackage{tabularx}
\usepackage{amsmath}
\fi
\def\GSD{\mathbf{G}}
\def\bfPsi{\mathbf{\Psi}}
\def\SO3{{\mathbf{SO}(3)}}
\def\textnote#1{\textcolor[HTML]{5C6BC0}{#1}}

\def\textqcr#1{{\fontfamily{pcr}\selectfont #1}}
\def\rfD{\mathrm{D}}
\def\spancol#1{\underset{\text{col}}{\text{Span}} \left(#1\right)}

\ifdefined\SUPPLEMENTINMAIN
\clearpage
\onecolumn
\setcounter{section}{0}
\setcounter{subsection}{0}
\setcounter{equation}{0}
\setcounter{figure}{0}
\setcounter{table}{0}
\setcounter{theorem}{0}
\setcounter{corollary}{0}
\setcounter{proposition}{0}
\setcounter{lemma}{0}
\setcounter{definition}{0}
\setcounter{remark}{0}
\renewcommand{\thesection}{S\arabic{section}}
\renewcommand{\theequation}{S\arabic{equation}}
\renewcommand{\thefigure}{S\arabic{figure}}
\renewcommand{\thetable}{S\arabic{table}}
\renewcommand{\theHsection}{sup.\arabic{section}}
\renewcommand{\theHsubsection}{sup.\arabic{section}.\arabic{subsection}}
\renewcommand{\theHequation}{sup.\arabic{equation}}
\renewcommand{\theHfigure}{sup.\arabic{figure}}
\renewcommand{\theHtable}{sup.\arabic{table}}
\begin{center}
    {\LARGE\bfseries Supplementary Material for\\[0.4em]
    ``Equivariant Filter Transformations for Consistent and Efficient Visual--Inertial Navigation''}
\end{center}
\phantomsection
\addcontentsline{toc}{part}{Supplementary Material}
\etocstandardlines
\etocsettocstyle{\section*{Contents}}{}
\localtableofcontents
\clearpage
\else
\begin{document}
\title{Supplementary Material for ``Equivariant Filter Transformations for Consistent and Efficient Visual--Inertial Navigation''}
\author{}
\date{}
\maketitle
\fi

\ifdefined\SUPPLEMENTINMAIN
\else
\tableofcontents 
\newpage
\fi
\section{Proofs and Derivations}
\subsection{Proof of Corollary 1}
\begin{corollary}
    Consider two arbitrary EqFs of the same system, associated with $\varphi$ and $\varphi^*$. If the linearized error-state system associated with $\varphi$ is given by
    \begin{subequations}
    \begin{align}
    \dot{\bfvarepsilon} &= \bfF \bfvarepsilon + \bfG \bfn, \label{sup:equ:eqf_general_a}\\
    \tilde{\bfz} &= \bfH \bfvarepsilon + \bfepsilon,  \label{sup:equ:eqf_general_b}
    \end{align}
    \label{sup:equ:eqf_general}%
    \end{subequations}
    then the linearized error-state system associated with $\varphi^*$ satisfies
    \begin{subequations}
    \begin{align}
        \dot{\bfvarepsilon}^* &= \bfF^* \bfvarepsilon^* + \bfG^* \bfn, \\
        \tilde{\bfz} &= \bfH^* \bfvarepsilon^* + \bfepsilon,
    \end{align}
    \label{sup:equ:eqf_transformed}%
    \end{subequations}
    where
    \begin{align}
        \bfF^* &= \dot{\bfT} \bfT^{-1} + \bfT \bfF \bfT^{-1},  \\ 
            \bfG^* &= \bfT \bfG,\\
            \bfH^* &= \bfH \bfT^{-1},
    \end{align}
    and $\bfT$ is the transformation matrix given by Theorem 1 (in the main manuscript).
    \label{sup:corollary:1}
\end{corollary}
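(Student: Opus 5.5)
The plan is to take Theorem 1 as the starting point and treat the relation $\bfvarepsilon^* = \bfT\bfvarepsilon$, with $\bfT = \bfT(\hat\xi(t))$ time-varying along the estimated trajectory, as an exact change of coordinates for the first-order (linearized) error-state system. The corollary then reduces to a standard similarity transformation of a time-varying linear system. Since $\bfT$ is nonsingular by Theorem 1, we may also write $\bfvarepsilon = \bfT^{-1}\bfvarepsilon^*$ at every instant, and this is what allows us to eliminate $\bfvarepsilon$ from both equations of \eqref{sup:equ:eqf_general}.

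For the dynamics, we differentiate $\bfvarepsilon^* = \bfT\bfvarepsilon$ in time with the product rule, which gives $\dot{\bfvarepsilon}^* = \dot{\bfT}\bfvarepsilon + \bfT\dot{\bfvarepsilon}$. Substituting \eqref{sup:equ:eqf_general_a} yields $\dot{\bfvarepsilon}^* = (\dot{\bfT} + \bfT\bfF)\bfvarepsilon + \bfT\bfG\bfn$. Replacing $\bfvarepsilon$ by $\bfT^{-1}\bfvarepsilon^*$ then gives $\bfF^* = \dot{\bfT}\bfT^{-1} + \bfT\bfF\bfT^{-1}$ and $\bfG^* = \bfT\bfG$.

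For the measurement, the residual $\tilde{\bfz}$ and the noise $\bfepsilon$ are the same physical quantities in both filters, because they depend only on the true state $\xi$ and the estimate $\hat\xi$, which are shared. Substituting $\bfvarepsilon = \bfT^{-1}\bfvarepsilon^*$ into \eqref{sup:equ:eqf_general_b} therefore gives $\bfH^* = \bfH\bfT^{-1}$.

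The main obstacle is justifying that these manipulations are consistent to first order, rather than merely formal. Three points need care.

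First, $\bfvarepsilon^* = \varphi^*(\varphi^{-1}(\bfvarepsilon)) = \bfT\bfvarepsilon + O(\|\bfvarepsilon\|^2)$ holds only up to second-order terms. Differentiating the remainder produces terms of order $\|\bfvarepsilon\|\,\|\dot{\bfvarepsilon}\|$ and $\|\bfvarepsilon\|^2$, which are still second order. I would argue this by writing the exact map and noting that its time derivative, along trajectories of the linearized system, differs from the linear expression only by higher-order terms.

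Second, both EqFs must linearize about the same estimate $\hat\xi(t)$. This holds because $\bfvarepsilon = 0$ corresponds to $\xi = \hat\xi$ in both filters, by the argument in the proof of Theorem 1.

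Third, $\bfT$ must be differentiable in time. Remark 1 gives this: $\bfT = \bfT(\hat\xi)$ is a smooth function of the estimate, so $\dot{\bfT} = \rfD\bfT(\hat\xi)[\dot{\hat\xi}]$ exists.

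Finally, uniqueness of the linearization closes the argument. The linearized error-state system of the $\varphi^*$-EqF is, by definition, the first-order part of its exact error dynamics and output. The exact error dynamics of $\varphi^*$ are obtained from those of $\varphi$ through the diffeomorphism $\varphi^*\circ\varphi^{-1}$. Hence their first-order parts must coincide with the matrices derived above.
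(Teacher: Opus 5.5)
Your proposal is correct and follows the paper's own proof: differentiate $\bfvarepsilon^* = \bfT\bfvarepsilon$ with the product rule, substitute the $\varphi$-dynamics and $\bfvarepsilon = \bfT^{-1}\bfvarepsilon^*$ to read off $\bfF^*$ and $\bfG^*$, and substitute the same relation into the measurement equation to obtain $\bfH^*$. The paper simply treats $\bfvarepsilon^* = \bfT\bfvarepsilon$ as an exact linear relation. Your additional points (the second-order remainder, the shared linearization point, differentiability of $\bfT$, and uniqueness of the linearization) only supply rigor the paper leaves implicit.
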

\begin{proof}
    By Theorem 1 (in the main manuscript), the error states of the two EqFs are related by
    \begin{equation}
            \bfvarepsilon^* = \bfT \bfvarepsilon, \label{sup:equ:error_state_transformation_sup}%
    \end{equation}
    where $\bfT$ is nonsingular. Taking the time derivative of both sides gives
    \begin{equation}
            \dot{\bfvarepsilon}^* = \dot{\bfT} \bfvarepsilon + \bfT \dot{\bfvarepsilon}.  \label{sup:equ:error_state_derivative_sup}%
    \end{equation}
    Substituting \eqref{sup:equ:eqf_general_a} and \eqref{sup:equ:error_state_transformation_sup} into \eqref{sup:equ:error_state_derivative_sup}  yields
    \begin{subequations}
        \begin{align}
             \dot{\bfvarepsilon}^* & = \left(\dot{\bfT} \bfT^{-1}  + \bfT \bfF \bfT^{-1}\right) \bfvarepsilon^* + \bfT \bfG \bfn \\
             & =: \bfF^* \bfvarepsilon^* + \bfG^* \bfn,
        \end{align}
    \end{subequations}
    Similarly, substituting \eqref{sup:equ:error_state_transformation_sup} into \eqref{sup:equ:eqf_general_b} gives
    \begin{subequations}
        \begin{align}
            	\tilde{\bfz} &= \bfH \bfT^{-1} \bfvarepsilon^* + \bfepsilon \\
            	&=: \bfH^* \bfvarepsilon^* + \bfepsilon,
        \end{align}
    \end{subequations}
    which completes the proof.
\end{proof}

\subsection{Proof of Corollary 2}

\begin{corollary}
        \label{sup:corollary:NTN}
    Consider two arbitrary EqFs of the same system, associated with $\varphi$ and $\varphi^*$. Their local observability matrices $\mathcal{O}$ and $\mathcal{O}^*$ satisfy
    \begin{equation}
        \mathcal{O}^* = \mathcal{O} \mathbf{T}^{-1},
        \label{sup:equ:MTM}
     \end{equation}
     where $\mathbf{T}$ is the transformation matrix given by Theorem 1 (in the main manuscript).
         Moreover, if the system is unobservable, let $\mathbf{N}$ be a basis matrix for the unobservable subspace associated with $\varphi$. Then there exists a basis matrix $\mathbf{N}^*$ for the unobservable subspace associated with $\varphi^*$ such that
    \begin{equation}
        \mathbf{N}^* = \mathbf{T} \mathbf{N}.
        \label{sup:equ:NTN}
    \end{equation}
\end{corollary}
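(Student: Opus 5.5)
The plan is to prove the block identity $\mathcal{O}^*_k = \mathcal{O}_k \bfT^{-1}$ for every $k$ by induction on $k$. Stacking these blocks gives \eqref{sup:equ:MTM}, and the statement about the unobservable subspace then follows by elementary linear algebra. Throughout I use Corollary~\ref{sup:corollary:1}, which expresses the transformed Jacobians as $\bfF^* = \dot{\bfT}\bfT^{-1} + \bfT\bfF\bfT^{-1}$ and $\bfH^* = \bfH\bfT^{-1}$. I also use Theorem 1 of the main manuscript, which guarantees that $\bfT$ is nonsingular. Since $\bfT = \bfT(\hat{\xi})$ depends on the estimate, it is time-varying, so $\bfT^{-1}$ is differentiable and satisfies $\frac{d}{dt}\bfT^{-1} = -\bfT^{-1}\dot{\bfT}\bfT^{-1}$.

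The base case is immediate: $\mathcal{O}^*_0 = \bfH^* = \bfH\bfT^{-1} = \mathcal{O}_0\bfT^{-1}$. For the inductive step, assume $\mathcal{O}^*_k = \mathcal{O}_k\bfT^{-1}$. By the recursion defining the observability blocks,
\begin{equation*}
\mathcal{O}^*_{k+1} = \mathcal{O}^*_k\bfF^* + \dot{\mathcal{O}}^*_k
= \mathcal{O}_k\bfT^{-1}\big(\dot{\bfT}\bfT^{-1} + \bfT\bfF\bfT^{-1}\big) + \dot{\mathcal{O}}_k\bfT^{-1} - \mathcal{O}_k\bfT^{-1}\dot{\bfT}\bfT^{-1}.
\end{equation*}
The two terms containing $\dot{\bfT}$ cancel, which leaves $(\mathcal{O}_k\bfF + \dot{\mathcal{O}}_k)\bfT^{-1} = \mathcal{O}_{k+1}\bfT^{-1}$. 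Stacking the blocks $k = 0,\dots,N-1$ then yields $\mathcal{O}^* = \mathcal{O}\bfT^{-1}$.

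For the second part, set $\bfN^* := \bfT\bfN$. Then $\mathcal{O}^*\bfN^* = \mathcal{O}\bfT^{-1}\bfT\bfN = \mathcal{O}\bfN = \bfZo$, so every column of $\bfN^*$ lies in $\ker\mathcal{O}^*$. Conversely, if $\mathcal{O}^*\mathbf{v} = \bfZo$, then $\mathcal{O}(\bfT^{-1}\mathbf{v}) = \bfZo$. Hence $\bfT^{-1}\mathbf{v} = \bfN\mathbf{c}$ for some $\mathbf{c}$, and therefore $\mathbf{v} = \bfN^*\mathbf{c}$. This shows the columns of $\bfN^*$ span $\ker\mathcal{O}^*$. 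They are linearly independent because $\bfT$ is nonsingular and $\bfN$ has full column rank. So $\bfN^*$ is a basis matrix for the unobservable subspace of the EqF associated with $\varphi^*$.

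The main obstacle is the cancellation of the $\dot{\bfT}$ terms in the inductive step. It requires the time-derivative term in the recursion to be interpreted consistently with the $\dot{\bfT}\bfT^{-1}$ term in $\bfF^*$, and $\bfT(\hat{\xi}(t))$ to be differentiable along the estimate trajectory. I would justify the latter from the smoothness of the global-local maps and of the estimate trajectory. Everything else is routine algebra.
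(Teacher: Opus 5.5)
Your proposal is correct and follows the paper's proof essentially step for step. It uses the same induction $\mathcal{O}_k^* = \mathcal{O}_k\bfT^{-1}$, with the $\dot{\bfT}$ terms cancelling via $\frac{d}{dt}\bfT^{-1} = -\bfT^{-1}\dot{\bfT}\bfT^{-1}$, followed by the choice $\bfN^* := \bfT\bfN$. The only cosmetic difference is at the end: the paper shows that $\bfT\bfN$ spans the full null space by noting $\operatorname{rank}(\mathcal{O}^*) = \operatorname{rank}(\mathcal{O})$, whereas you pull a null vector of $\mathcal{O}^*$ back through $\bfT^{-1}$; these two finishes are equivalent.
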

\begin{proof}
    Equation \eqref{sup:equ:MTM} can be proved by induction. First, for $k=0$, we have $\mathcal{O}_0^* = \bfH^* = \bfH \bfT^{-1} = \mathcal{O}_0 \bfT^{-1}$. Assume that $\mathcal{O}_{k-1}^* = \mathcal{O}_{k-1} \bfT^{-1}$ holds. Then,
    \begin{subequations}
        \begin{align}
        \mathcal{O}_k^* & = \mathcal{O}_{k-1}^* \bfF^* + \dot{\mathcal{O}}_{k-1}^* \\        
        & = \mathcal{O}_{k-1} \bfT^{-1} (\bfT \bfF \bfT^{-1} + \dot{\bfT} \bfT^{-1}) + (\dot{\mathcal{O}}_{k-1} \bfT^{-1} + \mathcal{O}_{k-1} \dot{\bfT}^{-1}) \\
        & = (\mathcal{O}_{k-1} \bfF + \dot{\mathcal{O}}_{k-1}) \bfT^{-1} + \mathcal{O}_{k-1} \underbrace{(\bfT^{-1} \dot{\bfT} \bfT^{-1} + \dot{\bfT}^{-1})}_{\mathbf{0}}   \\
        & = \mathcal{O}_k \bfT^{-1}.
\end{align}
    \end{subequations}
Therefore, by induction, we obtain
\begin{equation} 
    \mathcal{O}^* = \begin{bmatrix}
        \mathcal{O}_0^* \\
        \mathcal{O}_1^* \\
        \vdots \\
        \mathcal{O}_n^*
    \end{bmatrix} = \begin{bmatrix}
        \mathcal{O}_0 \\
        \mathcal{O}_1 \\
        \vdots \\
        \mathcal{O}_n
    \end{bmatrix} \bfT^{-1} = \mathcal{O} \bfT^{-1},
\end{equation}
which proves \eqref{sup:equ:MTM}.

    If the system is unobservable, let the columns of $\mathbf{N}$ form a basis for the unobservable subspace associated with $\varphi$, so that $\mathcal{O}\mathbf{N} = \mathbf{0}$. For the formulation associated with $\varphi^*$, we seek a basis matrix $\mathbf{N}^*$ satisfying $\mathcal{O}^*\mathbf{N}^* = \mathbf{0}$. Substituting \eqref{sup:equ:MTM} into this condition yields
    \begin{equation}
        (\mathcal{O} \mathbf{T}^{-1}) \mathbf{N}^* = \mathbf{0}.
        \label{sup:equ:proof_sub}
    \end{equation}
    Define $\mathbf{N}^* = \mathbf{T}\mathbf{N}$. Then,
    \begin{equation}
        \mathcal{O} \mathbf{T}^{-1} (\mathbf{T} \mathbf{N}) = \mathcal{O} (\mathbf{T}^{-1} \mathbf{T}) \mathbf{N} = \mathcal{O} \mathbf{N} = \mathbf{0}.
    \end{equation}
    Since $\mathbf{T}$ is nonsingular and the columns of $\mathbf{N}$ are linearly independent, the columns of $\mathbf{N}^* = \mathbf{T}\mathbf{N}$ are also linearly independent. Moreover, because $\text{rank}(\mathcal{O}^*) = \text{rank}(\mathcal{O})$, the null-space dimension is preserved. Therefore, $\mathbf{N}^* = \mathbf{T}\mathbf{N}$ is a valid basis for the unobservable subspace associated with $\varphi^*$. This completes the proof.
\end{proof}

\subsection{Proof of Corollary 3}
\begin{corollary}
    Consider two arbitrary EqFs of the same system, associated with the global-local maps $\varphi$ and $\varphi^*$, respectively. Their accumulated transition matrices and noise matrices are related by the transformation matrix as follows:
                \begin{align}
                        & \bfPhi_k^* = \bfT(\hat{\xi}_{k|k-1}) \bfPhi_{k} \bfT(\hat{\xi}_{k-1|k-1})^{-1}, \label{sup:equ:Phi_star_sup}\\
                        & \bfQ_{k}^* = \bfT(\hat{\xi}_{k|k-1}) \bfQ_{k} \bfT(\hat{\xi}_{k|k-1})^\top, \label{sup:equ:Q_star_sup}
                \end{align}
        where $\bfT(\hat{\xi}_{k-1|k-1})$ and $\bfT(\hat{\xi}_{k|k-1})$ are the transformation matrices from $\varphi$ to $\varphi^*$ evaluated at $\hat{\xi}_{k-1|k-1}$ and $\hat{\xi}_{k|k-1}$, respectively. 
        \label{sup:corollary:PhiQ}
\end{corollary}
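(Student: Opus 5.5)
The plan is to work at the level of the discrete-time linearized error-state system over one camera interval $[t_{k-1},t_k]$, and to use the defining property of $\bfPhi_k$ and $\bfQ_k$. We write $\bfT(t)$ for $\bfT(\hat\xi(t))$ evaluated along the propagated estimate. At the endpoints this gives $\bfT(t_{k-1})=\bfT(\hat\xi_{k-1|k-1})$ and $\bfT(t_k)=\bfT(\hat\xi_{k|k-1})$. By Corollary~\ref{corollary:1}, the continuous-time error dynamics of the two EqFs are related by $\bfF^*=\dot\bfT\bfT^{-1}+\bfT\bfF\bfT^{-1}$ and $\bfG^*=\bfT\bfG$.

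The first step is the transition matrix. We claim that $\bfPsi(t,t_{k-1}):=\bfT(t)\bfPhi(t,t_{k-1})\bfT(t_{k-1})^{-1}$ is the state transition matrix of $\bfF^*$. At $t=t_{k-1}$ it equals the identity. Differentiating and using $\partial_t\bfPhi=\bfF\bfPhi$ gives
\begin{equation*}
\dot\bfPsi=\dot\bfT\bfPhi\bfT(t_{k-1})^{-1}+\bfT\bfF\bfPhi\bfT(t_{k-1})^{-1}=(\dot\bfT\bfT^{-1}+\bfT\bfF\bfT^{-1})\bfPsi=\bfF^*\bfPsi .
\end{equation*}
By uniqueness of solutions of linear ODEs, $\bfPsi(t,t_{k-1})=\bfPhi^*(t,t_{k-1})$. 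Evaluating at $t=t_k$ yields the first identity. Equivalently, one can argue directly from $\bfvarepsilon^*=\bfT\bfvarepsilon$ at both endpoints: $\bfvarepsilon^*_k=\bfT_k\bfvarepsilon_k=\bfT_k\bfPhi_k\bfvarepsilon_{k-1}+\dots=\bfT_k\bfPhi_k\bfT_{k-1}^{-1}\bfvarepsilon^*_{k-1}+\dots$.

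The second step is the noise matrix. We use $\bfQ_k=\int_{t_{k-1}}^{t_k}\bfPhi(t_k,s)\bfG(s)\mathbf{Q}_c\bfG(s)^\top\bfPhi(t_k,s)^\top ds$. By the first step, $\bfPhi^*(t_k,s)=\bfT(t_k)\bfPhi(t_k,s)\bfT(s)^{-1}$, and $\bfG^*(s)=\bfT(s)\bfG(s)$. Substituting these, the factors $\bfT(s)^{-1}\bfT(s)$ cancel inside the integrand, leaving $\bfT(t_k)\bfQ_k\bfT(t_k)^\top$. The discrete recursion \eqref{equ:Q_accumulation_star} is the exact sub-interval version of the same integral, so the identity also holds for the accumulated product form. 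One can check this by induction on $i$, with the inductive hypothesis $\bfQ^*(\tau_i,\tau_0)=\bfT(\tau_i)\bfQ(\tau_i,\tau_0)\bfT(\tau_i)^\top$; the intermediate factors $\bfT(\tau_i)^{-1}\bfT(\tau_i)$ cancel in the same way.

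The main obstacle is making the time evaluation points of $\bfT$ consistent. Within the interval, $\bfT$ must be evaluated along the propagated estimate, whose start value is the posterior $\hat\xi_{k-1|k-1}$ and whose end value is the prior $\hat\xi_{k|k-1}$; this is what produces the asymmetric $\bfT$ factors in $\bfPhi_k^*$. We also need the sub-interval matrices to satisfy the same transformation law exactly, rather than only to first order in $\Delta\tau$. If the discrete $\bfPhi(\tau_{i+1},\tau_i)$ are defined by discretizing the linearized errors through Theorem~\ref{lemma:0} at each sample, the telescoping argument goes through directly.
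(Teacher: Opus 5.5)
Your proposal is correct and follows essentially the same route as the paper. You show that $\bfT(t)\bfPhi(t,t_{k-1})\bfT(t_{k-1})^{-1}$ solves $\dot{X}=\bfF^* X$ with identity initial condition, invoke uniqueness, and then substitute $\bfPhi^*(t_k,s)=\bfT(t_k)\bfPhi(t_k,s)\bfT(s)^{-1}$ and $\bfG^*=\bfT\bfG$ into the noise integral so that the inner $\bfT(s)$ factors cancel. Your additional induction over the discrete sub-interval recursion is a small refinement that the paper leaves implicit; it does not change the argument.
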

\begin{proof}
\textbf{Proof of \eqref{sup:equ:Phi_star_sup}}:
    The accumulated transition matrix $\bfPhi_k = \bfPhi(\tau_q,\tau_0)$ is obtained by solving
\begin{equation}
        \frac{\text{d}}{\text{d}\tau}{\bfPhi} (\tau,\tau_0)  = \bfF_\tau \bfPhi (\tau,\tau_0), \quad \bfPhi (\tau_0,\tau_0) = \bfI,
\end{equation}
where $\bfF_\tau$ is the state-propagation Jacobian evaluated at $\hat{\xi}_\tau$. The definition of $\bfPhi^*(\tau,\tau_0)$ is analogous, with $\bfF_\tau$ replaced by $\bfF_\tau^*$.

Define $\bfPsi(\tau) = \bfT_\tau \bfPhi(\tau,\tau_0) \bfT_{\tau_0}^{-1}$, where $\bfT_\tau$ denotes the transformation matrix evaluated at $\hat{\xi}_\tau$. Differentiating $\bfPsi(\tau)$ with respect to $\tau$ yields
\begin{equation}
    \begin{split}
        \frac{\text{d}}{\text{d}\tau} \bfPsi(\tau) & = \dot{\bfT}_\tau \bfPhi (\tau,\tau_0) \bfT_{\tau_0}^{-1} + \bfT_\tau \frac{\text{d}}{\text{d}\tau} \bfPhi (\tau,\tau_0) \bfT_{\tau_0}^{-1} \\
        & = \dot{\bfT}_\tau \bfPhi (\tau,\tau_0) \bfT_{\tau_0}^{-1} + \bfT_\tau \bfF_\tau \bfPhi (\tau,\tau_0) \bfT_{\tau_0}^{-1} \\
        & = (\dot{\bfT}_\tau \bfT_\tau^{-1} + \bfT_\tau \bfF_\tau \bfT_\tau^{-1}) \bfPsi(\tau) \\
        & = \bfF_\tau^* \bfPsi(\tau).
    \end{split}
\end{equation}
Since $\bfPsi(\tau)$ satisfies the same differential equation as $\bfPhi^*(\tau,\tau_0)$ and shares the same initial condition $\bfPsi(\tau_0) =  \bfPhi^*(\tau_0,\tau_0) = \bfI$, uniqueness implies that $\bfPsi(\tau) = \bfPhi^*(\tau,\tau_0)$. Consequently,
\begin{equation}
        \bfPhi^*(\tau_q,\tau_0) = \bfT_{\tau_q} \bfPhi (\tau_q,\tau_0) \bfT_{\tau_0}^{-1}.
\end{equation}
During propagation, $\hat{\xi}_{\tau_0}$ and $\hat{\xi}_{\tau_q}$ correspond to $\hat{\xi}_{k-1|k-1}$ and $\hat{\xi}_{k|k-1}$, respectively. Therefore,
\begin{equation}
        \bfPhi_k^* = \bfT(\hat{\xi}_{k|k-1}) \bfPhi_{k} \bfT(\hat{\xi}_{k-1|k-1})^{-1}.
\end{equation}

\textbf{Proof of \eqref{sup:equ:Q_star_sup}}:
The accumulated noise matrix $\bfQ_k = \bfQ(\tau_q,\tau_0)$ is computed by integrating the noise covariance over time:
\begin{equation}    
    \bfQ(\tau_q,\tau_0) = \int_{\tau_0}^{\tau_q} \bfPhi (\tau_q,\tau) \bfG_\tau \bfQ_c {\bfG_\tau}^\top {\bfPhi} (\tau_q,\tau)^\top \text{d}\tau,
\end{equation}
where $\bfG_\tau$ is the noise propagation Jacobian evaluated at $\hat{\xi}_\tau$, and $\bfQ_c$ is the covariance of the continuous-time noise. Then, the accumulated noise matrix $\bfQ^*(\tau_q,\tau_0)$ is given by
\begin{equation}
    \begin{split}
    \bfQ^*(\tau_q,\tau_0) & = \int_{\tau_0}^{\tau_q} \bfPhi^* (\tau_q,\tau) \bfG_\tau^* \bfQ_c {\bfG_\tau^*}^\top {\bfPhi^*} (\tau_q,\tau)^\top \text{d}\tau \\
    & = \int_{\tau_0}^{\tau_q} \bfT_{\tau_q} \bfPhi (\tau_q,\tau) \bfT_\tau^{-1} \bfT_\tau \bfG_\tau \bfQ_c {\bfG_\tau}^\top \bfT_\tau^\top \bfPhi (\tau_q,\tau)^\top \bfT_{\tau_q}^\top \text{d}\tau \\
    & = \bfT_{\tau_q} \left( \int_{\tau_0}^{\tau_q} \bfPhi (\tau_q,\tau) \bfG_\tau \bfQ_c {\bfG_\tau}^\top {\bfPhi} (\tau_q,\tau)^\top \text{d}\tau \right) \bfT_{\tau_q}^\top \\
    & = \bfT_{\tau_q} \bfQ(\tau_q,\tau_0) \bfT_{\tau_q}^\top.
    \end{split}
\end{equation}
Replacing $\tau_q$ by the corresponding discrete-time instant gives
\begin{equation}
        \bfQ_{k}^* = \bfT(\hat{\xi}_{k|k-1}) \bfQ_{k} \bfT(\hat{\xi}_{k|k-1})^\top,
\end{equation}
which completes the proof.
\end{proof}

\newpage
\subsection{Derivation of Algorithm 2 (Transforming Correction)}
Algorithm 2 is derived by tracking an equivalent covariance $\bfP := \bfT^{-1}\bfP^*\bfT^{-\top}$ instead of tracking T-EqF covariance $\bfP^*$ directly.
Table~\ref{sup:tab:alg_compare} presents the derivation side by side. The left column shows the naive T-EqF implementation (i.e., T-EqF(Naive) in the main manuscript), which propagates $\bfP^*$ using T-EqF Jacobians. The right column derives the corresponding steps of Algorithm 2 by substituting the transformation relationship
\begin{equation}
    \bfP = \bfT^{-1} \bfP^* \bfT^{-\top}
    \label{sup:equ:P_relationship}
\end{equation}
into the corresponding initialization, propagation, and correction steps in the left column. For each step, $\bfT$ is evaluated at the resulting state estimate. The blue annotations in the right column show the intermediate algebra behind this substitution. As a result, Algorithm 2 operates entirely on $\bfP$ using the computationally efficient Jacobians of the auxiliary EqF (e.g., SD-EqF), while maintaining the transformation relationship \eqref{sup:equ:P_relationship} at the end of each step (Lines 2, 4, and 8).

\begin{table}[h]
\centering
\small
\caption{Comparison of the naive implementation and Algorithm 2.}
\label{sup:tab:alg_compare}
\renewcommand{\arraystretch}{1.} 
\begin{tabularx}{\textwidth}{l p{0.34\textwidth} p{0.54\textwidth}}
\toprule
& \textbf{Naive Implementation ($\mathbf{P}^*$)} & \textbf{Transforming Correction ($\mathbf{P}$)} \\ \midrule
&\multicolumn{1}{l}{\textqcr{//Initialization:}}&\multicolumn{1}{l}{\textqcr{//Initialization:}}\\

1& $\hat{\xi}_{0|0} = \hat{\xi}_{0}$;& $\hat{\xi}_{0|0} = \hat{\xi}_{0}$; \\ 
2& \par $\mathbf{P}^*_{0|0} =\mathbf{P}^*_0$;& $\mathbf{P}_{0|0} = \mathbf{T}(\hat{\xi}_{0})^{-1} \mathbf{P}_{0}^* \mathbf{T}(\hat{\xi}_{0})^{-\top}$;\\ \addlinespace\addlinespace

&\multicolumn{1}{l}{\textqcr{//Propagation from $t_{k-1}$ to $t_{k}$:}} &\multicolumn{1}{l}{\textqcr{//Propagation from $t_{k-1}$ to $t_{k}$:}} \\
3& Propagate $\hat{\xi}_{k-1|k-1}$ to $\hat{\xi}_{k|k-1}$; & Propagate $\hat{\xi}_{k-1|k-1}$ to $\hat{\xi}_{k|k-1}$; \\

4& $\bfP_{k|k-1}^* = \bfPhi_{k}^* \bfP_{k-1|k-1}^* {\bfPhi_{k}^*}^\top + \bfQ_{k}^*$; & 
$\mathbf{P}_{k|k-1} = \mathbf{\Phi}_k \mathbf{P}_{k-1|k-1} \mathbf{\Phi}_k^\top + {\mathbf{Q}_k}$; \\
&&
\par \textnote{Given}  \textnote{$\bfP_{k-1|k-1} = \bfT(\hat{\xi}_{k-1|k-1})^{-1} \bfP_{k-1|k-1}^* \bfT(\hat{\xi}_{k-1|k-1})^{-\top}$,} \par
\textnote{$\bfPhi_k = \bfT(\hat{\xi}_{k|k-1})^{-1}\bfPhi^*_k \bfT(\hat{\xi}_{k-1|k-1})$,} \par \textnote{and ${\bfQ}_k = \bfT(\hat{\xi}_{k|k-1})^{-1} \bfQ^*_k \bfT(\hat{\xi}_{k|k-1})^{-\top}$,} \par
\textnote{we have:} \par \textnote{$\bfP_{k|k-1} = \bfT(\hat{\xi}_{k|k-1})^{-1} \bfP_{k|k-1}^* \bfT(\hat{\xi}_{k|k-1})^{-\top}$.} \\ \addlinespace \addlinespace

&\multicolumn{1}{l}{\textqcr{//Correction at $t_k$:}} &\multicolumn{1}{l}{\textqcr{//Correction at $t_k$:}} \\
5& $\bfK^* = \bfP_{k|k-1}^* \bfH_k^{*\top} (\bfH_k^* \bfP_{k|k-1}^* \bfH_k^{*\top} + \bfR)^{-1}$; &  $\bfK = \bfP_{k|k-1} \bfH_k^\top (\bfH_k \bfP_{k|k-1} \bfH_k^\top + \bfR)^{-1}$; \\
6& $\bfP_{k|k}^* = (\bfI - \bfK^* \bfH_k^*) \bfP_{k|k-1}^*$; & $\bfP_{k|k} = (\bfI - \bfK \bfH_k) \bfP_{k|k-1}$; \\
&& \textnote{Given $\bfH_k = \bfH_k^* \bfT(\hat{\xi}_{k|k-1})$, we have:} \par \textnote{$\bfK = \bfT(\hat{\xi}_{k|k-1})^{-1} \bfK^*$,} \par \textnote{and $\bfP_{k|k} = \bfT(\hat{\xi}_{k|k-1})^{-1} \bfP_{k|k}^* \bfT(\hat{\xi}_{k|k-1})^{-\top}$.} \par \textnote{Note that $\bfT$ is still evaluated at $\hat{\xi}_{k|k-1}$.} \\ 

&&\multicolumn{1}{l}{\textqcr{//Transforming Correction at $t_k$:}} \\
7& $\hat{\xi}_{k|k} = \varphi{^*}^{-1}(\bfK^*\tilde{\bfz}_k)|_{\hat{\xi}=\hat{\xi}_{k|k-1}}$. &  $\hat{\xi}_{k|k} = \left. {\varphi^*}^{-1}\big( \mathbf{T} (\hat{\xi}_{k|k-1}) \mathbf{K} \tilde{\mathbf{z}}_{k}\big) \right|_{\hat{\xi} = \hat{\xi}_{k|k-1}}$;\\
8& -- & $\mathbf{P}_{k|k} \leftarrow \mathcal{T} \mathbf{P}_{k|k} \mathcal{T}^\top $ with $\mathcal{T} = \mathbf{T}(\hat{\xi}_{k|k})^{-1} \mathbf{T}(\hat{\xi}_{k|k-1})$. \\

&& \textnote{ After the transformation, we have:} \par \textnote{$\mathbf{T} (\hat{\xi}_{k|k-1}) \mathbf{K} \tilde{\mathbf{z}}_{k} = \mathbf{K}^* \tilde{\mathbf{z}}_{k}$,} \par \textnote{and $\bfP_{k|k} = \bfT(\hat{\xi}_{k|k})^{-1} \bfP_{k|k}^* \bfT(\hat{\xi}_{k|k})^{-\top}$.} \par \textnote{Note that $\bfT$ is evaluated at $\hat{\xi}_{k|k}$ as expected.} \\ \addlinespace\addlinespace
\bottomrule
\end{tabularx}
\end{table}

\subsection{Algorithm 2 When \texorpdfstring{$\phi_{\xi^{\circ}}$}{phi\_xi0} or \texorpdfstring{$\phi_{\xi^{\circ}}^*$}{phi\_xi0*} Is Non-Bijective}
The implementation presented in Table~\ref{sup:tab:alg_compare} follows a standard EKF structure under the assumption that the maps $\phi_{\xi^{\circ}}$ and $\phi_{\xi^{\circ}}^*$ are bijective. In that case, the estimated group elements $\hat{X}$ and $\hat{X}^*$ are uniquely determined by the state estimate $\hat{\xi}$ and therefore do not need to be maintained explicitly. Although this bijectivity holds for the VINS estimators analyzed in this work, it may not hold for other manifolds or applications. When $\phi_{\xi^{\circ}}$ or $\phi_{\xi^{\circ}}^*$ is non-bijective, the group-element estimates must be maintained explicitly.
For completeness, Table~\ref{sup:tab:alg_compare2} compares the naive implementation with Algorithm 2 for the non-bijective case. (Algorithm 1 would also need to be modified in this setting, but the required changes are straightforward and are therefore omitted.)

\begin{table}[h]
\centering
\small
\caption{Comparison of the naive implementation and Algorithm 2 when $\phi_{\xi^{\circ}}$ or $\phi_{\xi^{\circ}}^*$ is non-bijective.}
\label{sup:tab:alg_compare2}
\renewcommand{\arraystretch}{1.} 
\begin{tabularx}{\textwidth}{r p{0.34\textwidth} p{0.54\textwidth}}
\toprule
& \textbf{Naive Implementation ($\mathbf{P}^*$)} & \textbf{Transforming Correction ($\mathbf{P}$)} \\ \midrule
&\multicolumn{1}{l}{\textqcr{//Initialization:}}&\multicolumn{1}{l}{\textqcr{//Initialization:}}\\

1& $\xi^{\circ} = \hat\xi_0$;& $\xi^{\circ} = \hat\xi_0$; \\ 
2&  -- & $\hat{X}_{0|0} = I$; \\ 
3& $\hat{X}^*_{0|0} = I$;& $\hat{X}_{0|0}^* = I$; \\ 
4& \par $\mathbf{P}^*_{0|0} =\mathbf{P}^*_0$;& $\mathbf{P}_{0|0} = \mathbf{T}(\hat{X}_{0},\hat{X}_{0}^*)^{-1} \mathbf{P}_{0}^* \mathbf{T}(\hat{X}_{0},\hat{X}_{0}^*)^{-\top}$;\\ \addlinespace\addlinespace

&\multicolumn{1}{l}{\textqcr{//Propagation from $t_{k-1}$ to $t_{k}$:}} &\multicolumn{1}{l}{\textqcr{//Propagation from $t_{k-1}$ to $t_{k}$:}} \\
5& -- & Propagate $\hat{X}_{k-1|k-1}$ to $\hat{X}_{k|k-1}$; \\
6& Propagate $\hat{X}_{k-1|k-1}^*$ to $\hat{X}_{k|k-1}^*$; & Propagate $\hat{X}_{k-1|k-1}^*$ to $\hat{X}_{k|k-1}^*$; \\

7& $\bfP_{k|k-1}^* = \bfPhi_{k}^* \bfP_{k-1|k-1}^* {\bfPhi_{k}^*}^\top + \bfQ_{k}^*$; & 
$\mathbf{P}_{k|k-1} = \mathbf{\Phi}_k \mathbf{P}_{k-1|k-1} \mathbf{\Phi}_k^\top + \mathbf{Q}_k$;
\\
&& \textnote{As in Table 1, we have:} \par \textnote{$\bfP_{k|k-1} = \bfT(\hat{X}_{k|k-1},\hat{X}_{k|k-1}^*)^{-1} \bfP_{k|k-1}^* \bfT(\hat{X}_{k|k-1},\hat{X}_{k|k-1}^*)^{-\top}$.} \\
\addlinespace\addlinespace

&\multicolumn{1}{l}{\textqcr{//Correction at $t_k$:}} &\multicolumn{1}{l}{\textqcr{//Correction at $t_k$:}} \\
8& $\bfK^* = \bfP_{k|k-1}^* \bfH_k^{*\top} (\bfH_k^* \bfP_{k|k-1}^* \bfH_k^{*\top} + \bfR)^{-1}$; &  $\bfK = \bfP_{k|k-1} \bfH_k^\top (\bfH_k \bfP_{k|k-1} \bfH_k^\top + \bfR)^{-1}$; \\
9& $\bfP_{k|k}^* = (\bfI - \bfK^* \bfH_k^*) \bfP_{k|k-1}^*$; & $\bfP_{k|k} = (\bfI - \bfK \bfH_k) \bfP_{k|k-1}$; \\
&& 
\textnote{As in Table 1, we have:} \par
\textnote{$\bfK = \bfT(\hat{X}_{k|k-1},\hat{X}_{k|k-1}^*)^{-1} \bfK^*$,} \par \textnote{and $\bfP_{k|k} = \bfT(\hat{X}_{k|k-1},\hat{X}_{k|k-1}^*)^{-1} \bfP_{k|k}^* \bfT(\hat{X}_{k|k-1},\hat{X}_{k|k-1}^*)^{-\top}$.} \par \textnote{Note that $\bfT$ is still evaluated at $\hat{X}_{k|k-1}$ and $\hat{X}_{k|k-1}^*$.} \\

&&\multicolumn{1}{l}{\textqcr{//Transforming Correction at $t_k$:}} \\
10& --
&$\Delta = \textrm{D}_{E|I}\phi_{\xi^{\circ }}(E)^{\dagger} \textrm{D}_{\varepsilon|0}\vartheta^{-1}(\varepsilon) \bfK\tilde{\bfz}_k$;
\\
11& $\Delta^* = \textrm{D}_{E|I}\phi_{\xi^{\circ }}^{*} (E)^{\dagger} \textrm{D}_{\varepsilon|0}\vartheta{^*}^{-1}(\varepsilon) \bfK^*\tilde{\bfz}_k$;
&$\Delta^* = \textrm{D}_{E|I}\phi_{\xi^{\circ }}^{*} (E)^{\dagger} \textrm{D}_{\varepsilon|0}\vartheta{^*}^{-1}(\varepsilon) \mathbf{T}(\hat{X}_{k|k-1},\hat{X}_{k|k-1}^* ) \bfK\tilde{\bfz}_k$;
\\
12& -- & $\hat{X}_{k|k} =\exp(\Delta) \hat{X}_{k|k-1}$;  \\
13& $\hat{X}_{k|k}^* =\exp(\Delta^*) \hat{X}_{k|k-1}^*$. & $\hat{X}_{k|k}^* =\exp(\Delta^*) \hat{X}_{k|k-1}^*$;  \\
14& --  &  $\mathcal{T} = \mathbf{T}(\hat{X}_{k|k}, \hat{X}_{k|k}^*)^{-1} \mathbf{T}(\hat{X}_{k|k-1}, \hat{X}_{k|k-1}^*)$; \\
15& -- & $\mathbf{P}_{k|k} \leftarrow \mathcal{T} \mathbf{P}_{k|k} \mathcal{T}^\top $.\\
&& \textnote{After the transformation, we have:} \par
\textnote{$\mathbf{T} (\hat{X}_{k|k-1},\hat{X}_{k|k-1}^*) \mathbf{K} \tilde{\mathbf{z}}_{k} = \mathbf{K}^* \tilde{\mathbf{z}}_{k}$,} \par \textnote{ and $\bfP_{k|k} = \bfT(\hat{X}_{k|k},\hat{X}_{k|k}^*)^{-1} \bfP_{k|k}^* \bfT(\hat{X}_{k|k},\hat{X}_{k|k}^*)^{-\top}$.} \par \textnote{Note that $\bfT$ is evaluated at $\hat{X}_{k|k}$ and $\hat{X}_{k|k}^*$ as expected.} \\ \addlinespace\addlinespace
\bottomrule
\end{tabularx}
\end{table}

As noted in Remark 1 (in the main manuscript), when bijectivity is lost, the transformation matrix $\mathbf{T}$ becomes a function of the group-element estimates $\hat{X}$ and $\hat{X}^*$, i.e., $\mathbf{T} = \mathbf{T}(\hat{X}, \hat{X}^*)$. Consequently, both estimates must be tracked throughout the filtering process:
\begin{itemize}
    \item During the propagation step (Lines 5--6), both $\hat{X}$ and $\hat{X}^*$ are updated via their respective kinematic lifts \cite{vangoorEquivariantFilterEqF2023}.
    \item During the correction step, the group-element corrections $\Delta$ and $\Delta^*$ are computed (Lines 10--11) and then used to update $\hat{X}$ and $\hat{X}^*$ (Lines 12--13) \cite{ge2022equivariant}.
\end{itemize}
Although this formulation requires maintaining two group-element estimates, covariance propagation and correction are still performed only once. Hence, the additional computational overhead remains limited relative to the bijective case.

\section{Transformations Between Representative EqFs}
Due to space limitations, the main manuscript presents only the transformation from ESKF to SD-EqF and omits the detailed derivation. In this section, we provide that derivation in full. We also present transformations involving other representative EqFs, including right-invariant EKF (RI-EKF), left-invariant EKF (LI-EKF), and invariant SD-EqF (ISD-EqF).

\subsection{Transformation from ESKF to SD-EqF}
We first recall Theorem 1 in the main manuscript. Given two EqFs associated with $\varphi$ and $\varphi^*$, their error states are related by the linear transformation
        \begin{equation}
                \bfvarepsilon = \bfT \bfvarepsilon^*,
                \label{sup:equ:relationship}
        \end{equation}
where $\bfT$ is the nonsingular matrix
 \begin{equation}
            \bfT =  \rfD_{\xi|\hat{\xi}} \varphi (\xi) \cdot  \rfD_{\bfvarepsilon^*|\bf0} \varphi{^*}^{-1}(\bfvarepsilon^*).%
        \label{sup:equ:T_general26}%
\end{equation}
The transformation matrix from ESKF to SD-EqF is computed as follows.\footnote{For convenience, we reverse the notation used in the main manuscript: the starred variables $(\bfvarepsilon^*, \varphi^*)$ now refer to ESKF, whereas the unstarred variables $(\bfvarepsilon, \varphi)$ refer to SD-EqF, RI-EKF, LI-EKF, or ISD-EqF.}

	\textbf{Step 1. Compute the global-local maps.}
Let $\varphi$ denote the global-local map of SD-EqF. To derive $\varphi$, we first compute $\phi_{\hat{X}^{-1}}$:
    \begin{align}
        \phi_{\hat{X}^{-1}}(\xi) = \left( 
            \begin{array}{c}
                 \bfA \hat{C}^{-1},\\ \mathbf{Ad}_{\Gamma(\hat{C})}^{\lor}\bfb + \hat{\gamma}^{\lor},\\ \bff-\hat{p}
            \end{array}
        \right).
    \end{align}
Since $\phi_{\xi}^{\circ }$ is bijective, $\hat{X} = (\hat{C},\hat{\gamma},\hat{p})$ is uniquely determined by $\hat{\xi} = (\hat{\bfA}, \hat{\bfb}, \hat{\bff})$. Therefore, the above expression can be rewritten as
\begin{equation}
    \phi_{\hat{X}^{-1}}(\xi) = \left(
        \begin{array}{c}
             \bfA\hat\bfA^{-1},\\ \mathbf{Ad}_{\Gamma(\hat{\bfA})}^{\lor}(\bfb -\hat{\bfb}),\\ \bff-\hat{\bff}
        \end{array}
    \right).
\end{equation}
Given that $\vartheta(e) = (\log(e_{\mathbf{A}})^\lor, e_{\mathbf{b}}, e_{\bff} )$, we have 
\begin{equation}
    \begin{split}
        \varphi(\xi) & = \vartheta \circ \phi_{\hat{X}^{-1}}(\xi) \\
        & = \left(
            \begin{array}{c}
                \log (\bfA\hat\bfA^{-1})^{\lor}, \\
                \mathbf{Ad}_{\Gamma(\hat{\bfA})}^{\lor}(\bfb -\hat{\bfb}), \\
                \bff-\hat{\bff}
            \end{array}
        \right).
    \end{split}
\end{equation}

Let $\varphi^*$ denote the global-local map of ESKF, which coincides with its error-state and is given by
\begin{equation}
        \begin{split}
                \varphi^*(\xi) &=  \left(
                          \begin{array}{c}
                                \log (\hat{\bfR}^{-1} \bfR)^{\lor},
                                \bfv - \hat{\bfv},
                                \bfp - \hat{\bfp}, \\
                                \bfb - \hat{\bfb},\\
                                \bff - \hat{\bff} 
                        \end{array}
                \right),
        \end{split}
\end{equation} 
with 
\begin{equation}
    {\varphi^*}^{-1}(\bfvarepsilon^*) = \left(
        \begin{array}{c}
            \hat{\bfR} \exp([\bfvarepsilon^*_{\mathbf{R}}]_\times), \hat{\bfv} + \bfvarepsilon^*_{\mathbf{v}}, \hat{\bfp} + \bfvarepsilon^*_{\mathbf{p}}, \\
            \hat{\bfb} + \bfvarepsilon^*_{\mathbf{b}},\\
            \hat{\bff} + \bfvarepsilon^*_{\mathbf{f}}
        \end{array}
    \right).
\end{equation}

	\textbf{Step 2. Compute the Jacobians.}
\begin{equation}
\rfD_{\bfvarepsilon^*|\bf0} \varphi{^*}^{-1}(\bfvarepsilon^*)[\bfvarepsilon^*] =  \left(
    \begin{array}{c}
        \hat{\bfR} [\bfvarepsilon^*_{\mathbf{R}}]_\times, 
        \bfvarepsilon^*_{\mathbf{v}}, 
        \bfvarepsilon^*_{\mathbf{p}},\\
        \bfvarepsilon^*_{\mathbf{b}},\\
        \bfvarepsilon^*_{\mathbf{f}}
    \end{array}
\right),
\label{sup:equ:Jacobian_varphi_star_inv}
\end{equation}
\begin{equation}
    \begin{split}
         \rfD_{\xi|\hat{\xi}} \varphi (\xi) [\mu] &= \left(
        \begin{array}{c}
            \big(\mu_\bfA\hat{\bfA}^{-1}\big)^{\lor}, \\
            \mathbf{Ad}_{\Gamma(\hat{\bfA})}^{\lor} \mu_{\bfb}, \\
            \mu_{\bff}
        \end{array}
    \right) \\
    & = \left(
        \begin{array}{c}
            (\mu_\bfR \hat{\bfR}^{-1})^{\lor}, \mu_\bfv - \mu_\bfR \hat{\bfR}^{-1} \hat{\bfv}, \mu_\bfp - \mu_\bfR \hat{\bfR}^{-1} \hat{\bfp},\\
            \mathbf{Ad}_{\Gamma(\hat{\bfA})}^{\lor} \mu_{\bfb}, \\
            \mu_{\bff}
        \end{array}
    \right),
    \end{split}
    \label{sup:equ:Jacobian_varphi}
\end{equation}
where $\mu = (\mu_\bfA, \mu_{\bfb}, \mu_{\bff}) = \big((\mu_\bfR, \mu_\bfv, \mu_\bfp), \mu_{\bfb}, \mu_{\bff}\big)$ belongs to the tangent space of $\mathcal{M}$ at $\hat{\xi}$. Combining \eqref{sup:equ:Jacobian_varphi_star_inv} and \eqref{sup:equ:Jacobian_varphi} yields
\begin{equation}
    \begin{split}
         \bfvarepsilon &=   \rfD_{\xi|\hat{\xi}} \varphi (\xi) \cdot  \rfD_{\bfvarepsilon^*|\bf0} \varphi{^*}^{-1}(\bfvarepsilon^*)[\bfvarepsilon^*]\\ & = \left(
        \begin{array}{c}
            (\hat{\bfR} [\bfvarepsilon^*_{\mathbf{R}}]_\times \hat{\bfR}^{-1})^{\lor}, 
            \bfvarepsilon^*_{\mathbf{v}} - \hat{\bfR} [\bfvarepsilon^*_{\mathbf{R}}]_\times \hat{\bfR}^{-1} \hat{\bfv}, 
            \bfvarepsilon^*_{\mathbf{p}} - \hat{\bfR} [\bfvarepsilon^*_{\mathbf{R}}]_\times \hat{\bfR}^{-1} \hat{\bfp},\\
            \mathbf{Ad}_{\Gamma(\hat{\bfA})}^{\lor} \bfvarepsilon^*_{\mathbf{b}},\\
            \bfvarepsilon^*_{\mathbf{f}}
        \end{array}
    \right)\\
        & = \left(
            \begin{array}{c}
                \hat{\bfR}\bfvarepsilon^*_\bfR, [\hat{\bfv}]_\times\hat{\bfR}\bfvarepsilon^*_\bfR + \bfvarepsilon^*_\bfv, [\hat{\bfp}]_\times\hat{\bfR}\bfvarepsilon^*_\bfR + \bfvarepsilon^*_\bfp,\\
                \mathbf{Ad}_{\Gamma(\hat{\bfA})}^{\lor} \bfvarepsilon^*_{\mathbf{b}},\\
                \bfvarepsilon^*_{\mathbf{f}}
            \end{array}
        \right).
    \end{split}
    \label{sup:equ:Jacobian_product}
\end{equation}

	\textbf{Step 3. Compute the transformation matrix.}
Rewriting \eqref{sup:equ:Jacobian_product} in matrix form yields the transformation matrix from ESKF to SD-EqF:
\begin{equation}
    \underbrace{
    \begin{bmatrix}
        \bfvarepsilon_\bfR \\
        \bfvarepsilon_\bfv \\
        \bfvarepsilon_\bfp \\
        \bfvarepsilon_\bfb \\
        \bfvarepsilon_\bff
    \end{bmatrix}}_{\begin{smallmatrix}
        \text{SD-EqF} \\ \text{error-state} \\ 
    \end{smallmatrix}} = 
    \underbrace{
    \begin{bmatrix}
        \hat{\bfR} & \mathbf{0} & \mathbf{0} & \mathbf{0} & \mathbf{0} \\
        {[\hat{\bfv}]}_\times\hat{\bfR} & \mathbf{I}_3 & \mathbf{0} & \mathbf{0} & \mathbf{0} \\
        {[\hat{\bfp}]}_\times\hat{\bfR} & \mathbf{0} & \mathbf{I}_3 & \mathbf{0} & \mathbf{0} \\
        \mathbf{0} & \mathbf{0} & \mathbf{0} & \mathbf{Ad}_{\Gamma(\hat{\bfA})}^{\lor} & \mathbf{0} \\
        \mathbf{0} & \mathbf{0} & \mathbf{0} & \mathbf{0} & \mathbf{I}_{3m}
    \end{bmatrix}}_{\bfT_{\text{ESKF}}^\text{SD-EqF}} \underbrace{\begin{bmatrix}
        \bfvarepsilon^*_\bfR \\
        \bfvarepsilon^*_\bfv \\
        \bfvarepsilon^*_\bfp \\
        \bfvarepsilon^*_\bfb \\
        \bfvarepsilon^*_\bff
    \end{bmatrix}}_{\begin{smallmatrix}
        \text{ESKF} \\ \text{error-state} \\ 
    \end{smallmatrix}}.
\end{equation}

\subsection{Transformation from ESKF to RI-EKF}
In RI-EKF, the IMU state and the landmark states are defined on $\mathbf{SE}_{2+m}(3)$. The global-local map (i.e., the error-state definition) of RI-EKF is given by
\begin{equation}
    \varphi(\xi) =\begin{pmatrix}
        \log(\bfR \hat\bfR^{-1})^\lor, \bfJ \big( \bfv - \bfR \hat{\bfR}^{-1}\hat{\bfv} \big), \bfJ \big( \bfp - \bfR \hat{\bfR}^{-1}\hat{\bfp} \big),\\
        \bfb - \hat{\bfb},\\
         \bfJ \big( \bff_1 - \bfR \hat{\bfR}^{-1}\hat{\bff}_1 \big), \dots, \bfJ \big( \bff_m - \bfR \hat{\bfR}^{-1}\hat{\bff}_m \big)
    \end{pmatrix},
\end{equation}
with $\bfJ = \bfJ_{l}^{-1}\left( \log(\bfR \hat\bfR^{-1})^\lor \right)$. Its Jacobian is
\begin{equation}
    \begin{split}
         \rfD_{\xi|\hat{\xi}} \varphi (\xi) [\mu] =\left(
        \begin{array}{c}
            (\mu_\bfR \hat{\bfR}^{-1})^{\lor}, \mu_\bfv - \mu_\bfR \hat{\bfR}^{-1} \hat{\bfv}, \mu_\bfp - \mu_\bfR \hat{\bfR}^{-1} \hat{\bfp},\\
             \mu_{\bfb}, \\
            \mu_{\bff_1} - \mu_\bfR \hat{\bfR}^{-1} \hat{\bff}_1, \dots, \mu_{\bff_m} - \mu_\bfR \hat{\bfR}^{-1} \hat{\bff}_m
        \end{array}
    \right).
    \end{split}
    \label{sup:equ:Jacobian_varphi_ri}
\end{equation}

Combining \eqref{sup:equ:Jacobian_varphi_star_inv} and \eqref{sup:equ:Jacobian_varphi_ri} yields
\begin{equation}
    \begin{split}
           \bfvarepsilon &=   \rfD_{\xi|\hat{\xi}} \varphi (\xi) \cdot  \rfD_{\bfvarepsilon^*|\bf0} \varphi{^*}^{-1}(\bfvarepsilon^*)[\bfvarepsilon^*]\\ &= \left(
            \begin{array}{c}
                \hat{\bfR}\bfvarepsilon^*_\bfR, [\hat{\bfv}]_\times\hat{\bfR}\bfvarepsilon^*_\bfR + \bfvarepsilon^*_\bfv, [\hat{\bfp}]_\times\hat{\bfR}\bfvarepsilon^*_\bfR + \bfvarepsilon^*_\bfp,\\
                \bfvarepsilon^*_{\mathbf{b}},\\
                {[\hat{\bff}_1]}_\times\hat{\bfR}\bfvarepsilon^*_\bfR + \bfvarepsilon^*_{\mathbf{f}_1}, \dots, [\hat{\bff}_m]_\times\hat{\bfR}\bfvarepsilon^*_\bfR + \bfvarepsilon^*_{\mathbf{f}_m}
            \end{array}
        \right).
    \end{split}
\end{equation}
Rewriting the above equation in matrix form yields the transformation matrix from ESKF to RI-EKF:
\begin{equation}\underbrace{
    \begin{bmatrix}
        \bfvarepsilon_\bfR \\
        \bfvarepsilon_\bfv \\
        \bfvarepsilon_\bfp \\
        \bfvarepsilon_\bfb \\
        \bfvarepsilon_{\bff_1} \\
        \vdots \\
        \bfvarepsilon_{\bff_m}
    \end{bmatrix}}_{{\begin{smallmatrix}
        \text{RI-EKF} \\ \text{error-state} \\ 
    \end{smallmatrix}}} = 
    \underbrace{\begin{bmatrix}
        \hat{\bfR} & \mathbf{0} & \mathbf{0} & \mathbf{0} & \mathbf{0} & \dots & \mathbf{0} \\
        {[\hat{\bfv}]}_\times\hat{\bfR} & \mathbf{I}_3 & \mathbf{0} & \mathbf{0} & \mathbf{0} & \dots & \mathbf{0} \\
        {[\hat{\bfp}]}_\times\hat{\bfR} & \mathbf{0} & \mathbf{I}_3 & \mathbf{0} & \mathbf{0} & \dots & \mathbf{0} \\
        \mathbf{0} & \mathbf{0} & \mathbf{0} & \mathbf{I}_6 & \mathbf{0} & \dots & \mathbf{0} \\
        {[\hat{\bff}_1]}_\times\hat{\bfR} & \mathbf{0} & \mathbf{0} & \mathbf{0} & \mathbf{I}_3 & \dots & \mathbf{0} \\
        \vdots & \vdots & \vdots & \vdots & \vdots & \ddots & \vdots \\
        {[\hat{\bff}_m]}_\times\hat{\bfR} & \mathbf{0} & \mathbf{0} & \mathbf{0} & \mathbf{0} & \dots & \mathbf{I}_3
    \end{bmatrix}}_{{\bfT_{\text{ESKF}}^\text{RI-EKF}}} 
    \underbrace{
    \begin{bmatrix}
        \bfvarepsilon^*_\bfR \\
        \bfvarepsilon^*_\bfv \\
        \bfvarepsilon^*_\bfp \\
        \bfvarepsilon^*_\bfb \\
        \bfvarepsilon^*_{\bff_1} \\
        \vdots \\
        \bfvarepsilon^*_{\bff_m}
    \end{bmatrix}}_{\begin{smallmatrix}
        \text{ESKF} \\ \text{error-state} \\ 
    \end{smallmatrix}}.
\end{equation}

\subsection{Transformation from ESKF to LI-EKF}
LI-EKF is also defined on $\mathbf{SE}_{2+m}(3)$ but uses a left-invariant error definition:
\begin{equation}
    \varphi(\xi) =\begin{pmatrix}
        \log( \hat\bfR^{-1} \bfR)^\lor, \bfJ \big( \hat\bfR^{-1} (\bfv - \hat{\bfv}) \big), \bfJ \big( \hat\bfR^{-1} (\bfp - \hat{\bfp}) \big),\\
        \bfb - \hat{\bfb},\\
         \bfJ \big( \hat\bfR^{-1} (\bff_1 - \hat{\bff}_1) \big), \dots, \bfJ \big( \hat\bfR^{-1} (\bff_m - \hat{\bff}_m) \big)
    \end{pmatrix},
\end{equation}
with $\bfJ = \bfJ_{l}^{-1}\left( \log(\hat\bfR^{-1} \bfR)^\lor \right)$. Its Jacobian is
\begin{equation}
    \begin{split}
         \rfD_{\xi|\hat{\xi}} \varphi (\xi) [\mu] =\left(
        \begin{array}{c}
            (\hat{\bfR}^{-1}\mu_\bfR )^{\lor}, \hat{\bfR}^{-1} \mu_\bfv, \hat{\bfR}^{-1} \mu_\bfp ,\\
             \mu_{\bfb}, \\
            \hat{\bfR}^{-1} \mu_{\bff_1}, \dots, \hat{\bfR}^{-1} \mu_{\bff_m},
        \end{array}
    \right).
    \end{split}
    \label{sup:equ:Jacobian_varphi_li}
\end{equation}
Combining \eqref{sup:equ:Jacobian_varphi_star_inv} and \eqref{sup:equ:Jacobian_varphi_li} yields
\begin{equation}
    \begin{split}
           \bfvarepsilon &=   \rfD_{\xi|\hat{\xi}} \varphi (\xi) \cdot  \rfD_{\bfvarepsilon^*|\bf0} \varphi{^*}^{-1}(\bfvarepsilon^*)[\bfvarepsilon^*]\\ &= \left(
            \begin{array}{c}
                \bfvarepsilon^*_\bfR, \hat{\bfR}^{-1}\bfvarepsilon^*_\bfv, \hat{\bfR}^{-1}\bfvarepsilon^*_\bfp,\\
                \bfvarepsilon^*_{\mathbf{b}},\\
                \hat{\bfR}^{-1} \bfvarepsilon^*_{\mathbf{f}_1}, \dots, \hat{\bfR}^{-1} \bfvarepsilon^*_{\mathbf{f}_m}
            \end{array}
        \right).
    \end{split} 
\end{equation}
Rewriting the above equation in matrix form yields the transformation matrix from ESKF to LI-EKF:
\begin{equation}\underbrace{
    \begin{bmatrix}
        \bfvarepsilon_\bfR \\
        \bfvarepsilon_\bfv \\
        \bfvarepsilon_\bfp \\
        \bfvarepsilon_\bfb \\
        \bfvarepsilon_{\bff_1} \\
        \vdots \\
        \bfvarepsilon_{\bff_m}  
    \end{bmatrix}}_{{\begin{smallmatrix}
        \text{LI-EKF} \\ \text{error-state} \\
    \end{smallmatrix}}} = 
    \underbrace{\begin{bmatrix}
        \mathbf{I}_3 & \mathbf{0} & \mathbf{0} & \mathbf{0} & \mathbf{0} & \dots & \mathbf{0} \\
        \mathbf{0} & \hat{\bfR}^{-1} & \mathbf{0} & \mathbf{0} & \mathbf{0} & \dots & \mathbf{0} \\
        \mathbf{0} & \mathbf{0} & \hat{\bfR}^{-1} & \mathbf{0} & \mathbf{0} & \dots & \mathbf{0} \\
        \mathbf{0} & \mathbf{0} & \mathbf{0} & \mathbf{I}_6 & \mathbf{0} & \dots & \mathbf{0} \\
        \mathbf{0} & \mathbf{0} & \mathbf{0} & \mathbf{0} & \hat{\bfR}^{-1} & \dots & \mathbf{0} \\
        \vdots & \vdots & \vdots & \vdots & \vdots & \ddots & \vdots \\
        \mathbf{0} & \mathbf{0} & \mathbf{0} & \mathbf{0} & \mathbf{0} & \dots & \hat{\bfR}^{-1}
    \end{bmatrix}}_{{\bfT_{\text{ESKF}}^\text{LI-EKF}}}
      \underbrace{\begin{bmatrix}
        \bfvarepsilon^*_\bfR \\
        \bfvarepsilon^*_\bfv \\
        \bfvarepsilon^*_\bfp \\
        \bfvarepsilon^*_\bfb \\
        \bfvarepsilon^*_{\bff_1} \\
        \vdots \\
        \bfvarepsilon^*_{\bff_m}
    \end{bmatrix}}_{\begin{smallmatrix}
        \text{ESKF} \\ \text{error-state} \\
    \end{smallmatrix}}.   
\end{equation}  

\subsection{Transformation from ESKF to ISD-EqF}
ISD-EqF is based on the invariant semi-direct bias group $\mathbf{G} =  \mathbf{SE}_{2+m}(3)\ltimes \mathfrak{se}(3)$, which combines the right-invariant error definition for the landmark states with SD-EqF error definition for the IMU states (including the biases).
Denote an element of $\mathbf{G}$ by $X = (B,\gamma)$, where $B = (R,a,b,p_1,\dots,p_m) \in \mathbf{SE}_{2+m}(3)$ and $\gamma \in \mathfrak{se}(3)$. The group multiplication is given by 
\begin{equation}
    X_1 X_2 = \left( B_1 B_2, \gamma_1 + \mathbf{Ad}_{\Gamma ({B_1})}(\gamma_2) \right), 
\end{equation}
where $\Gamma(B) = (R,a) \in \mathbf{SE}(3)$.
To define the group action, we first regroup the system state as follows: $\xi = (\bfD, \bfb)$, where $\bfD = (\bfR,\bfv,\bfp, \bff_1,\dots,\bff_m) \in \mathbf{SE}_{2+m}(3)$ and $\bfb = (\bfb_{\omega},\bfb_a) \in \mathbb{R}^6$.
The group action on the state space $\mathcal{M}$ is given by
\begin{equation}
    \phi_X(\xi) = \left(
        \begin{array}{c}
             \bfD B, \\ \mathbf{Ad}_{\Gamma(\bfD)^{-1}}^{\lor}(\bfb - \gamma^{\lor})\\
        \end{array}
    \right).
\end{equation}
The local coordinate is given by
\begin{equation}
\vartheta(e) = \underbrace{ \begin{bmatrix}
    \bfI_9 & \mathbf{0} &\mathbf{0} \\ 
    \mathbf{0} & \mathbf{0} & \mathbf{I}_6 \\
    \mathbf{0} & \mathbf{I}_{3m} & \mathbf{0}
\end{bmatrix}}_{\mathsf{P}} \begin{pmatrix} \log(e_{\mathbf{B}})^\lor \\ e_{\mathbf{b}} \end{pmatrix},
\end{equation}
where $\mathsf{P}$ is a permutation matrix used to rearrange the components of the error state. The global-local map of ISD-EqF is then
\begin{equation}
    \begin{split}
        \varphi(\xi) &= \vartheta \circ \phi_{\hat{X}^{-1}}(\xi) \\
        &= \mathsf{P} \left(
            \begin{array}{c}
                \log (\bfD\hat\bfD^{-1})^{\lor}, \\
                \mathbf{Ad}_{\Gamma(\hat{\bfD})}^{\lor}(\bfb -\hat{\bfb}) \\
            \end{array}
        \right).
    \end{split}
\end{equation}
The Jacobian $\rfD_{\xi|\hat{\xi}} \varphi (\xi)$ is given by
\begin{equation}
    \begin{split}
         \rfD_{\xi|\hat{\xi}} \varphi (\xi) [\mu] & = \mathsf{P} \left(
        \begin{array}{c}
            (\mu_\bfD \hat{\bfD}^{-1})^{\lor}, \\
            \mathbf{Ad}_{\Gamma(\hat{\bfD})}^{\lor} \mu_{\bfb}
        \end{array}
    \right) \\
    & =  \left(
        \begin{array}{c}
            (\mu_\bfR \hat{\bfR}^{-1})^{\lor}, \mu_\bfv - \mu_\bfR \hat{\bfR}^{-1} \hat{\bfv}, \mu_\bfp - \mu_\bfR \hat{\bfR}^{-1} \hat{\bfp},\\
            \mathbf{Ad}_{\Gamma(\hat{\bfD})}
                ^{\lor} \mu_{\bfb},\\
            \mu_{\bff_1} - \mu_\bfR \hat{\bfR}^{-1} \hat{\bff}_1, \dots, \mu_{\bff_m} - \mu_\bfR \hat{\bfR}^{-1} \hat{\bff}_m\\
        \end{array}
    \right).
    \end{split}
    \label{sup:equ:Jacobian_varphi_isd}
\end{equation}
Combining \eqref{sup:equ:Jacobian_varphi_star_inv} and \eqref{sup:equ:Jacobian_varphi_isd} yields
\begin{equation}
    \begin{split}
           \bfvarepsilon &=   \rfD_{\xi|\hat{\xi}} \varphi (\xi) \cdot  \rfD_{\bfvarepsilon^*|\bf0} \varphi{^*}^{-1}(\bfvarepsilon^*)[\bfvarepsilon^*]\\ &= \left(
            \begin{array}{c}
                \hat{\bfR}\bfvarepsilon^*_\bfR, [\hat{\bfv}]_\times\hat{\bfR}\bfvarepsilon^*_\bfR + \bfvarepsilon^*_\bfv, [\hat{\bfp}]_\times\hat{\bfR}\bfvarepsilon^*_\bfR + \bfvarepsilon^*_\bfp,\\
                \mathbf{Ad}_{\Gamma(\hat{\bfD})}^{\lor} \bfvarepsilon^*_{\mathbf{b}},\\
                {[\hat{\bff}_1]}_\times\hat{\bfR}\bfvarepsilon^*_\bfR + \bfvarepsilon^*_{\mathbf{f}_1}, \dots, [\hat{\bff}_m]_\times\hat{\bfR}\bfvarepsilon^*_\bfR + \bfvarepsilon^*_{\mathbf{f}_m}
            \end{array}
        \right).
    \end{split}
\end{equation}
Rewriting the above equation in matrix form yields the transformation matrix from ESKF to ISD-EqF:
\begin{equation}\underbrace{
    \begin{bmatrix}
        \bfvarepsilon_\bfR \\
        \bfvarepsilon_\bfv \\
        \bfvarepsilon_\bfp \\
        \bfvarepsilon_\bfb \\
        \bfvarepsilon_{\bff_1} \\
        \vdots \\
        \bfvarepsilon_{\bff_m}
    \end{bmatrix}}_{{\begin{smallmatrix}
        \text{ISD-EqF} \\ \text{error-state} \\
    \end{smallmatrix}}} = 
    \underbrace{\begin{bmatrix}
        \hat{\bfR} & \mathbf{0} & \mathbf{0} & \mathbf{0} & \mathbf{0} & \dots & \mathbf{0} \\
        {[\hat{\bfv}]}_\times\hat{\bfR} & \mathbf{I}_3 & \mathbf{0} & \mathbf{0} & \mathbf{0} & \dots & \mathbf{0} \\
        {[\hat{\bfp}]}_\times\hat{\bfR} & \mathbf{0} & \mathbf{I}_3 & \mathbf{0} & \mathbf{0} & \dots & \mathbf{0} \\
        \mathbf{0} & \mathbf{0} & \mathbf{0} & \mathbf{Ad}_{\Gamma(\hat{\bfD})}^{\lor} & \mathbf{0} & \dots & \mathbf{0} \\
        {[\hat{\bff}_1]}_\times\hat{\bfR} & \mathbf{0} & \mathbf{0} & \mathbf{0} & \mathbf{I}_3 & \dots & \mathbf{0} \\
        \vdots & \vdots & \vdots & \vdots & \vdots & \ddots & \vdots \\
        {[\hat{\bff}_m]}_\times\hat{\bfR} & \mathbf{0} & \mathbf{0} & \mathbf{0} & \mathbf{0} & \dots & \mathbf{I}_3
    \end{bmatrix}}_{{\bfT_{\text{ESKF}}^\text{ISD-EqF}}}
      \underbrace{\begin{bmatrix}
        \bfvarepsilon^*_\bfR \\
        \bfvarepsilon^*_\bfv \\
        \bfvarepsilon^*_\bfp \\
        \bfvarepsilon^*_\bfb \\
        \bfvarepsilon^*_{\bff_1} \\
        \vdots \\
        \bfvarepsilon^*_{\bff_m}
    \end{bmatrix}}_{\begin{smallmatrix}
        \text{ESKF} \\ \text{error-state} \\
    \end{smallmatrix}}.   
\end{equation}

\subsection{Transformation from SD-EqF to ISD-EqF}
By transitivity, the transformation from SD-EqF to ISD-EqF is obtained by combining their respective transformations from ESKF:
\begin{equation}
    \begin{split}
    \bfT_{\text{SD-EqF}}^\text{ISD-EqF} &= \bfT_{\text{ESKF}}^\text{ISD-EqF}  \left( \bfT_{\text{ESKF}}^\text{SD-EqF} \right)^{-1} \\
        & = \begin{bmatrix}
        \mathbf{I}_3  & \mathbf{0} & \mathbf{0} & \dots & \mathbf{0} \\
        \mathbf{0}  & \mathbf{I}_{12} & \mathbf{0} & \dots & \mathbf{0} \\
        {[\hat{\bff}_1]_\times} & \mathbf{0} & \mathbf{I}_3 & \dots & \mathbf{0} \\
        \vdots & \vdots  & \vdots & \ddots & \vdots \\
        {[\hat{\bff}_m]_\times}  & \mathbf{0} & \mathbf{0} & \dots & \mathbf{I}_3   
        \end{bmatrix}.
    \end{split}
\end{equation}

\subsection{Transformation from ISD-EqF to T-EqF}
The transformation from SD-EqF to ISD-EqF is identical to that from SD-EqF to T-EqF (Eq. (29) in the main manuscript), i.e.,
\begin{equation}
    \bfT_{\text{SD-EqF}}^\text{ISD-EqF} = \bfT_{\text{SD-EqF}}^\text{T-EqF}.
\end{equation}
Therefore, the transformation from ISD-EqF to T-EqF is
\begin{equation}
    \bfT_\text{ISD-EqF}^\text{T-EqF} =  \bfT_{\text{SD-EqF}}^\text{T-EqF} \left( \bfT_{\text{SD-EqF}}^\text{ISD-EqF} \right)^{-1} = \mathbf{I},
\end{equation}
which indicates that T-EqF and ISD-EqF share the same continuous- and discrete-time Jacobians and the same observability properties.

\section{Unobservable Subspaces of Representative EqFs}
Corollary 2 allows the unobservable subspace of one EqF to be derived from that of another through the transformation, without explicitly constructing the observability matrix. In this section, we derive the unobservable subspaces of SD-EqF, RI-EKF, LI-EKF, ISD-EqF, and T-EqF from that of ESKF, and then analyze the consistency of these filters through the resulting subspaces.

\subsection{Unobservable Subspace of ESKF}
The unobservable subspace of ESKF has been widely studied in the literature \cite{genevaOpenVINSResearchPlatform2020} and is given by
 \begin{equation}
            \spancol{
            \setlength{\arraycolsep}{3pt}
            \mathbf{N}_\text{ESKF}}  =\spancol{ \left[\begin{array}{cc}
            \bfZo_{3\times 3}& -{\hat{\bfR}}^\top \bfg \\
            \bfZo_{3\times 3} & [{\hat{\bfv}}]_\times \bfg\\
            \bfI_3 & [{\hat{\bfp}}]_\times \bfg\\
            \bfZo_{6\times 3} & \bfZo_{6\times 1}\\
            \bfI_3 & [{\hat{\bff}_{1}}]_\times \bfg\\
            \vdots & \vdots \\
            \bfI_3 & [{\hat\bff_{m}}]_\times \bfg\\
            \end{array}\right]},
            \label{sup:equ:unobservable_subspace_eskf}
\end{equation}
where the first three columns correspond to the unobservable directions of global position, and the last column, which depends on the state estimates, corresponds to the unobservable direction associated with global yaw.
Because the last column of the unobservable subspace depends on the state estimates, it may spuriously become observable during estimation, leading to overconfidence.

\subsection{Unobservable Subspace of SD-EqF}
According to Corollary 2, the unobservable subspace of SD-EqF can be obtained by transforming the unobservable subspace of ESKF using the transformation matrix from ESKF to SD-EqF:
\begin{equation}
    \begin{split}
        \spancol{\mathbf{N}_{\text{SD-EqF}}} & = \spancol{\bfT_{\text{ESKF}}^\text{SD-EqF} \mathbf{N}_\text{ESKF} }\\
        & = \spancol{\left[\begin{array}{cc}
        \bfZo_{3\times 3} & -\bfg\\
        \bfZo_{3\times 3}& \bfZo_{3\times1} \\    
        \bfI_3 &  \bfZo_{3\times 1}\\
        \bfZo_{6\times 3} & \bfZo_{6\times 1}\\
         \bfI_3 & [{\hat{\bff}_{1}}]_\times \bfg\\
            \vdots & \vdots \\
        \bfI_3 & [{\hat\bff_{m}}]_\times \bfg\\
        \end{array}\right]}.
    \end{split}
\end{equation}
As in ESKF, the last column of the unobservable subspace of SD-EqF also depends on the state estimates. It may therefore become spuriously observable during filtering, which leads to inconsistency.

\subsection{Unobservable Subspace of RI-EKF}
The unobservable subspace of RI-EKF can be obtained by transforming the unobservable subspace of ESKF using the transformation matrix from ESKF to RI-EKF:
\begin{equation}
    \begin{split}
        \spancol{\mathbf{N}_{\text{RI-EKF}}} & = \spancol{\bfT_{\text{ESKF}}^\text{RI-EKF} \mathbf{N}_\text{ESKF} }\\
        & = \spancol{\left[\begin{array}{cc}
        \bfZo_{3\times 3} & - \bfg\\
        \bfZo_{3\times 3}& \bfZo_{3\times1} \\    
        \bfI_3 &  \bfZo_{3\times 1}\\
        \bfZo_{6\times 3} & \bfZo_{6\times 1}\\
        \bfI_3 &  \bfZo_{3\times 1}\\
        \vdots & \vdots \\
        \bfI_3 &  \bfZo_{3\times 1}\\
        \end{array}\right]}.
    \end{split}
\end{equation}
Notably, the unobservable subspace of RI-EKF does not depend on the state estimates and therefore does not exhibit this inconsistency.

\subsection{Unobservable Subspace of LI-EKF}
The unobservable subspace of LI-EKF can be obtained by transforming the unobservable subspace of ESKF using the transformation matrix from ESKF to LI-EKF:
\begin{subequations}
    \begin{align}
           \spancol{\mathbf{N}_{\text{LI-EKF}}} & = \spancol{\bfT_{\text{ESKF}}^\text{LI-EKF} \mathbf{N}_\text{ESKF} }\\
        & = \spancol{ \left[\begin{array}{cc}
            \bfZo_{3\times 3}& -{\hat{\bfR}}^\top \bfg \\
            \bfZo_{3\times 3} & {\hat{\bfR}}^\top[{\hat{\bfv}}]_\times \bfg\\
            {\hat{\bfR}}^\top& {\hat{\bfR}}^\top[{\hat{\bfp}}]_\times \bfg\\
            \bfZo_{6\times 3} & \bfZo_{6\times 1}\\
            {\hat{\bfR}}^\top & {\hat{\bfR}}^\top[{\hat{\bff}_{1}}]_\times \bfg\\
            \vdots & \vdots \\
            {\hat{\bfR}}^\top & {\hat{\bfR}}^\top[{\hat\bff_{m}}]_\times \bfg\\
            \end{array}\right]} \\
                & = \spancol{ \left[\begin{array}{cc}
            \bfZo_{3\times 3}& -{\hat{\bfR}}^\top\bfg \\
            \bfZo_{3\times 3} & {\hat{\bfR}}^\top[\hat{\bfv}]_\times \bfg\\
            \bfI_3& {\hat{\bfR}}^\top[\hat{\bfp}]_\times \bfg\\
            \bfZo_{6\times 3} & \bfZo_{6\times 1}\\
            \bfI_3 & {\hat{\bfR}}^\top[{\hat{\bff}_{1}}]_\times \bfg\\
            \vdots & \vdots \\
            \bfI_3 & {\hat{\bfR}}^\top [{\hat\bff_{m}}]_\times \bfg\\
            \end{array}\right]}. \label{sup:equ:unobservable_subspace_li2}
    \end{align}
\end{subequations}
Applying column operations to a basis matrix does not change the spanned subspace. In \eqref{sup:equ:unobservable_subspace_li2}, such operations are applied to the first three columns to make them independent of the state estimates. However, no analogous column operation can eliminate the state dependence of the last column. Therefore, the unobservable subspace of LI-EKF also depends on the state estimates, which may again become spuriously observable during filtering and lead to inconsistency.
\subsection{Unobservable Subspaces of ISD-EqF and T-EqF}
Since the transformation from ISD-EqF to T-EqF is an identity transformation, ISD-EqF and T-EqF share the same unobservable subspace. Their unobservable subspace can be obtained by transforming the unobservable subspace of ESKF using the transformation matrix from ESKF to ISD-EqF:
\begin{equation}
    \begin{split}
    \spancol{\mathbf{N}_{\text{T-EqF}}} = \spancol{\mathbf{N}_{\text{ISD-EqF}}} &= \spancol{\bfT_{\text{ESKF}}^\text{ISD-EqF} \mathbf{N}_\text{ESKF} } \\
    & = \spancol{\left[\begin{array}{cc}
        \bfZo_{3\times 3} & -\bfg\\
        \bfZo_{3\times 3}& \bfZo_{3\times1} \\    
        \bfI_3 &  \bfZo_{3\times 1}\\
        \bfZo_{6\times 3} & \bfZo_{6\times 1}\\
         \bfI_3 & \bfZo_{3\times 1}\\
            \vdots & \vdots \\
        \bfI_3 & \bfZo_{3\times 1}\\
        \end{array}\right]}.
    \end{split}
\end{equation}
Because the unobservable subspaces of ISD-EqF and T-EqF are state-independent, these filters do not exhibit this inconsistency.

\section{Propagation Jacobians}
In the main manuscript, we compared five types of EqFs: ESKF, SD-EqF, ISD-EqF, {SOT-EqF}, and T-EqF.
Due to space constraints, the explicit forms of the continuous-time Jacobians $\mathbf{F}$ and $\mathbf{G}$, the state transition matrix $\mathbf{\Phi}(\tau_{i+1}, \tau_i)$, and the discrete-time noise covariance $\mathbf{Q}(\tau_{i+1}, \tau_i)$ were omitted from the main manuscript. However, these expressions are essential both for practical implementation and for identifying the computational bottlenecks in covariance propagation. We therefore provide the detailed derivations here. The Jacobians of RI-EKF and LI-EKF can be derived analogously and are omitted because these filters are not discussed in the main manuscript.

Let $\bfvarepsilon$ denote the error state of an EqF, with dynamics given by
\begin{equation}
    \dot \bfvarepsilon = \bfF \bfvarepsilon + \bfG \bfn,
\end{equation}
where $\bfF \in \mathbb{R}^{N\times N}$ and $\bfG \in \mathbb{R}^{N\times 12}$ ($N=15+3m$) are the state-propagation and noise-propagation Jacobians, respectively. The Gaussian white noise is
\begin{equation}
\bfn = [\bfn_{\omega}^\top, \bfn_a^\top, \bfn_{\text{w}\omega}^\top, \bfn_{\text{w}a}^\top]^\top,
\qquad
\bfQ_c = \operatorname{diag}(\sigma_\omega^2 \bfI_3, \sigma_a^2 \bfI_3, \sigma_{\text{w}\omega}^2 \bfI_3, \sigma_{\text{w}a}^2 \bfI_3),
\end{equation}
where $\bfQ_c$ is its covariance.
In the estimator, this continuous-time error-state model is discretized to propagate uncertainty, yielding
\begin{equation}
    \bfP_{k|k-1} = \bfPhi_{k} \bfP_{k-1|k-1} \bfPhi^\top_{k} + \bfQ_k,
\end{equation}
where the state transition matrix $\bfPhi_k \triangleq \bfPhi(\tau_{q},\tau_0)$ and the accumulated noise matrix $\bfQ_k \triangleq \bfQ(\tau_{q},\tau_0)$ are computed iteratively over $q$ sub-intervals:
\begin{align}
& \bfPhi(\tau_{i+1},\tau_{0}) = \bfPhi(\tau_{i+1},\tau_{i}) \bfPhi(\tau_{i},\tau_{0}), \\
& \begin{aligned}
\bfQ(\tau_{i+1},\tau_0) = & \ \bfPhi(\tau_{i+1},\tau_{i}) \bfQ(\tau_{i},\tau_0) \bfPhi(\tau_{i+1},\tau_{i})^\top \\
& +\bfQ(\tau_{i+1},\tau_i).
\end{aligned}
\end{align}
$\bfPhi(\tau_{i+1},\tau_{i})$ is obtained by solving the linear differential equation
\begin{equation}
       \frac{\text{d}}{\text{d}\tau}{\bfPhi} (\tau,\tau_i)  = \bfF_\tau \bfPhi (\tau,\tau_i), \quad \bfPhi (\tau_i,\tau_i) = \bfI,
        \label{sup:equ:Phi_differential_star}
\end{equation}
and the noise covariance over each sub-interval is given by
\begin{equation}
        \bfQ(\tau_{i+1},\tau_i) = \int_{\tau_i}^{\tau_{i+1}} \bfPhi (\tau_{i+1},\tau) \bfG_\tau \bfQ_c {\bfG_\tau}^\top {\bfPhi} (\tau_{i+1},\tau)^\top \text{d}\tau. 
        \label{sup:equ:Q_differential_star}
\end{equation}

\subsection{Jacobians of ESKF}
The continuous-time Jacobians of ESKF have been widely studied \cite{mourikisMultiStateConstraintKalman2007,genevaOpenVINSResearchPlatform2020} and are given by
\begin{align}
            \bfF &= \left[
    \begin{NiceArray}{ccccc|cccc}[
        margin, 
        nullify-dots, 
    ]
             -[\bfomega_m - \hat{\bfb}_{\omega}]_\times & \bfZo & \bfZo & -\bfI & \bfZo   &\Block{5-4}{\scaleF{\bfZo}_{15\times3m}}\\
                -\hat{\bfR}[\bfa_m - \hat{\bfb}_{a}]_\times & \bfZo & \bfZo & \bfZo & -\hat{\bfR}&&&&\\
                 \bfZo  & \bfI & \bfZo & \bfZo & \bfZo  &&&& \\
                \bfZo & \bfZo & \bfZo & \bfZo & \bfZo  &&&&\\
                \bfZo & \bfZo & \bfZo & \bfZo & \bfZo &&&&\\
           \hline
        \Block{3-5}{\scaleF{\bfZo}_{3m\times 15}} &&&& & \Block{3-4}{\scaleF{\bfZo}_{3m\times 3m}} \\
         &  &  &  &  & &&\\ 
         &  &  &  &  & &&&\\ 
    \end{NiceArray}\right], \\
    \bfG &= 
    \left[
                \begin{NiceArray}{cccc}[
        margin, 
        nullify-dots, 
    ]
                        -\bfI & \bfZo & \bfZo & \bfZo  \\
                       \bfZo & -\hat{\bfR}  & \bfZo & \bfZo  \\
                        \bfZo   & \bfZo & \bfZo  & \bfZo \\
                        \bfZo & \bfZo  & \bfI & \bfZo \\
                        \bfZo & \bfZo &\bfZo & \bfI \\
                        \hline
                        \Block{3-4}{\scaleF{\bfZo}_{3m\times 12}}   \\
                        \\
                        \\
         \end{NiceArray}\right].
\end{align}

Using the IMU integration theory \cite{yangAnalyticCombinedIMU2020}, the discrete-time Jacobians of ESKF can be computed as follows:
\begin{align}
    \bfPhi(\tau_{i+1},\tau_i) & = \left[
        \begin{NiceArray}{ccccc|cccc}[margin, nullify-dots]
          \hat{\bfR}_{\tau_{i+1}}^\top \hat{\bfR}_{\tau_{i}} &\bfZo &\bfZo & - \hat{\bfR}_{\tau_{i+1}}^\top \hat{\bfR}_{\tau_{i}}  \bfJ_l(\Delta \bftheta) \Delta \tau &\bfZo &\Block{5-4}{{\bfZo}_{15\times3m}} &&&\\
          -\hat{\bfR}_{\tau_{i}} [\Delta \bfv]_\times &\bfI &\bfZo & \hat{\bfR}_{\tau_{i}} \boldsymbol{\Xi}_3  & - \hat{\bfR}_{\tau_{i}} \boldsymbol{\Xi}_1 &&&&\\
          -\hat{\bfR}_{\tau_{i}} [\Delta \bfp]_\times &\bfI \Delta \tau  &\bfI & \hat{\bfR}_{\tau_{i}} \boldsymbol{\Xi}_4  & - \hat{\bfR}_{\tau_{i}} \boldsymbol{\Xi}_2 &&&&\\
            \bfZo &\bfZo &\bfZo & \bfI & \bfZo &&&&\\
            \bfZo &\bfZo &\bfZo & \bfZo & \bfI &&&&\\
          \hline 
    \Block{3-5}{\bfZo_{3m\times 15}} &&&& &\Block{3-4}{\bfI_{3m}} \\
       &  &  &  &  & &&\\ 
         &  &  &  &  & &&&\\ 
        \end{NiceArray}
    \right], \\
    \bfQ (\tau_{i+1},\tau_i) & = \bfG_{d}(\tau_{i+1},\tau_i) \bfQ_d{\bfG_{d}(\tau_{i+1},\tau_i)}^\top, \\
    \bfG_d(\tau_{i+1},\tau_i) & = \left[
        \begin{NiceArray}{cccc}[margin, nullify-dots]
             - \hat{\bfR}_{\tau_{i+1}}^\top \hat{\bfR}_{\tau_{i}}  \bfJ_l(\Delta \bftheta) \Delta \tau &\bfZo &\bfZo  &\bfZo \\
             \hat{\bfR}_{\tau_{i}} \boldsymbol{\Xi}_3  &- \hat{\bfR}_{\tau_{i}} \boldsymbol{\Xi}_1 &\bfZo  &\bfZo \\
             \hat{\bfR}_{\tau_{i}} \boldsymbol{\Xi}_4  &- \hat{\bfR}_{\tau_{i}} \boldsymbol{\Xi}_2 &\bfZo  &\bfZo \\
             \bfZo &\bfZo & \bfI \Delta \tau & \bfZo \\
             \bfZo &\bfZo & \bfZo & \bfI \Delta \tau \\
            \hline
    \Block{3-4}{\bfZo_{3m\times 12}} \\
       &  &  &  \\ 
         &  &  &  \\ 
        \end{NiceArray}
    \right], \\
\end{align}
where 
\begin{align}
      \Delta \bfv & = \hat\bfR ^\top_{\tau_i} (\hat{\bfv}_{\tau_{i+1}} - \hat{\bfv}_{\tau_i} + \bfg \Delta \tau), \\
    \Delta \bfp & = \hat\bfR ^\top_{\tau_i} (\hat{\bfp}_{\tau_{i+1}} - \hat{\bfp}_{\tau_i} - \hat{\bfv}_{\tau_i} \Delta \tau + \frac{1}{2} \bfg \Delta \tau^2), \\
    \bfQ_d & = \frac{1}{\Delta \tau} \bfQ_c.
\end{align}
$\boldsymbol{\Xi}_1$, $\boldsymbol{\Xi}_2$, $\boldsymbol{\Xi}_3$, and $\boldsymbol{\Xi}_4$ are given by \cite{yangAnalyticCombinedIMU2020}.
As the above expressions show, the transition matrix $\bfPhi(\tau_{i+1},\tau_i)$ and the noise matrix $\bfQ(\tau_{i+1},\tau_i)$ of ESKF are both block diagonal, with nontrivial entries confined to their top-left $15\times 15$ submatrices.

\subsection{Jacobians of SD-EqF}
According to Corollary 1, the continuous-time Jacobians of SD-EqF can be obtained by applying the transformation matrix $\bfT_{\text{ESKF}}^\text{SD-EqF}$ to ESKF Jacobians. The explicit forms of $\bfF$ and $\bfG$ for SD-EqF are
\begin{align}
            \bfF &= \left[
    \begin{NiceArray}{ccccc|cccc}[
        margin, 
        nullify-dots, 
    ]
             \bfZo & \bfZo & \bfZo  &-\bfI &\bfZo&\Block{5-4}{\scaleF{\bfZo}_{15\times3m}}\\
                {[\bfg]_\times} & \bfZo & \bfZo & \bfZo & \bfZo &&&&\\
                \bfZo & \bfI_3 & \bfZo & -[\hat{\bfp}]_\times &\bfZo&&&& \\
                \bfZo &\bfZo &\bfZo & [\hat{\bfR}(\bfomega_m- \hat{\bfb}_{\omega})]_\times & \bfZo &&&&\\
                \bfZo & \bfZo & \bfZo & [\hat{\bfR}(\bfa_m- \hat{\bfb}_{a})+ [\hat{\bfv}]_\times \hat{\bfR}(\bfomega_m- \hat{\bfb}_{\omega}) +\bfg]_\times & [\hat{\bfR}(\bfomega_m- \hat{\bfb}_{\omega})]_\times&&&&\\
        \hline
     \Block{3-5}{\bfZo_{3m\times 15}} &&&& &\Block{3-4}{\bfZo_{3m\times 3m}} \\
       &  &  &  &  & &&\\ 
         &  &  &  &  & &&&\\ 
    \end{NiceArray}\right], \\
    \bfG &= 
    \left[
                \begin{NiceArray}{cccc}[margin, nullify-dots]
                        -\hat{\bfR} & \bfZo & \bfZo & \bfZo  \\
                        -{[\hat{\bfv}]_\times} \hat{\bfR}  & -\hat{\bfR}  & \bfZo & \bfZo  \\
                        -{[\hat{\bfp}]_\times} \hat{\bfR}   & \bfZo & \bfZo  & \bfZo \\
                        \bfZo & \bfZo  & \hat{\bfR} & \bfZo \\
                        \bfZo & \bfZo & {[\hat{\bfv}]_\times} \hat{\bfR} & \hat{\bfR}  \\
                        \hline 
         \Block{3-4}{\scaleF{\bfZo}_{3m\times 12}}   \\
                        \\
                        \\
                \end{NiceArray}
        \right].
\end{align}

According to Corollary 3, the discrete-time Jacobians of SD-EqF can be obtained by applying the transformation matrix $\bfT_{\text{ESKF}}^\text{SD-EqF}$ to the discrete-time ESKF Jacobians. The explicit forms of $\bfPhi(\tau_{i+1},\tau_i)$ and $\bfQ(\tau_{i+1},\tau_i)$ for SD-EqF are
\begin{align}
    \bfPhi(\tau_{i+1},\tau_i) & = \left[
        \begin{NiceArray}{ccccc|cccc}[margin, nullify-dots]
          \bfI &\bfZo &\bfZo & \bfPhi_{14} &\bfZo &\Block{5-4}{{\bfZo}_{15\times3m}} &&&\\
          {[\bfg]_\times} \Delta \tau &\bfI &\bfZo & \bfPhi_{24} & \bfPhi_{25} &&&&\\
           \frac{1}{2}{[\bfg]_\times} \Delta \tau^2 &\bfI \Delta \tau  &\bfI & \bfPhi_{34}  & \bfPhi_{35} &&&&\\
            \bfZo &\bfZo &\bfZo &  \bfPhi_{44} & \bfZo &&&&\\
            \bfZo &\bfZo &\bfZo & \bfPhi_{54}  & \bfPhi_{55} &&&&\\
            \hline 
    \Block{3-5}{\bfZo_{3m\times 15}} &&&& &\Block{3-4}{\bfI_{3m}} \\
       &  &  &  &  & &&\\ 
         &  &  &  &  & &&&\\ 
        \end{NiceArray}
    \right], \\
    \bfQ (\tau_{i+1},\tau_i) & = \bfG_{d}(\tau_{i+1},\tau_i) \bfQ_d{\bfG_{d}(\tau_{i+1},\tau_i)}^\top, \\
    \bfG_d(\tau_{i+1},\tau_i) & = \left[
        \begin{NiceArray}{cccc}[margin, nullify-dots]
             -  \hat{\bfR}_{\tau_{i}}  \bfJ_l(\Delta \bftheta) \Delta \tau &\bfZo &\bfZo  &\bfZo \\
             - [\hat{\bfv}_{\tau_{i+1}}]_\times  \hat{\bfR}_{\tau_{i}}  \bfJ_l(\Delta \bftheta) \Delta \tau+  \hat{\bfR}_{\tau_{i}} \boldsymbol{\Xi}_3  &- \hat{\bfR}_{\tau_{i}} \boldsymbol{\Xi}_1 &\bfZo  &\bfZo \\
             - [\hat{\bfp}_{\tau_{i+1}}]_\times  \hat{\bfR}_{\tau_{i}}  \bfJ_l(\Delta \bftheta) \Delta \tau+\hat{\bfR}_{\tau_{i}} \boldsymbol{\Xi}_4  &- \hat{\bfR}_{\tau_{i}} \boldsymbol{\Xi}_2 &\bfZo  &\bfZo \\
             \bfZo &\bfZo & \hat{\bfR}_{\tau_{i+1}}  \Delta \tau & \bfZo \\
             \bfZo &\bfZo & [\hat{\bfv}_{\tau_{i+1}}]_\times \hat{\bfR}_{\tau_{i+1}} \Delta \tau & \hat{\bfR}_{\tau_{i+1}}\Delta \tau \\
            \hline
    \Block{3-4}{\bfZo_{3m\times 12}} \\
       &  &  &  \\ 
         &  &  &  \\ 
        \end{NiceArray}
    \right], \\
\end{align}
with 
\begin{align}
    \bfPhi_{14} & = -  \hat{\bfR}_{\tau_{i}}  \bfJ_l(\Delta \bftheta) \hat{\bfR}_{\tau_{i}}^\top \Delta \tau, \\
    \bfPhi_{24} & = -  [\hat\bfv_{\tau_{i+1}}]_\times \hat{\bfR}_{\tau_{i}}  \bfJ_l(\Delta \bftheta) \hat{\bfR}_{\tau_{i}}^\top \Delta \tau + \hat{\bfR}_{\tau_{i}} \boldsymbol{\Xi}_3 \hat{\bfR}_{\tau_{i}}^\top + \hat{\bfR}_{\tau_{i}} \boldsymbol{\Xi}_1 \hat{\bfR}_{\tau_{i}}^\top [\hat{\bfv}_{{\tau}_{i}}]_\times, \\
    \bfPhi_{34} & = -  [\hat\bfp_{\tau_{i+1}}]_\times \hat{\bfR}_{\tau_{i}}  \bfJ_l(\Delta \bftheta) \hat{\bfR}_{\tau_{i}}^\top \Delta \tau + \hat{\bfR}_{\tau_{i}} \boldsymbol{\Xi}_4 \hat{\bfR}_{\tau_{i}}^\top + \hat{\bfR}_{\tau_{i}} \boldsymbol{\Xi}_2 \hat{\bfR}_{\tau_{i}}^\top [\hat{\bfv}_{{\tau}_{i}}]_\times, \\
    \bfPhi_{25} & = - \hat{\bfR}_{\tau_{i}} \boldsymbol{\Xi}_1 \hat{\bfR}_{\tau_{i}}^\top,  \\
    \bfPhi_{35} & = - \hat{\bfR}_{\tau_{i}} \boldsymbol{\Xi}_2 \hat{\bfR}_{\tau_{i}}^\top, \\
    \bfPhi_{44} & = \bfPhi_{55} = \hat{\bfR}_{\tau_{i+1}} \hat{\bfR}_{\tau_{i}}^\top, \\
    \bfPhi_{54} & = [\hat\bfv_{\tau_{i+1}}]_\times \hat{\bfR}_{\tau_{i+1}} \hat{\bfR}_{\tau_{i}}^\top - \hat{\bfR}_{\tau_{i+1}} \hat{\bfR}_{\tau_{i}}^\top [\hat\bfv_{\tau_{i}}]_\times.
\end{align}
Like those of ESKF, the discrete-time Jacobians of SD-EqF are also block diagonal, which naturally enables efficient covariance propagation.

\subsection{Jacobians of ISD-EqF and T-EqF}

Since $\bfT_\text{ISD-EqF}^\text{T-EqF} = \bfI$, ISD-EqF and T-EqF share the same Jacobians. Their continuous-time Jacobians can be obtained by applying the transformation matrix $\bfT_{\text{ESKF}}^\text{ISD-EqF}$ to ESKF Jacobians. The explicit forms of $\bfF$ and $\bfG$ for ISD-EqF and T-EqF are
\begin{align}
            \bfF &= \left[
    \begin{NiceArray}{ccccc|cccc}[
        margin, 
        nullify-dots, 
    ]
             \bfZo & \bfZo & \bfZo  &-\bfI &\bfZo&\Block{5-4}{\scaleF{\bfZo}_{15\times3m}}\\
                {[\bfg]_\times} & \bfZo & \bfZo & \bfZo & \bfZo &&&&\\
                \bfZo & \bfI_3 & \bfZo & -[\hat{\bfp}]_\times &\bfZo&&&& \\
                \bfZo &\bfZo &\bfZo & [\hat{\bfR}(\bfomega_m- \hat{\bfb}_{\omega})]_\times & \bfZo &&&&\\
                \bfZo & \bfZo & \bfZo & [\hat{\bfR}(\bfa_m- \hat{\bfb}_{a})+ [\hat{\bfv}]_\times \hat{\bfR}(\bfomega_m- \hat{\bfb}_{\omega}) +\bfg]_\times& [\hat{\bfR}(\bfomega_m- \hat{\bfb}_{\omega})]_\times&&&&\\
        \hline
        \bfZo & \bfZo & \bfZo & - {[\hat{\bff}_1]_\times}  & \bfZo & \Block{3-4}{\scaleF{\bfZo}_{3m\times 3m}} \\
        \vdots & \vdots & \vdots & \vdots & \vdots & &&&\\ 
         \bfZo & \bfZo & \bfZo & - {[\hat{\bff}_m]_\times} & \bfZo & &&&\\
    \end{NiceArray}\right], \\
    \bfG &= 
    \left[
                \begin{array}{cccc}
                        -\hat{\bfR} & \bfZo & \bfZo & \bfZo \\ -{[\hat{\bfv}]_\times} \hat{\bfR}  & -\hat{\bfR}  & \bfZo & \bfZo  \\
                        -{[\hat{\bfp}]_\times} \hat{\bfR}   & \bfZo & \bfZo  & \bfZo  \\
                        \bfZo & \bfZo  & \hat{\bfR} & \bfZo \\
                        \bfZo & \bfZo & {[\hat{\bfv}]_\times} \hat{\bfR} & \hat{\bfR}  \\
                        \hline
                        -{[\hat{\bff}_1]_\times} \hat{\bfR}& \bfZo & \bfZo & \bfZo  \\
                        \vdots & \vdots & \vdots & \vdots  \\
                        -{[\hat{\bff}_m]_\times} \hat{\bfR}  & \bfZo & \bfZo & \bfZo  \\
                \end{array}
        \right].
\end{align}

Their discrete-time Jacobians are given by
\begin{align}
    \bfPhi(\tau_{i+1},\tau_i) & = \left[
        \begin{NiceArray}{ccccc|cccc}[margin, nullify-dots]
          \bfI &\bfZo &\bfZo & \bfPhi_{14} &\bfZo &\Block{5-4}{{\bfZo}_{15\times3m}} &&&\\
          {[\bfg]_\times} \Delta \tau &\bfI &\bfZo & \bfPhi_{24} & \bfPhi_{25} &&&&\\
           \frac{1}{2}{[\bfg]_\times} \Delta \tau^2 &\bfI \Delta \tau  &\bfI & \bfPhi_{34}  & \bfPhi_{35} &&&&\\
            \bfZo &\bfZo &\bfZo &  \bfPhi_{44} & \bfZo &&&&\\
            \bfZo &\bfZo &\bfZo & \bfPhi_{54}  & \bfPhi_{55} &&&&\\
            \hline 
    \bfZo &\bfZo&\bfZo& [\hat{\bff}_1]_\times \bfPhi_{14} & \bfZo &\Block{3-4}{\bfI_{3m}} \\
      \vdots &\vdots  & \vdots &\vdots  & \vdots & &&\\ 
        \bfZo &\bfZo&\bfZo& [\hat{\bff}_m]_\times \bfPhi_{14} & \bfZo&&\\ 
        \end{NiceArray}
    \right], \\
    \bfQ (\tau_{i+1},\tau_i) & = \bfG_{d}(\tau_{i+1},\tau_i) \bfQ_d{\bfG_{d}(\tau_{i+1},\tau_i)}^\top, \\
    \bfG_d(\tau_{i+1},\tau_i) & = \left[
        \begin{NiceArray}{cccc}[margin, nullify-dots]
             -  \hat{\bfR}_{\tau_{i}}  \bfJ_l(\Delta \bftheta) \Delta \tau &\bfZo &\bfZo  &\bfZo \\
             - [\hat{\bfv}_{\tau_{i+1}}]_\times  \hat{\bfR}_{\tau_{i}}  \bfJ_l(\Delta \bftheta) \Delta \tau+  \hat{\bfR}_{\tau_{i}} \boldsymbol{\Xi}_3  &- \hat{\bfR}_{\tau_{i}} \boldsymbol{\Xi}_1 &\bfZo  &\bfZo \\
             - [\hat{\bfp}_{\tau_{i+1}}]_\times  \hat{\bfR}_{\tau_{i}}  \bfJ_l(\Delta \bftheta) \Delta \tau+\hat{\bfR}_{\tau_{i}} \boldsymbol{\Xi}_4  &- \hat{\bfR}_{\tau_{i}} \boldsymbol{\Xi}_2 &\bfZo  &\bfZo \\
             \bfZo &\bfZo & \hat{\bfR}_{\tau_{i+1}}  \Delta \tau & \bfZo \\
             \bfZo &\bfZo & [\hat{\bfv}_{\tau_{i+1}}]_\times \hat{\bfR}_{\tau_{i+1}} \Delta \tau & \hat{\bfR}_{\tau_{i+1}}\Delta \tau \\
            \hline
         -[\hat{\bff}_1]_\times  \hat{\bfR}_{\tau_{i}}  \bfJ_l(\Delta \bftheta) \Delta \tau & \bfZo & \bfZo &\bfZo  \\
         \vdots & \vdots & \vdots & \vdots  \\
         -[\hat{\bff}_m]_\times  \hat{\bfR}_{\tau_{i}}  \bfJ_l(\Delta \bftheta) \Delta \tau & \bfZo & \bfZo &\bfZo  \\
        \end{NiceArray}
    \right], \\
\end{align}
where $\bfPhi_{14}$, $\bfPhi_{24}$, $\bfPhi_{34}$, $\bfPhi_{25}$, $\bfPhi_{35}$, $\bfPhi_{44}$, and $\bfPhi_{54}$ are the same as those of SD-EqF.

Note that the transition matrix $\bfPhi(\tau_{i+1},\tau_i)$ of ISD-EqF and T-EqF is not block diagonal, and the noise matrix $\bfQ(\tau_{i+1},\tau_i)$ becomes dense. These structures increase the cost of covariance propagation in ISD-EqF and T-EqF, especially when the number of landmarks $m$ is large.

\section{Computational Complexity Analysis}
\label{sup:sec:sup_complexity}

We examine the additional block-matrix operations performed by Naive, TP, and
TC\@.
SD-EqF is used as the auxiliary EqF; choosing ESKF leads to the same FLOP
scaling. Let $d=15$ be the IMU error-state dimension, $m$ the number of
landmarks, $f=3m$ the landmark-state dimension, and $N=d+f=15+3m$ the
dimension of the full error-state. Let $q$ denote the number of IMU propagation
sub-intervals between two consecutive camera frames and $p$ the number of
correction steps performed within one camera frame. In practical VINS
pipelines, $p$ is larger than 1 due to the use of batched visual measurements
and delayed feature initialization
\cite{genevaOpenVINSResearchPlatform2020}. In our subsequent simulations and
experiments, its average value typically lies between 3 and 5. Costs common to
all implementations, such as feature tracking and the standard measurement
update, are excluded.

\subsection{FLOP Convention and Covariance Recursion}

For a dense $a\times b$ matrix multiplied by a dense $b\times c$ matrix, we
count
\begin{equation}
    \mathcal{F}(a,b,c)=ac(2b-1)
    \label{sup:equ:sup_flop_convention}
\end{equation}
FLOPs, including $abc$ multiplications and $ac(b-1)$ additions. A direct
addition of two $a\times c$ matrices costs $ac$ FLOPs.

Between two camera frames, the accumulated transition, accumulated process
noise, and propagated covariance satisfy
\begin{align}
    \bfPhi(\tau_{i+1},\tau_0)
    &=
    \bfPhi(\tau_{i+1},\tau_i)\bfPhi(\tau_i,\tau_0),
    \label{sup:equ:sup_phi_accumulation}\\
    \bfQ(\tau_{i+1},\tau_0)
    &=
    \bfPhi(\tau_{i+1},\tau_i)\bfQ(\tau_i,\tau_0)
    \bfPhi(\tau_{i+1},\tau_i)^\top
    +\bfQ(\tau_{i+1},\tau_i),
    \label{sup:equ:sup_q_accumulation}\\
    \bfP_{k|k-1}
    &=
    \bfPhi_k\bfP_{k-1|k-1}\bfPhi_k^\top+\bfQ_k.
    \label{sup:equ:sup_p_propagation}
\end{align}
For the auxiliary SD-EqF,
\begin{equation}
    \bfPhi=
    \begin{bmatrix}\bfPhi_I&\bf0\\\bf0&\bfI_f\end{bmatrix},
    \qquad
    \bfQ=
    \begin{bmatrix}\bfQ_I&\bf0\\\bf0&\bf0\end{bmatrix}.
\end{equation}
Consequently, \eqref{sup:equ:sup_phi_accumulation} and
\eqref{sup:equ:sup_q_accumulation} only accumulate fixed-size $d\times d$ blocks,
and \eqref{sup:equ:sup_p_propagation} additionally computes
$\bfPhi_I\bfP_{IF}$ at $O(f)$ cost. This auxiliary propagation is the common
baseline.

The target T-EqF transition or the relative transformation used by TC has the
block-sparse form
\begin{equation}
    \bfS=
    \begin{bmatrix}
        \bfS_I&\bf0\\
        \bfD&\bfI_f
    \end{bmatrix},
    \qquad
    \bfD=\bar{\bfD}\bfE^\top,
    \label{sup:equ:sup_sparse_transform}
\end{equation}
where $\bar{\bfD}\in\mathbb{R}^{f\times3}$ and
$\bfE\in\mathbb{R}^{d\times3}$ selects the only nonzero three-column block of
$\bfD$; $\bfS_I\in\mathbb{R}^{d\times d}$ is the corresponding IMU-state
block. Thus, for $\bfX_{IF}\in\mathbb{R}^{d\times f}$, a nominal
$f\times d$ by $d\times f$ product is evaluated as
\begin{equation}
    \bfD\bfX_{IF}
    =
    \bar{\bfD}(\bfE^\top\bfX_{IF}),
    \qquad
    \mathrm{cost}=\mathcal{F}(f,3,f)=5f^2.
    \label{sup:equ:sup_effective_three_column_product}
\end{equation}
Here, $\bfE^\top\bfX_{IF}$ is obtained by selecting the three active rows and
requires no floating-point arithmetic. This effective inner dimension of three
is used throughout the following tables.

\subsection{Naive Implementation: Block-by-Block Cost}
\label{sup:sec:sup_naive_detailed}

For one IMU sub-interval, write the target transition and process noise as
\begin{equation}
    \bfPhi_i^*
    =
    \begin{bmatrix}
        \bfA_i&\bf0\\
        \bfD_i&\bfI_f
    \end{bmatrix},
    \qquad
    \bfQ_i^*
    =
    \begin{bmatrix}
        \bfQ_{II,i}^*&\bfQ_{IF,i}^*\\
        \bfQ_{FI,i}^*&\bfQ_{FF,i}^*
    \end{bmatrix}.
    \label{sup:equ:sup_naive_interval_blocks}
\end{equation}
Here, $\bfA_i\in\mathbb{R}^{d\times d}$ is the IMU-state transition,
$\bfD_i=\bar{\bfD}_i\bfE^\top\in\mathbb{R}^{f\times d}$ is the
IMU-to-landmark coupling, and the subscripts $I$ and $F$ denote the IMU and
landmark partitions, respectively. Accordingly,
$\bfQ_{II,i}^*\in\mathbb{R}^{d\times d}$,
$\bfQ_{IF,i}^*\in\mathbb{R}^{d\times f}$, and
$\bfQ_{FF,i}^*\in\mathbb{R}^{f\times f}$.
The landmark rows of the discrete noise Jacobian contain only one effective
$f\times3$ block, denoted by $\bar{\bfG}_{F,i}$. Thus, the dense
landmark--landmark process-noise block is
\begin{equation}
    \bfQ_{FF,i}^*
    =
    (\bar{\bfG}_{F,i}\bfQ_{\omega})
    \bar{\bfG}_{F,i}^{\top},
    \label{sup:equ:sup_naive_interval_qff}
\end{equation}
where $\bfQ_\omega\in\mathbb{R}^{3\times3}$ is the corresponding gyro-noise
block. Computing $\bar{\bfG}_{F,i}\bfQ_\omega$ costs
$\mathcal{F}(f,3,3)=15f$ FLOPs, and the second multiplication costs
$\mathcal{F}(f,3,f)=5f^2$ FLOPs. The cross block
$\bfQ_{IF,i}^*$ costs only $O(f)$, so constructing $\bfQ_i^*$ contributes
\begin{equation}
    5f^2+O(f)
    \label{sup:equ:sup_naive_interval_noise_cost}
\end{equation}
additional FLOPs per IMU sub-interval.

To expose the cost of accumulating this noise, partition the current
accumulated matrix as
\begin{equation}
    \bar{\bfQ}
    =
    \begin{bmatrix}
        \bar{\bfQ}_{II}&\bar{\bfQ}_{IF}\\
        \bar{\bfQ}_{FI}&\bar{\bfQ}_{FF}
    \end{bmatrix}.
\end{equation}
The bars distinguish the noise accumulated up to the current IMU
sub-interval from the interval noise $\bfQ_i^*$; the superscript $+$ below
denotes the accumulator after incorporating interval $i$.
Substitution of \eqref{sup:equ:sup_naive_interval_blocks} into
\eqref{sup:equ:sup_q_accumulation} gives
\begin{subequations}
\begin{align}
    \bar{\bfQ}_{II}^{+}
    &=
    \bfA_i\bar{\bfQ}_{II}\bfA_i^\top+\bfQ_{II,i}^*,
    \\
    \bar{\bfQ}_{IF}^{+}
    &=
    \bfA_i
    (\bar{\bfQ}_{IF}+\bar{\bfQ}_{II}\bfD_i^\top)
    +\bfQ_{IF,i}^*,
    \\
    \bar{\bfQ}_{FF}^{+}
    &=
    \bar{\bfQ}_{FF}
    +\underbrace{\bfD_i\bar{\bfQ}_{IF}}_{\bfM_1}
    +\underbrace{(\bfD_i\bar{\bfQ}_{IF})^\top}_{\bfM_1^\top}
    +\underbrace{\bfD_i(\bar{\bfQ}_{II}\bfD_i^\top)}_{\bfM_2}
    +\bfQ_{FF,i}^*.
    \label{sup:equ:sup_naive_qff_accumulation}
\end{align}
\end{subequations}
Only the last equation contains quadratic landmark-size operations.
Specifically, $\bfM_1$ and $\bfM_2$ each cost
$\mathcal{F}(f,3,f)=5f^2$ FLOPs, and summing the five $f\times f$ terms costs
$4f^2$ FLOPs. Therefore, one evaluation of
\eqref{sup:equ:sup_q_accumulation} costs
\begin{equation}
    2(5f^2)+4f^2+O(f)=14f^2+O(f).
    \label{sup:equ:sup_naive_q_accumulation_cost}
\end{equation}
The $O(f)$ term includes, for example,
$\bar{\bfQ}_{II}\bfD_i^\top$, whose cost is
$\mathcal{F}(d,3,f)=5df=75f$ FLOPs, and the propagation of the $d\times f$
cross block.

The final covariance propagation
\eqref{sup:equ:sup_p_propagation} has exactly the same block form:
\begin{equation}
\begin{aligned}
    \bfP_{FF}^{-}
    ={}&
    \bfP_{FF}^{+}
    +\underbrace{\bfD_k\bfP_{IF}^{+}}_{\bfN_1}
    +\underbrace{(\bfD_k\bfP_{IF}^{+})^\top}_{\bfN_1^\top}
    +\underbrace{\bfD_k(\bfP_{II}^{+}\bfD_k^\top)}_{\bfN_2}
    +\bfQ_{FF,k}^*.
\end{aligned}
\label{sup:equ:sup_naive_pff_propagation}
\end{equation}
Here, $\bfP^+$ is the posterior covariance at the preceding camera frame,
$\bfP^-$ is the propagated prior covariance, and $\bfD_k$ and
$\bfQ_{FF,k}^*$ are the lower-left transition block and process-noise block
accumulated over all $q$ IMU sub-intervals.
It likewise costs $14f^2+O(f)$, but is evaluated only once per camera frame.
The transition accumulation in \eqref{sup:equ:sup_phi_accumulation} computes
$\bfD_i\bar{\bfA}$ and is only $O(f)$ because $\bfD_i$ has three active
columns.

Table~\ref{sup:tab:sup_naive_block_flops} summarizes where the additional Naive
cost is introduced.

\begin{table}[htbp]
    \caption{Additional block computations of Naive T-EqF. The listed FLOPs
    include all dominant $f\times f$ products and additions.}
    \label{sup:tab:sup_naive_block_flops}
    \centering
    \small
    \setlength{\tabcolsep}{4pt}
    \begin{tabularx}{\linewidth}{p{0.18\linewidth}Xc}
        \toprule
        \textbf{Stage} & \textbf{Additional block computation} &
        \textbf{FLOPs} \\
        \midrule
        Each IMU step:
        $\bfPhi$ accumulation &
        $\bfD_i\bar{\bfA}$ in the lower-left transition block &
        $O(f)$ \\
        Each IMU step:
        $\bfQ_i^*$ construction &
        $\bfQ_{FF,i}^*
        =(\bar{\bfG}_{F,i}\bfQ_\omega)\bar{\bfG}_{F,i}^\top$ &
        $5f^2+O(f)$ \\
        Each IMU step:
        $\bar{\bfQ}$ accumulation &
        $\bfM_1=\bfD_i\bar{\bfQ}_{IF}$,
        $\bfM_2=\bfD_i(\bar{\bfQ}_{II}\bfD_i^\top)$,
        and four $f\times f$ additions &
        $14f^2+O(f)$ \\
        Once per camera:
        $\bfP$ propagation &
        $\bfN_1=\bfD_k\bfP_{IF}^{+}$,
        $\bfN_2=\bfD_k(\bfP_{II}^{+}\bfD_k^\top)$,
        and four $f\times f$ additions &
        $14f^2+O(f)$ \\
        \bottomrule
    \end{tabularx}
\end{table}

Since the first three stages are repeated over $q$ IMU sub-intervals, the total
additional cost per camera frame is
\begin{align}
    \mathrm{FLOPs}_{\mathrm{Naive}}
    &=
    q\left(5f^2+14f^2\right)+14f^2+O(qf)
    \nonumber\\
    &=
    (19q+14)f^2+O(qf)
    \nonumber\\
    &=
    (171q+126)m^2+O(qm).
    \label{sup:equ:sup_naive_flops}
\end{align}
Thus, the implementation that exploits the known zero and identity blocks is
$O(qm^2)$. The $O(qm^3)$ value is only the unstructured upper bound obtained by
treating every $(15+3m)\times(15+3m)$ matrix as fully dense.

\subsection{TP: Correspondence with the Naive Computations}
\label{sup:sec:sup_tp_detailed}

For an estimate $\hat{\xi}$, define the SD-EqF-to-T-EqF coordinate
transformation by
\begin{equation}
    \bfT_{\text{SD-EqF}}^\text{T-EqF}(\hat{\xi})
    =
    \begin{bmatrix}
        \bfI_d&\bf0\\
        \bfL(\hat{\xi})&\bfI_f
    \end{bmatrix},
    \qquad
    \bfL(\hat{\xi})=
    \begin{bmatrix}
        [\hat{\bff}_1]_\times&\bf0_{3\times12}\\
        \vdots&\vdots\\
        [\hat{\bff}_m]_\times&\bf0_{3\times12}
    \end{bmatrix}.
    \label{sup:equ:sup_tp_l_definition}
\end{equation}
Thus, $\bfL\in\mathbb{R}^{f\times d}$ is not an independent propagation
matrix: it is precisely the lower-left block of the state-dependent coordinate
transformation, and $[\bfa]_\times\bfb=\bfa\times\bfb$. Let
$\bfL_{k-1}=\bfL(\hat{\xi}_{k-1|k-1})$ and
$\bfL_{k|k-1}=\bfL(\hat{\xi}_{k|k-1})$. Line~7 of Algorithm~1 then gives
\begin{equation}
\begin{aligned}
    \bfPhi_k^*
    &=
    \bfT_{\text{SD-EqF}}^\text{T-EqF}(\hat{\xi}_{k|k-1})
    \begin{bmatrix}\bfPhi_I&\bf0\\\bf0&\bfI_f\end{bmatrix}
    \bfT_{\text{SD-EqF}}^\text{T-EqF}(\hat{\xi}_{k-1|k-1})^{-1}\\
    &=
    \begin{bmatrix}
        \bfPhi_I&\bf0\\
        \bfC&\bfI_f
    \end{bmatrix},
    \qquad
    \bfC=\bfL_{k|k-1}\bfPhi_I-\bfL_{k-1}\\
    &=
    \begin{bmatrix}
        \bf0_{3\times9}&
        [\hat{\bff}_1]_\times{\bfPhi_I}_{[1:3,10:12]}&
        \bf0_{3\times3}\\
        \vdots&\vdots&\vdots\\
        \bf0_{3\times9}&
        [\hat{\bff}_m]_\times{\bfPhi_I}_{[1:3,10:12]}&
        \bf0_{3\times3}
    \end{bmatrix}.
\end{aligned}
    \label{sup:equ:sup_tp_c_definition}
\end{equation}
During IMU propagation, the global-frame landmark estimates are unchanged, so
$\hat{\bff}_{j,k|k-1}=\hat{\bff}_{j,k-1|k-1}$; the abbreviated
$\hat{\bff}_j$ in \eqref{sup:equ:sup_tp_c_definition} denotes this common value.
Moreover, the first block row of $\bfPhi_I$ is
\begin{equation*}
    [\bfI_3\ \bf0_{3\times6}\
    {\bfPhi_I}_{[1:3,10:12]}\ \bf0_{3\times3}],
\end{equation*}
where
${\bfPhi_I}_{[1:3,10:12]}\in\mathbb{R}^{3\times3}$ denotes rows 1--3 and
columns 10--12. These two relations yield the last equality in
\eqref{sup:equ:sup_tp_c_definition}. Both $\bfL$ and $\bfC$ therefore have only
one active three-column block.

The remaining two lines have equally direct block interpretations. Line~8
transforms the accumulated auxiliary process noise:
\begin{equation}
\begin{aligned}
    \bfQ_k^*
    &=
    \begin{bmatrix}\bfI_d&\bf0\\\bfL_{k|k-1}&\bfI_f\end{bmatrix}
    \begin{bmatrix}\bfQ_I&\bf0\\\bf0&\bf0\end{bmatrix}
    \begin{bmatrix}\bfI_d&\bfL_{k|k-1}^\top\\\bf0&\bfI_f\end{bmatrix}\\
    &=
    \begin{bmatrix}
        \bfQ_I&\bfQ_I\bfL_{k|k-1}^\top\\
        \bfL_{k|k-1}\bfQ_I&
        \bfL_{k|k-1}\bfQ_I\bfL_{k|k-1}^\top
    \end{bmatrix}.
\end{aligned}
    \label{sup:equ:sup_tp_q_relation}
\end{equation}
For Line~9, partition the preceding posterior target covariance as
$\bfP_{k-1|k-1}^*=\left[\begin{smallmatrix}
\bfP_{II}^*&\bfP_{IF}^*\\
\bfP_{FI}^*&\bfP_{FF}^*
\end{smallmatrix}\right]$. The complete block relation is
\begin{equation}
\begin{aligned}
    \bfP_{k|k-1}^*
    &=\bfPhi_k^*\bfP_{k-1|k-1}^*{\bfPhi_k^*}^\top+\bfQ_k^*\\
    &=
    \begin{bmatrix}
        \bfPhi_I\bfP_{II}^*\bfPhi_I^\top &
        \bfPhi_I(\bfP_{IF}^*+\bfP_{II}^*\bfC^\top)\\
        \star &
        \bfP_{FF}^*+\bfC\bfP_{IF}^*
        +(\bfC\bfP_{IF}^*)^\top
        +\bfC(\bfP_{II}^*\bfC^\top)
    \end{bmatrix}
    +\bfQ_k^*,
\end{aligned}
    \label{sup:equ:sup_tp_pff_relation}
\end{equation}
where $\star$ is the transpose of the upper-right block. In particular, the
lower-right block receives the additional term $\bfQ_{FF,k}^*$ from
\eqref{sup:equ:sup_tp_q_relation}.
Equations~\eqref{sup:equ:sup_tp_c_definition}--\eqref{sup:equ:sup_tp_pff_relation}
make explicit which Naive block computations are deferred and evaluated by
TP. TP first
accumulates the $d\times d$ auxiliary transition and noise matrices over the
$q$ IMU steps. It then performs Lines 7--9 of Algorithm~1 once per camera
frame. The additional block operations are listed in
Table~\ref{sup:tab:sup_tp_block_flops}.

\begin{table}[htbp]
    \caption{Additional block computations of Lines 7--9 in TP and their
    correspondence with Naive.}
    \label{sup:tab:sup_tp_block_flops}
    \centering
    \small
    \setlength{\tabcolsep}{3pt}
    \begin{tabularx}{\linewidth}{cXcX}
        \toprule
        \textbf{Line} & \textbf{Additional block computation} &
        \textbf{FLOPs} & \textbf{Same computation in Naive} \\
        \midrule
        7 &
        Construct the lower-left transition block $\bfC$ from $m$ products of
        two $3\times3$ matrices &
        $O(f)$ &
        Transition accumulation, which is also only $O(f)$ \\
        8 &
        $\bfQ_I\bfL^\top$ and
        $\bfL(\bfQ_I\bfL^\top)$; the second product is an
        $f\times3$ by $3\times f$ product &
        $5f^2+O(f)$ &
        Construction of $\bfQ_{FF,i}^*$ in
        \eqref{sup:equ:sup_naive_interval_qff}; Naive repeats it $q$ times \\
        9 &
        $\bfP_{II}^*\bfC^\top$,
        $\bfC\bfP_{IF}^*$, and
        $\bfC(\bfP_{II}^*\bfC^\top)$, followed by four
        $f\times f$ additions &
        $14f^2+O(f)$ &
        Final covariance propagation
        \eqref{sup:equ:sup_naive_pff_propagation}; the same block kernel is also
        used in every Naive noise accumulation \\
        \bottomrule
    \end{tabularx}
\end{table}

The correspondence is therefore explicit. Line~8 of TP performs the same dominant
process-noise product of an $f\times3$ matrix and a $3\times f$ matrix as the Naive construction of
$\bfQ_{FF,i}^*$, but only once after the auxiliary noise has been accumulated.
Line~9 of TP performs the same two quadratic covariance products and four
$f\times f$ additions as the final Naive covariance propagation. Consequently,
\begin{align}
    \mathrm{FLOPs}_{\mathrm{TP}}
    &=
    \underbrace{5f^2}_{\text{Line 8}}
    +
    \underbrace{14f^2}_{\text{Line 9}}
    +O(f)
    \nonumber\\
    &=
    19f^2+O(f)
    =
    171m^2+O(m).
    \label{sup:equ:sup_tp_flops}
\end{align}
The computational benefit of TP is not a cheaper full-state covariance kernel;
it is that this kernel is evaluated once per camera frame instead of in
each of the $q$ IMU noise-accumulation steps.

\subsection{TC: Correction-Stage Block Computation}
\label{sup:sec:sup_tc_detailed}

For TC, propagation is performed entirely in the auxiliary coordinates and
introduces no additional propagation cost. After a correction, the relative
transformation has the following form, where
$\bfL_{k|k}=\bfL(\hat{\xi}_{k|k})$:
\begin{equation}
\begin{aligned}
    \mathcal{T}
    =
    &\bfT_{\text{SD-EqF}}^\text{T-EqF}(\hat{\xi}_{k|k})^{-1}
    \bfT_{\text{SD-EqF}}^\text{T-EqF}(\hat{\xi}_{k|k-1})\\
    =
    &\begin{bmatrix}
        \bfI_d&\bf0\\
        \bfD_\Delta&\bfI_f
    \end{bmatrix},
    \qquad
    \bfD_\Delta
    =\bfL_{k|k-1}-\bfL_{k|k}.
\end{aligned}
    \label{sup:equ:sup_tc_relative_transform}
\end{equation}
Hence, $\bfD_\Delta$ has one active three-column block; its block row for
landmark $i$ is
$[\hat{\bff}_{i,k|k-1}-\hat{\bff}_{i,k|k}]_\times$.
Writing the corrected auxiliary covariance as
$\bfP=\left[\begin{smallmatrix}\bfP_{II}&\bfP_{IF}\\
\bfP_{FI}&\bfP_{FF}\end{smallmatrix}\right]$, the landmark--landmark block of
$\mathcal{T}\bfP\mathcal{T}^\top$ is
\begin{equation}
    \bfP_{FF}
    +\bfD_\Delta\bfP_{IF}
    +(\bfD_\Delta\bfP_{IF})^\top
    +\bfD_\Delta(\bfP_{II}\bfD_\Delta^\top).
\end{equation}
The two quadratic products cost $2(5f^2)$ FLOPs and summing the four
$f\times f$ terms costs $3f^2$ FLOPs. Therefore,
\begin{equation}
    \mathrm{FLOPs}_{\mathrm{TC}}
    =
    p\left(13f^2+O(f)\right)
    =
    117pm^2+O(pm).
    \label{sup:equ:sup_tc_flops}
\end{equation}

\par\medskip
\noindent\begin{minipage}{\linewidth}
    \centering
    \captionof{table}{Total additional covariance FLOPs per camera frame
    relative to the auxiliary EqF.}
    \label{sup:tab:sup_complexity_flops}
    \small
    \setlength{\tabcolsep}{6pt}
    \begin{tabular}{lccc}
        \toprule
        \textbf{Implementation} & \textbf{Propagation} &
        \textbf{Correction} & \textbf{Scaling} \\
        \midrule
        Naive & $(19q+14)f^2+O(qf)$ & 0 & $O(qm^2)$ \\
        TP & $19f^2+O(f)$ & 0 & $O(m^2)$ \\
        TC & 0 & $13pf^2+O(pf)$ & $O(pm^2)$ \\
        \bottomrule
    \end{tabular}
\end{minipage}
\par\medskip

Table~\ref{sup:tab:sup_complexity_flops} shows why TP is typically faster than TC: TP pays the full-state quadratic cost once per camera frame, whereas
TC pays a slightly smaller quadratic cost after every correction ($p$ averages between 3 and 5 in the simulations and experiments).

\section{Relations to Local Landmark Parameterizations and Virtual States}
\label{sup:sec:sup_recent_eqfs}

This section discusses SOT-EqF \cite{vangoor2023eqvio} and TG-EqF \cite{fornasier2025equivariant,fornasier_equivariant_2024} separately from the
representative EqFs above because they do not share the same state description
used in the preceding direct comparisons. The preceding EqFs all represent
landmarks by their global positions and use the standard inertial state without
virtual variables. In contrast, SOT-EqF uses an $\mathbf{SOT}(3)$-based local
landmark parameterization, while TG-EqF augments the inertial model with a
virtual velocity input and a virtual velocity-bias state. A transformation can
be established only after the underlying physical states and local coordinates
have been aligned. This section performs that alignment and then relates
both formulations to ESKF from the transformation viewpoint. The resulting
relations explain their first-order error dynamics and observability
properties, but do not imply that their native higher-order properties are
created by a first-order coordinate transformation.

\subsection{Relation to SOT-EqF}
\label{sup:sec:sup_sot_relation}

The name SOT-EqF is used here rather than EqVIO\@. EqVIO combines the
$\mathbf{SOT}(3)$-based EqF with an output-equivariant visual measurement
construction. For the matched numerical comparisons in the main manuscript,
we extracted and reimplemented only its $\mathbf{SOT}(3)$-based EqF component
in OpenVINS, using the same frontend, sensor data, landmark management, and
experimental settings as the other filters, without imposing output
equivariance. Calling this implementation SOT-EqF therefore distinguishes the
compared filter component from the complete EqVIO system.

SOT-EqF uses a right-invariant IMU error and an $\mathbf{SOT}(3)$-based local
landmark representation \cite{vangoor2023eqvio}. For clarity, we ignore camera
extrinsic calibration, set the IMU--camera transformation to identity, and
consider one visual landmark. Although its native coordinates may be written as
$(\bfR,\bfv,\bfp,\bfb_\omega,\bfb_a,\bfy)$, the transformation theorem requires
the compared filters to use the same underlying physical state. We therefore
express that state using the global landmark position:
\begin{equation}
    \xi=(\bfR,\bfv,\bfp,\bfb_\omega,\bfb_a,\bff),
\end{equation}
where $\bff$ is the landmark position in the world frame. Let $\bfy$ denote the
SOT local landmark coordinate and let
\begin{equation}
    {^I}\bfp_f
    =
    \bfR^\top(\bff-\bfp)
    =
    \varsigma(\bfy).
    \label{sup:equ:sup_sot_landmark_map}
\end{equation}
Thus, $\bfy=\varsigma^{-1}(\bfR^\top(\bff-\bfp))$ is part of the SOT global-local
map rather than a different physical landmark state. Its local landmark error
is $\tilde{\bfy}_{\mathrm{SOT}}=\bfy-\hat{\bfy}$.

Linearizing \eqref{sup:equ:sup_sot_landmark_map} at the estimate gives
\begin{equation}
\begin{aligned}
    \bfR^\top(\bff-\bfp)
    \simeq{}&
    \hat{\bfR}^{\top}(\hat{\bff}-\hat{\bfp})
    +\hat{\bfR}^{\top}
    (\tilde{\bff}_{\mathrm{ESKF}}-\tilde{\bfp}_{\mathrm{ESKF}})
    \\
    &+
    [\hat{\bfR}^{\top}(\hat{\bff}-\hat{\bfp})]_\times
    \tilde{\bftheta}_{\mathrm{ESKF}},
    \\
    \varsigma(\bfy)
    \simeq{}&
    \varsigma(\hat{\bfy})
    +
    \left.\frac{\partial\varsigma}{\partial\bfy}\right|_{\hat{\bfy}}
    \tilde{\bfy}_{\mathrm{SOT}}.
\end{aligned}
\end{equation}
If the local Jacobian is nonsingular, define
\begin{equation}
    \bfA=
    \left(
    \left.\frac{\partial\varsigma}{\partial\bfy}\right|_{\hat{\bfy}}
    \right)^{-1}.
\end{equation}
The landmark-error relation is therefore
\begin{equation}
    \tilde{\bfy}_{\mathrm{SOT}}
    =
    \bfA\!\left(
    \hat{\bfR}^{\top}
    (\tilde{\bff}_{\mathrm{ESKF}}-\tilde{\bfp}_{\mathrm{ESKF}})
    +[\hat{\bfR}^{\top}(\hat{\bff}-\hat{\bfp})]_{\times}
    \tilde{\bftheta}_{\mathrm{ESKF}}
    \right).
    \label{sup:equ:sup_sot_landmark_error_relation}
\end{equation}
Combining the resulting landmark relation with the standard transformation from
ESKF IMU error to the right-invariant IMU error yields
\begin{equation}
\begin{bmatrix}
\tilde{\bftheta}_{\mathrm{SOT}}\\
\tilde{\bfv}_{\mathrm{SOT}}\\
\tilde{\bfp}_{\mathrm{SOT}}\\
\tilde{\bfb}_{\omega}\\
\tilde{\bfb}_{a}\\
\tilde{\bfy}_{\mathrm{SOT}}
\end{bmatrix}
=
\underbrace{
    \left[
        \begin{array}{ccccc|c}
            \hat{\bfR}&\bf0&\bf0&\bf0&\bf0&\bf0\\
{[\hat{\bfv}]}_\times\hat{\bfR}&\bfI_3&\bf0&\bf0&\bf0&\bf0\\
{[\hat{\bfp}]}_\times\hat{\bfR}&\bf0&\bfI_3&\bf0&\bf0&\bf0\\
\bf0&\bf0&\bf0&\bfI_3&\bf0&\bf0\\
\bf0&\bf0&\bf0&\bf0&\bfI_3&\bf0\\
\hline
\bfA[\hat{\bfR}^{\top}(\hat{\bff}-\hat{\bfp})]_\times&
\bf0&-\bfA\hat{\bfR}^{\top}&\bf0&\bf0&\bfA\hat{\bfR}^{\top}
        \end{array}
    \right]
}_{\bfT_{\mathrm{ESKF}}^{\mathrm{SOT-EqF}}}
\begin{bmatrix}
\tilde{\bftheta}_{\mathrm{ESKF}}\\
\tilde{\bfv}_{\mathrm{ESKF}}\\
\tilde{\bfp}_{\mathrm{ESKF}}\\
\tilde{\bfb}_{\omega}\\
\tilde{\bfb}_{a}\\
\tilde{\bff}_{\mathrm{ESKF}}
\end{bmatrix}.
\label{sup:equ:sup_eskf_sot_transformation}
\end{equation}

Applying the unobservable-subspace transformation to the standard ESKF basis
gives
\begin{equation}
\begin{aligned}
    \operatorname{span}(\bfN_{\mathrm{SOT-EqF}})
    &=\operatorname{span}\!\left(
    \bfT_{\mathrm{ESKF}}^{\mathrm{SOT-EqF}}
    \begin{bmatrix}
        \bf0_{3\times3}&-\hat{\bfR}^{\top}\bfg\\
        \bf0_{3\times3}&[\hat{\bfv}]_\times\bfg\\
        \bfI_3&[\hat{\bfp}]_\times\bfg\\
        \bf0_{3\times3}&\bf0_{3\times1}\\
        \bf0_{3\times3}&\bf0_{3\times1}\\
        \bfI_3&[\hat{\bff}]_\times\bfg
    \end{bmatrix}
    \right)\\
    &=\operatorname{span}\!\left(
    \begin{bmatrix}
        \bf0_{3\times3}&-\bfg\\
        \bf0_{3\times3}&\bf0_{3\times1}\\
        \bfI_3&\bf0_{3\times1}\\
        \bf0_{3\times3}&\bf0_{3\times1}\\
        \bf0_{3\times3}&\bf0_{3\times1}\\
        \bf0_{3\times3}&\bf0_{3\times1}
    \end{bmatrix}
    \right),
\end{aligned}
    \label{sup:equ:sup_sot_unobservable}
\end{equation}
which is state independent. This provides a transformation-based explanation of
SOT-EqF consistency.

There are three additional remarks regarding the relation between SOT-EqF and ESKF:
\begin{itemize}
    \item \emph{Generality.} The derivation is not tied to the specific $\mathbf{SOT}(3)$ parameterization.
Its only local requirement is that
$\bfA=(\partial\varsigma/\partial\bfy)^{-1}\in\mathbb{R}^{3\times3}$ exists.
It therefore extends to other locally invertible landmark representations,
including \path{anchored_3d},
\path{anchored_full_inverse_depth}, and
\path{anchored_msckf_inverse_depth}\footnote{\url{https://docs.openvins.com/update-feat.html}}.
    \item \emph{Equivalent physical state.} Rewriting SOT-EqF with the global physical landmark $\bff$ does not change its
estimation outcome or higher-order properties. It only treats
$\bfy=\varsigma^{-1}(\bfR^\top(\bff-\bfp))$ as part of the global-local map, so
that ESKF and SOT-EqF are expressed on the same physical state and the local
transformation in \eqref{sup:equ:sup_eskf_sot_transformation} becomes well defined.

    \item \emph{Scope.} The native EqVIO construction additionally provides output equivariance, which
can reduce higher-order measurement-linearization error. The present
first-order transformation does not reproduce that property; incorporating
output equivariance remains a direction for future work.
\end{itemize}

\subsection{Relation to TG-EqF}
\label{sup:sec:sup_tg_relation}

Reference~\cite{fornasier2025equivariant} identifies tangent-group (TG),
direct-position (DP), and semi-direct (SD) aided-inertial symmetries. The SD case
is already represented by SD-EqF in the main manuscript. DP introduces a
virtual input, whereas TG introduces both a virtual input and a virtual state.
We therefore focus on TG, the most general of the three. Its error state is
18-dimensional, rather than the 15-dimensional error state of the standard ESKF, so ESKF must
first be augmented to the same physical state:
\begin{equation}
    \xi=(\bfR,\bfv,\bfp,\bfb_\omega,\bfb_a,\bfb_v).
\end{equation}
Its dynamics are
\begin{subequations}
\begin{align}
    \dot{\bfR}&=\bfR[\bfomega_m-\bfb_\omega-\bfn_\omega]_\times,\\
    \dot{\bfv}&=\bfR(\bfa_m-\bfb_a-\bfn_a)+\bfg,\\
    \dot{\bfp}&=\bfR(\bfv_m-\bfb_v-\bfn_v)+\bfv,\\
    \dot{\bfb}_\omega&=\bfn_{\mathrm w\omega},\qquad
    \dot{\bfb}_a=\bfn_{\mathrm wa},\qquad
    \dot{\bfb}_v=\bfn_{\mathrm wv}.
\end{align}
\label{sup:equ:sup_augmented_eskf_dynamics}
\end{subequations}
Here $\bfomega_m$, $\bfa_m$, and $\bfv_m$ are the measured angular velocity,
linear acceleration, and virtual linear velocity; $\bfn_\omega$, $\bfn_a$, and
$\bfn_v$ are their measurement-noise terms; and $\bfn_{\mathrm w\omega}$,
$\bfn_{\mathrm wa}$, and $\bfn_{\mathrm wv}$ are bias random-walk noise terms. The
additional variable $\bfb_v$ is the virtual velocity bias required by TG\@. When
$\bfv_m-\bfb_v-\bfn_v=\bf0$, the model reduces to the standard inertial
kinematic model.

Using ESKF attitude error
$\Log_{\SO3}(\hat{\bfR}^{-1}\bfR)$ and additive errors for the other states, the
augmented ESKF error is
\begin{equation}
\bfvarepsilon_{\mathrm{ESKF}}=
\begin{bmatrix}
\Log_{\SO3}(\hat{\bfR}^{-1}\bfR)\\
\bfv-\hat{\bfv}\\ \bfp-\hat{\bfp}\\
\bfb_\omega-\hat{\bfb}_\omega\\
\bfb_a-\hat{\bfb}_a\\ \bfb_v-\hat{\bfb}_v
\end{bmatrix}.
\label{sup:equ:sup_augmented_eskf_error}
\end{equation}
Let
$\bfn=[\bfn_\omega^\top,\bfn_a^\top,\bfn_v^\top,
\bfn_{\mathrm w\omega}^\top,\bfn_{\mathrm wa}^\top,
\bfn_{\mathrm wv}^\top]^\top$. The linearized error dynamics
$\dot{\bfvarepsilon}_{\mathrm{ESKF}}=\bfF_{\mathrm{ESKF}}
\bfvarepsilon_{\mathrm{ESKF}}+\bfG_{\mathrm{ESKF}}\bfn$ use
\begin{equation}
\bfF_{\mathrm{ESKF}}=
\begin{bmatrix}
-\hat{\bfR}^{\top}\dot{\hat{\bfR}}&\bfZo&\bfZo&-\bfI&\bfZo&\bfZo\\
-[\dot{\hat{\bfv}}-\bfg]_\times\hat{\bfR}&\bfZo&\bfZo&\bfZo&-\hat{\bfR}&\bfZo\\
-[\dot{\hat{\bfp}}-\hat{\bfv}]_\times\hat{\bfR}&\bfI&\bfZo&\bfZo&\bfZo&-\hat{\bfR}\\
\bfZo&\bfZo&\bfZo&\bfZo&\bfZo&\bfZo\\
\bfZo&\bfZo&\bfZo&\bfZo&\bfZo&\bfZo\\
\bfZo&\bfZo&\bfZo&\bfZo&\bfZo&\bfZo
\end{bmatrix},
\label{sup:equ:sup_augmented_eskf_F}
\end{equation}
and
\begin{equation}
\bfG_{\mathrm{ESKF}}=
\begin{bmatrix}
-\bfI&\bfZo&\bfZo&\bfZo&\bfZo&\bfZo\\
\bfZo&-\hat{\bfR}&\bfZo&\bfZo&\bfZo&\bfZo\\
\bfZo&\bfZo&-\hat{\bfR}&\bfZo&\bfZo&\bfZo\\
\bfZo&\bfZo&\bfZo&\bfI&\bfZo&\bfZo\\
\bfZo&\bfZo&\bfZo&\bfZo&\bfI&\bfZo\\
\bfZo&\bfZo&\bfZo&\bfZo&\bfZo&\bfI
\end{bmatrix}.
\label{sup:equ:sup_augmented_eskf_G}
\end{equation}
The local transformation to TG-EqF is
\begin{equation}
\bfT_{\mathrm{ESKF}}^{\mathrm{TG}}
=
\begin{bmatrix}
\hat{\bfR}&\bf0&\bf0&\bf0&\bf0&\bf0\\
[\hat{\bfv}]_\times\hat{\bfR}&\bfI&\bf0&\bf0&\bf0&\bf0\\
[\hat{\bfp}]_\times\hat{\bfR}&\bf0&\bfI&\bf0&\bf0&\bf0\\
\bf0&\bf0&\bf0&\hat{\bfR}&\bf0&\bf0\\
\bf0&\bf0&\bf0&[\hat{\bfv}]_\times\hat{\bfR}&\hat{\bfR}&\bf0\\
\bf0&\bf0&\bf0&[\hat{\bfp}]_\times\hat{\bfR}&\bf0&\hat{\bfR}
\end{bmatrix}.
\label{sup:equ:sup_eskf_tg_transformation}
\end{equation}
The top-left block is the transformation from ESKF to SD-EqF for the navigation state,
and the bottom-right block is the adjoint matrix on
$\mathbf{SE}_2(3)$. Substitution into Corollary~1 gives
\begin{subequations}
\begin{align}
    \bfF_{\mathrm{TG}}
    &=
    \dot{\bfT}_{\mathrm{ESKF}}^{\mathrm{TG}}
    (\bfT_{\mathrm{ESKF}}^{\mathrm{TG}})^{-1}
    +
    \bfT_{\mathrm{ESKF}}^{\mathrm{TG}}
    \bfF_{\mathrm{ESKF}}
    (\bfT_{\mathrm{ESKF}}^{\mathrm{TG}})^{-1},
    \\
    \bfG_{\mathrm{TG}}
    &=
    \bfT_{\mathrm{ESKF}}^{\mathrm{TG}}\bfG_{\mathrm{ESKF}}.
\end{align}
\label{sup:equ:sup_tg_jacobian_relation}
\end{subequations}
For completeness, direct substitution gives
\begin{equation}
\bfF_{\mathrm{TG}}=
\begin{bmatrix}
\bfZo&\bfZo&\bfZo&-\bfI&\bfZo&\bfZo\\
[\bfg]_\times&\bfZo&\bfZo&\bfZo&-\bfI&\bfZo\\
\bfZo&\bfI&\bfZo&\bfZo&\bfZo&-\bfI\\
\bfZo&\bfZo&\bfZo&\boldsymbol{\Omega}&\bfZo&\bfZo\\
\bfZo&\bfZo&\bfZo&\mathbf{V}&\boldsymbol{\Omega}&\bfZo\\
\bfZo&\bfZo&\bfZo&\mathbf{P}&\bfZo&\boldsymbol{\Omega}
\end{bmatrix},
\label{sup:equ:sup_tg_F_explicit}
\end{equation}
where
\begin{align}
\boldsymbol{\Omega}&=\dot{\hat{\bfR}}\hat{\bfR}^{\top},\\
\mathbf{V}&=[\dot{\hat{\bfv}}]_\times
+[\hat{\bfv}]_\times\boldsymbol{\Omega}
-\boldsymbol{\Omega}[\hat{\bfv}]_\times,\\
\mathbf{P}&=[\dot{\hat{\bfp}}]_\times
+[\hat{\bfp}]_\times\boldsymbol{\Omega}
-\boldsymbol{\Omega}[\hat{\bfp}]_\times.
\end{align}
The lower-right $3\times3$ block matrix is the corresponding adjoint dynamics
on $\mathfrak{se}_2(3)$. The transformed noise Jacobian is
\begin{equation}
\bfG_{\mathrm{TG}}=
\begin{bmatrix}
-\hat{\bfR}&\bfZo&\bfZo&\bfZo&\bfZo&\bfZo\\
-[\hat{\bfv}]_\times\hat{\bfR}&-\bfI&\bfZo&\bfZo&\bfZo&\bfZo\\
-[\hat{\bfp}]_\times\hat{\bfR}&\bfZo&-\bfI&\bfZo&\bfZo&\bfZo\\
\bfZo&\bfZo&\bfZo&\hat{\bfR}&\bfZo&\bfZo\\
\bfZo&\bfZo&\bfZo&[\hat{\bfv}]_\times\hat{\bfR}&\hat{\bfR}&\bfZo\\
\bfZo&\bfZo&\bfZo&[\hat{\bfp}]_\times\hat{\bfR}&\bfZo&\hat{\bfR}
\end{bmatrix}.
\label{sup:equ:sup_tg_G_explicit}
\end{equation}
Equations~\eqref{sup:equ:sup_tg_F_explicit} and
\eqref{sup:equ:sup_tg_G_explicit} recover TG-EqF error dynamics reported in
\cite{fornasier2025equivariant}.

An apparent sign difference from \cite{fornasier2025equivariant} follows from
the local coordinate chart. The convention used above is
\begin{equation}
\vartheta(e)=
\big(\Log_{\mathbf{SE}_2(3)}(e_{\mathbf A}),e_{\mathbf b}\big),
\end{equation}
whereas the native TG construction uses
\begin{equation}
\vartheta(e)=
\Log_{\mathbf G_{\mathrm{TG}}}\!\left(\phi_{\xi^\circ}^{-1}(e)\right).
\end{equation}
The latter introduces an additional minus sign in the bias error state; hence
the bottom-right block of $\bfT_{\mathrm{ESKF}}^{\mathrm{TG}}$ and the
top-right block of $\bfF_{\mathrm{TG}}$ change sign consistently. This is only
a coordinate convention and does not affect the filter or the transformation
relationship.

\section{Scope of First-Order Equivalence and Nonlinear Limitations}
\label{sup:sec:sup_scope}

This section clarifies the scope of first-order equivalence by stating exactly
which properties are preserved by the proposed transformation and which
properties require a native nonlinear error construction.

\subsection{Preserved First-Order Properties and Nonlinear Limitations}
\label{sup:sec:sup_first_higher_order}

To avoid ambiguity, we distinguish three formulations in the following discussion:
    \begin{itemize}
        \item \emph{Auxiliary EqF}: This computationally convenient baseline, such as the standard ESKF or SD-EqF considered in the manuscript, uses the global-local map $\varphi_\mathrm{A}$ but does not generally preserve the correct observability structure.
        \item \emph{Reference EqF}: This directly derived invariant/EqF formulation uses the global-local map $\varphi_\mathrm{R}$ and preserves the nonlinear error geometry induced by its invariant/equivariant error definition and, when applicable, reduces the estimate dependence of the error dynamics. In the manuscript, ISD-EqF serves as the reference formulation because it combines the advantages of RI-EKF and SD-EqF.
        \item \emph{T-EqF}: This is the proposed transformed EqF, denoted by $\varphi_\mathrm{T}$. It is constructed by applying the first-order transformation induced by $\varphi_\mathrm{R}$ and $\varphi_\mathrm{A}$
        \begin{equation}
            \varphi_\mathrm{T}(\xi)
            =
            \mathbf{T}^{\varphi_\mathrm{R}}_{\varphi_\mathrm{A}}(\hat{\xi})
            \varphi_\mathrm{A}(\xi),
        \end{equation}
        where
        \begin{equation}
            \mathbf{T}^{\varphi_\mathrm{R}}_{\varphi_\mathrm{A}}(\hat{\xi})
            =
            D_{\xi}|_{\hat{\xi}}\varphi_\mathrm{R}(\xi)
            \,
            D_{\epsilon}|_{\mathbf{0}}\varphi_\mathrm{A}^{-1}(\epsilon).
        \end{equation}
        Therefore, $\varphi_\mathrm{T}$ and $\varphi_\mathrm{R}$ have the same value and the same first-order differential at the current estimate:
        \begin{equation}
            \varphi_\mathrm{T}(\hat{\xi})
            =
            \varphi_\mathrm{R}(\hat{\xi})
            =
            \mathbf{0},
            \qquad
            D_{\xi}|_{\hat{\xi}}\varphi_\mathrm{T}(\xi)
            =
            D_{\xi}|_{\hat{\xi}}\varphi_\mathrm{R}(\xi).
        \end{equation}
        Equivalently,
        \begin{equation}
            \varphi_\mathrm{R}(\xi)
            =
            \varphi_\mathrm{T}(\xi)
            +
            O(\|\varphi_\mathrm{A}(\xi)\|^2).
            \label{sup:equ:sup_first_order_remainder}
        \end{equation}
        Hence, T-EqF matches the reference EqF locally to first order, rather than being nonlinearly identical to it.
    \end{itemize}
The conclusion is thus first-order equivalence at the current estimate, not
nonlinear identity of the two maps.

\begin{table}[htbp]
    \centering
    \caption{Scope of the transformation-based equivalence.}
    \label{sup:tab:sup_preserved_properties}
    \small
    \begin{tabularx}{\linewidth}{p{0.23\linewidth}X}
        \toprule
        \textbf{Status} & \textbf{Properties} \\
        \midrule
        Related by the first-order transformation &
        Continuous-time dynamics and measurement Jacobians; covariance
        transformation; local observability matrix; and the basis and dimension
        of the unobservable subspace. With the same nominal trajectory and
        discretization, the corresponding discrete transition and noise
        Jacobians are also preserved to first order. \\
        \addlinespace
        Not guaranteed &
        The nonlinear inverse map used for finite state corrections, the full
        nonlinear error geometry, output equivariance, and higher-order reduction
        of estimate dependence provided by a directly derived EqF. \\
        \bottomrule
    \end{tabularx}
\end{table}

In particular, the reference and transformed filters may have the same
innovation, Kalman gain, and posterior covariance in their local linearized
coordinates, yet return different state estimates because their inverse maps
inject the same finite correction differently:
\begin{subequations}
\begin{align}
    \hat{\xi}^{+}_{\mathrm R}
    &=
    \left.\varphi_{\mathrm R}^{-1}
    (\bfK_{\mathrm R}\tilde{\bfz})\right|_{\hat{\xi}=\hat{\xi}^{-}},
    \\
    \hat{\xi}^{+}_{\mathrm T}
    &=
    \left.\varphi_{\mathrm T}^{-1}
    (\bfK_{\mathrm T}\tilde{\bfz})\right|_{\hat{\xi}=\hat{\xi}^{-}}.
\end{align}
\end{subequations}
The difference begins at second order according to
\eqref{sup:equ:sup_first_order_remainder}.


\subsection{Role of TP and TC.}
TP and TC accelerate covariance computation without changing the underlying
error definition and linearization-error characteristics. 
When T-EqF and the reference EqF are both implemented with TP, they share the same discrete transition, noise, and measurement Jacobians along the same nominal trajectory. The only difference lies in 
their state correction steps (Line~7 of Algorithm~2) remain different. For the
T-EqF, the correction at
\(\hat{\xi}_{k|k-1}\) satisfies
\begin{subequations}
\begin{align}
    \hat{\xi}_{k|k}
    &=
    \varphi_{\mathrm{T}}^{-1}
    \left(
        \mathbf{T}^{\mathrm{R}}_{\mathrm{A}}
        \mathbf{K}_{\mathrm{A}}\tilde{\mathbf{z}}
    \right)
    \bigg|_{\hat{\xi}=\hat{\xi}_{k|k-1}}
    \\
    &=
    \varphi_{\mathrm{A}}^{-1}
    \left(
        \left(\mathbf{T}^{\mathrm{R}}_{\mathrm{A}}\right)^{-1}
        \mathbf{T}^{\mathrm{R}}_{\mathrm{A}}
        \mathbf{K}_{\mathrm{A}}\tilde{\mathbf{z}}
    \right)
    \bigg|_{\hat{\xi}=\hat{\xi}_{k|k-1}}
    \\
    &=
    \varphi_{\mathrm{A}}^{-1}
    \left(
        \mathbf{K}_{\mathrm{A}}\tilde{\mathbf{z}}
    \right)
    \bigg|_{\hat{\xi}=\hat{\xi}_{k|k-1}} .
\end{align}
\label{sup:equ:tc_equivalence}%
\end{subequations}
Thus, the T-EqF state update under TC can use the inverse global-local map of the
auxiliary formulation. This is not equivalent to independently running the
auxiliary EqF, because
\(\mathbf{K}_{\mathrm{A}}\) is computed from the equivalent covariance associated
with the target T-EqF, rather than from the covariance of an independently run
auxiliary filter.
By contrast, when TC is used to accelerate the directly derived reference EqF,
the state correction should still follow the reference global-local map:
\begin{equation}
    \hat{\xi}_{k|k}
    =
    \varphi_{\mathrm{R}}^{-1}
    \left(
        \mathbf{T}^{\mathrm{R}}_{\mathrm{A}}
        \mathbf{K}_{\mathrm{A}}\tilde{\mathbf{z}}
    \right)
    \bigg|_{\hat{\xi}=\hat{\xi}_{k|k-1}} .
\end{equation}
This correction cannot generally be simplified to an auxiliary-form correction,
because
\begin{equation}
    \varphi_{\mathrm{R}}
    \neq
    \mathbf{T}^{\mathrm{R}}_{\mathrm{A}}\varphi_{\mathrm{A}}
\end{equation}
beyond first order. Therefore, TP and TC preserve the computational and
first-order consistency benefits of the target formulation but do not grant
T-EqF the higher-order reduced estimate-dependence or nonlinear state-space
error geometry of a directly derived reference EqF.

\subsection{Planar-Robot Illustration}
\label{sup:sec:sup_planar_example}

Consider the planar state $\xi=(\theta,\bfp)$ with dynamics
\begin{equation}
    \dot{\theta}=\omega_m-n_\omega,
    \qquad
    \dot{\bfp}=\bfC(\theta)(\bfv_m-\bfn_v),
    \label{sup:equ:sup_planar_system}
\end{equation}
where $\bfC(\theta)\in\mathbf{SO}(2)$. The auxiliary additive map and the
reference right-invariant map are
\begin{align}
    \varphi_{\mathrm A}(\xi)
    &=
    \begin{bmatrix}
        \theta-\hat{\theta}\\
        \bfp-\hat{\bfp}
    \end{bmatrix},
    \\
    \varphi_{\mathrm R}(\xi)
    &=
    \Log_{\mathbf{SE}(2)}
    \left(
        \begin{bmatrix}\bfC(\theta)&\bfp\\\bf0^\top&1\end{bmatrix}
        \begin{bmatrix}\bfC(\hat{\theta})&\hat{\bfp}\\\bf0^\top&1\end{bmatrix}^{-1}
    \right).
\end{align}
With
\begin{equation}
    \bfJ=\begin{bmatrix}0&-1\\1&0\end{bmatrix},
    \qquad
    \bfT_k=
    \begin{bmatrix}
        1&\bf0_{1\times2}\\
        -\bfJ\hat{\bfp}_k&\bfI_2
    \end{bmatrix},
    \label{sup:equ:sup_planar_transformation}
\end{equation}
the transformed map is
$\varphi_{\mathrm T}=\bfT_k\varphi_{\mathrm A}$. The reference and transformed
maps have identical first-order discrete transition and noise Jacobians along
the same nominal trajectory, as proved in
Appendix~\ref{sup:app:sup_planar_derivation}. Their inverse maps nevertheless differ
beyond first order. Consequently, the same local Gaussian is mapped to a
banana-shaped state-space distribution by the reference map but to an
ellipsoidal distribution by the transformed map, as shown in
Fig.~\ref{sup:fig:sup_preintegration_bound}.

\begin{figure}[htbp]
    \centering
    \includegraphics[width=0.58\linewidth]{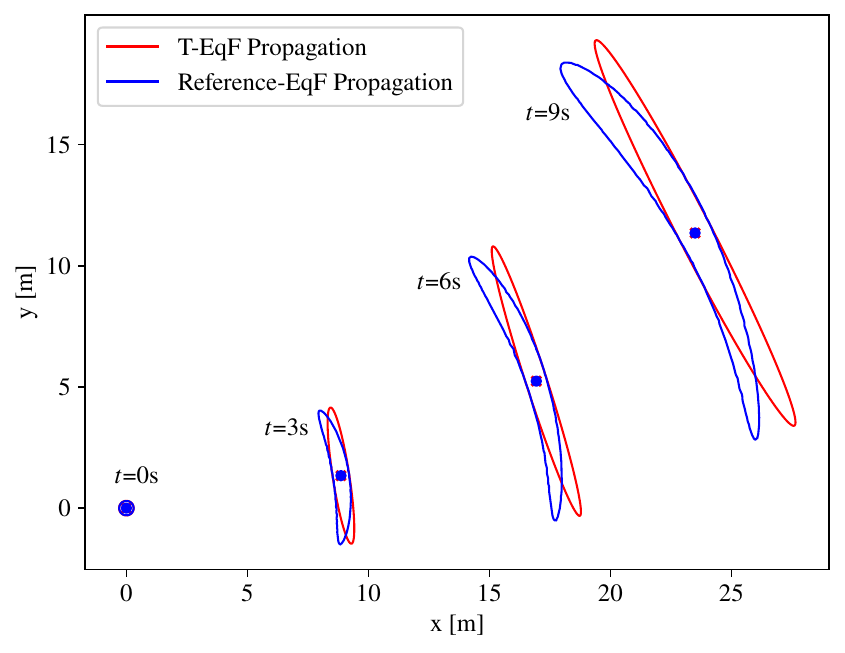}
    \caption{State-space uncertainty for the planar robot with
    $\bfu=[0.1~\mathrm{rad/s},3~\mathrm{m/s},0~\mathrm{m/s}]$,
    $\bfQ=0.01\bfI_3$, and $\delta t=0.1$ s. Both formulations use the same
    initial state and covariance and perform no measurement update. The
    reference invariant map produces the banana-shaped distribution, whereas
    the first-order transformed map produces an ellipsoidal distribution.
    The auxiliary formulation has the same ellipsoidal distribution as the transformed
formulation.
    }
    \label{sup:fig:sup_preintegration_bound}
\end{figure}

\ifdefined\SUPPLEMENTINMAIN
\appendices
\else
\appendix
\fi

\section{Discrete-Time Derivation for the Planar-Robot Example}
\label{sup:app:sup_planar_derivation}

This appendix derives the discrete first-order relationship used in
Section~\ref{sup:sec:sup_planar_example} and then shows explicitly where the
higher-order difference enters.

Assume that $\omega_m$, $\bfv_m$, $n_\omega$, and $\bfn_v$ are constant over
one integration interval of duration $\delta t$. Define
\begin{equation}
    \boldsymbol{\Xi}(\omega_m,\delta t)
    =
    \int_0^{\delta t}\bfC(\omega_m\tau)\,d\tau
    =
    \frac{1}{\omega_m}
    \begin{bmatrix}
        \sin(\omega_m\delta t)&\cos(\omega_m\delta t)-1\\
        1-\cos(\omega_m\delta t)&\sin(\omega_m\delta t)
    \end{bmatrix}.
    \label{sup:equ:sup_planar_xi}
\end{equation}
The nominal increments are
\begin{equation}
    \Delta\hat{\theta}_k=\omega_m\delta t,
    \qquad
    \Delta\hat{\bfp}_k=\boldsymbol{\Xi}\bfv_m,
\end{equation}
and their first-order noise perturbations are
\begin{equation}
    \Delta\tilde{\theta}_k=-\delta t\,n_\omega,
    \qquad
    \Delta\tilde{\bfp}_k=
    \bfs\,n_\omega-\boldsymbol{\Xi}\bfn_v,
    \label{sup:equ:sup_planar_noisy_increments}
\end{equation}
where
\begin{equation}
    \bfs=
    \frac{\boldsymbol{\Xi}-\delta t\bfC(\omega_m\delta t)}
    {\omega_m}\bfv_m.
\end{equation}
The apparent singularity at $\omega_m=0$ is removable:
\begin{equation}
    \lim_{\omega_m\to0}\boldsymbol{\Xi}=\delta t\bfI_2,
    \qquad
    \lim_{\omega_m\to0}\bfs
    =-\frac{\delta t^2}{2}\bfJ\bfv_m.
\end{equation}

The nominal state is integrated by
\begin{subequations}
\begin{align}
    \hat{\theta}_{k+1}&=\hat{\theta}_k+\Delta\hat{\theta}_k,\\
    \hat{\bfp}_{k+1}&=
    \hat{\bfp}_k+\bfC(\hat{\theta}_k)\Delta\hat{\bfp}_k.
\end{align}
\end{subequations}
For the additive auxiliary error
$\bfvarepsilon_{\mathrm A,k}
=[\tilde{\theta}_k,\tilde{\bfp}_k^\top]^\top$, first-order expansion gives
\begin{equation}
    \bfvarepsilon_{\mathrm A,k+1}
    =
    \bfPhi_{\mathrm A,k}\bfvarepsilon_{\mathrm A,k}
    +
    \bfG_{\mathrm A,k}
    \begin{bmatrix}n_\omega\\\bfn_v\end{bmatrix},
    \label{sup:equ:sup_planar_auxiliary_discrete}
\end{equation}
with
\begin{align}
    \bfPhi_{\mathrm A,k}
    &=
    \begin{bmatrix}
        1&\bf0_{1\times2}\\
        \bfJ(\hat{\bfp}_{k+1}-\hat{\bfp}_k)&\bfI_2
    \end{bmatrix},
    \\
    \bfG_{\mathrm A,k}
    &=
    \begin{bmatrix}
        -\delta t&\bf0_{1\times2}\\
        \bfC(\hat{\theta}_k)\bfs&
        -\bfC(\hat{\theta}_k)\boldsymbol{\Xi}
    \end{bmatrix}.
    \label{sup:equ:sup_planar_auxiliary_jacobians}
\end{align}

Now embed the state in $\mathbf{SE}(2)$ as
\begin{equation}
    \bfX_k=
    \begin{bmatrix}
        \bfC(\theta_k)&\bfp_k\\
        \bf0^\top&1
    \end{bmatrix}
\end{equation}
and define the reference error by
\begin{equation}
    \Exp_{\mathbf{SE}(2)}(\bfvarepsilon_{\mathrm R,k})\hat{\bfX}_k
    =
    \bfX_k.
    \label{sup:equ:sup_planar_reference_error}
\end{equation}
Linearizing \eqref{sup:equ:sup_planar_reference_error} at the estimate gives
\begin{equation}
    \bfvarepsilon_{\mathrm R,k}
    =
    \bfT_k\bfvarepsilon_{\mathrm A,k}
    +O(\|\bfvarepsilon_{\mathrm A,k}\|^2),
    \qquad
    \bfT_k=
    \begin{bmatrix}
        1&\bf0_{1\times2}\\
        -\bfJ\hat{\bfp}_k&\bfI_2
    \end{bmatrix}.
    \label{sup:equ:sup_planar_local_relation}
\end{equation}
Substitution of \eqref{sup:equ:sup_planar_auxiliary_discrete} into
\eqref{sup:equ:sup_planar_local_relation} at $k$ and $k+1$ gives
\begin{subequations}
\begin{align}
    \bfPhi_{\mathrm R,k}
    &=
    \bfT_{k+1}\bfPhi_{\mathrm A,k}\bfT_k^{-1}
    =
    \bfI_3,
    \\
    \bfG_{\mathrm R,k}
    &=
    \bfT_{k+1}\bfG_{\mathrm A,k}
    \nonumber\\
    &=
    \begin{bmatrix}
        -\delta t&\bf0_{1\times2}\\
        \bfJ\hat{\bfp}_{k+1}\delta t+
        \bfC(\hat{\theta}_k)\bfs&
        -\bfC(\hat{\theta}_k)\boldsymbol{\Xi}
    \end{bmatrix}.
\end{align}
\label{sup:equ:sup_planar_reference_jacobians}
\end{subequations}
The transformed map
$\varphi_{\mathrm T}=\bfT_k\varphi_{\mathrm A}$ therefore produces exactly the
same first-order discrete Jacobians in
\eqref{sup:equ:sup_planar_reference_jacobians}.

The higher-order distinction follows from the inverse maps. For
$\bfvarepsilon_{\mathrm R}=[\alpha,\boldsymbol{\rho}^\top]^\top$,
\eqref{sup:equ:sup_planar_reference_error} gives
\begin{equation}
    \theta=\hat{\theta}+\alpha,
    \qquad
    \bfp=
    \bfC(\alpha)\hat{\bfp}
    +
    \bfJ_l(\alpha)\boldsymbol{\rho},
    \label{sup:equ:sup_planar_reference_inverse}
\end{equation}
where
\begin{equation}
    \bfJ_l(\alpha)
    =
    \frac{\sin\alpha}{\alpha}\bfI_2
    +
    \frac{1-\cos\alpha}{\alpha}\bfJ,
    \qquad
    \bfJ_l(0)=\bfI_2.
\end{equation}
For the transformed coordinate
$\bfvarepsilon_{\mathrm T}=[\alpha,\boldsymbol{\rho}^\top]^\top$, direct
inversion of $\bfvarepsilon_{\mathrm T}=\bfT\bfvarepsilon_{\mathrm A}$ instead
gives
\begin{equation}
    \theta=\hat{\theta}+\alpha,
    \qquad
    \bfp=
    \hat{\bfp}+\boldsymbol{\rho}+\bfJ\hat{\bfp}\alpha.
    \label{sup:equ:sup_planar_transformed_inverse}
\end{equation}
Equations~\eqref{sup:equ:sup_planar_reference_inverse} and
\eqref{sup:equ:sup_planar_transformed_inverse} have the same first-order expansion,
but the rotation $\bfC(\alpha)$ and the left Jacobian $\bfJ_l(\alpha)$ introduce
different second- and higher-order terms. This proves both the preserved
first-order Jacobians and the distinct nonlinear uncertainty shapes shown in
Fig.~\ref{sup:fig:sup_preintegration_bound}.

\section{Simulation and Experiment Settings}
\label{sup:app:sup_reference_trajectories}

The entries in the simulation RMSE table of the main manuscript are
reference pose trajectories used to generate synthetic measurements, rather
than raw image/IMU datasets. The OpenVINS simulator
\cite{genevaOpenVINSResearchPlatform2020} fits an $\mathbf{SE}(3)$ B-spline to
each time-stamped trajectory and then generates inertial measurements and
visual bearing observations.

\begin{table}[!thp]
    \caption{Basic Simulator Parameters}
    \centering
    \footnotesize
    \setlength\tabcolsep{4.5pt}
    \begin{tabular}[]{cccc}
            \toprule
            { \textbf{Parameter}} & \textbf{Value} & \textbf{Parameter} & \textbf{Value}  \\
            \midrule
            Accel. White Noise    & 2.0e-03  $\text{m} / \text{s}^2 / \sqrt{\text{Hz}}$    &IMU Freq.           & 200 Hz  \\
            Accel. Random Walk    & 3.0e-03  $\text{m} / \text{s}^3 / \sqrt{\text{Hz}}$     & Cam. Freq.          & 10  Hz  \\
            Gyro. White Noise  & 1.7e-04  $\text{rad} / \text{s} / \sqrt{\text{Hz}}$      & Cam. Noise          & 1 pixel \\
             Gyro. Random Walk  & 2.0e-05  $\text{rad} / \text{s}^2 / \sqrt{\text{Hz}}$  &  Cam. Number         & Mono  \\
              Cam. Resolution     & 752 $\times$ 480 pixels & Max Landmarks       & 40 \\
            \bottomrule
    \end{tabular}
    \label{sup:tab:sim_params_sup}
\end{table}

The first four entries in Table~\ref{sup:tab:sim_params_sup} parameterize the
continuous-time process-noise vector in the system model,
\begin{equation}
    \bfn=
    [\bfn_{\omega}^{\top},\bfn_a^{\top},
    \bfn_{\mathrm{w}\omega}^{\top},\bfn_{\mathrm{w}a}^{\top}]^{\top}.
\end{equation}
Specifically, Gyro. White Noise and Accel. White Noise are the noise densities
of $\bfn_{\omega}$ and $\bfn_a$, respectively, which corrupt the raw angular
velocity and acceleration measurements. Gyro. Random Walk and Accel. Random
Walk are the noise densities of $\bfn_{\mathrm{w}\omega}$ and
$\bfn_{\mathrm{w}a}$, respectively, which drive the gyroscope- and
accelerometer-bias random walks. Their units in the table are continuous-time
noise-density units, which the simulator discretizes according to the sensor
sampling interval.

The remaining entries describe the measurement and estimator configuration.
IMU Freq.\ and Cam. Freq.\ are the inertial and visual sampling rates,
respectively. Cam. Noise is the standard deviation of the additive image
measurement noise $\bfepsilon_i$, expressed in pixels, and Cam. Resolution is
the image width and height in pixels. The camera-number and maximum-landmark
parameters are configurable OpenVINS settings. The camera-number parameter
selects the visual input streams: using \texttt{cam0} alone gives a monocular
configuration, whereas using both \texttt{cam0} and \texttt{cam1} gives a
stereo configuration. It therefore determines the visual measurement
configuration and affects the frontend and update costs. Let $m$ denote the
current number of landmark states and $m_{\max}$ its configured upper bound,
so that $m\leq m_{\max}$. The maximum-landmark parameter bounds the maximum
landmark-state dimension and the associated state-estimation cost; the
maintained set changes as landmarks are marginalized and newly initialized.
The simulation configuration in Table~\ref{sup:tab:sim_params_sup} uses monocular
input and $m_{\max}=40$, following the default OpenVINS simulator settings. The EuRoC
MAV experiments instead use the default OpenVINS \textbf{euroc\_mav}
configuration, namely stereo input and $m_{\max}=50$. The self-collected
experiments use the same stereo/50-landmark settings while retaining their
dataset-specific sensor-noise and calibration parameters.

Udel-Gore, Udel-Gara, Udel-Arl-s, and Udel-Neig are UDel/RPNG reference
trajectories distributed with OpenVINS\@. Udel-Gore represents motion around Gore
Hall at the University of Delaware. Udel-Gara, Udel-Arl-s, and Udel-Neig denote
garage, short ARL, and neighborhood-scale trajectories, respectively.
Udel-Gore and Udel-Arl-s mainly represent indoor or campus-like
handheld/platform-carried motion, whereas Udel-Gara and Udel-Neig contain
vehicle-like motion over larger spatial ranges.

TUM-Corr and TUM-Magi are derived from the corridor and magistrale sequences of
the TUM VI benchmark \cite{schubert2018tum} and represent indoor handheld
motion. EuRoC-V11 is based on the EuRoC V1\_01\_easy sequence
\cite{burri2016EuRoCMicroAerial}, which records micro-aerial-vehicle motion and
provides stereo images, 200 Hz IMU measurements, and high-precision ground
truth. Their different spatial scales and motion patterns are visualized in
Fig.~\ref{sup:fig:sup_traj_3d_subplots}.
\begin{figure}[tbp]
    \centering
    \includegraphics[width=0.98\linewidth]{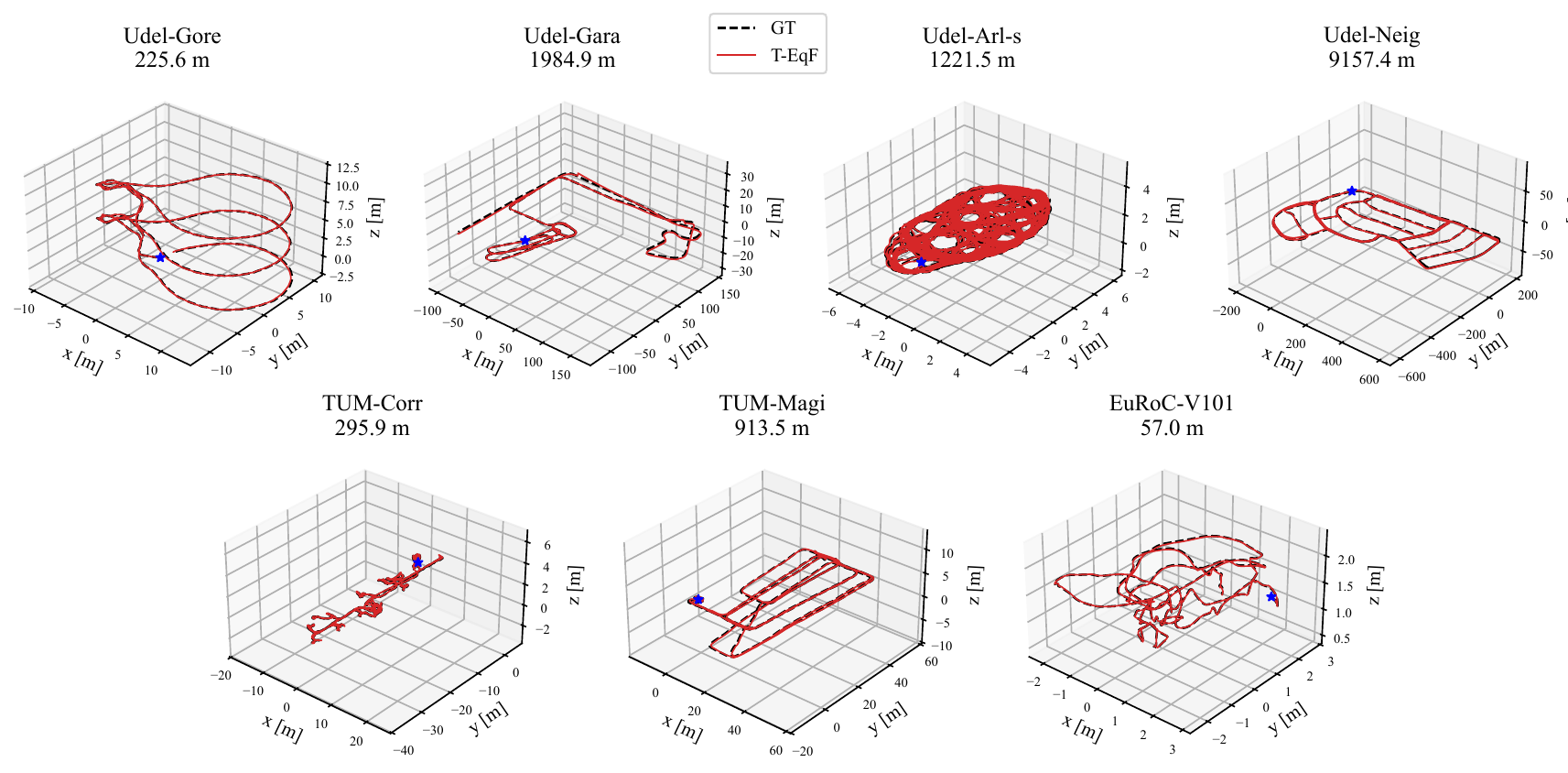}
    \caption{Three-dimensional visualization of the seven reference pose
    trajectories used by the OpenVINS simulator. Axis scales differ across
    subplots so that both short indoor motions and long vehicle-like
    trajectories remain visible.}
    \label{sup:fig:sup_traj_3d_subplots}
\end{figure}

\section{Designing Consistent EqFs with Local Landmark Parameterizations}
Improving estimator consistency is compatible with reducing linearization error. The proposed transformation framework supports local landmark parameterizations and can be used to analyze and improve the consistency of estimators that employ them. To illustrate this compatibility, consider an estimator that combines an ESKF-style IMU error with a local landmark parameterization (e.g., $\mathbf{SOT}(3)$) designed to reduce measurement linearization error. We refer to this estimator as SOT-ESKF and define its error state by
\begin{equation}
    \varphi_{\mathrm{SOT-ESKF}} = \begin{pmatrix}
\Log_{\SO3}(\hat{\bfR}^{\top}\bfR)\\
\bfv-\hat{\bfv}\\
\bfp-\hat{\bfp}\\
\bfb_\omega-\hat{\bfb}_\omega\\
\bfb_a-\hat{\bfb}_a\\
\bfy - \hat{\bfy}
    \end{pmatrix},
    \label{sup:equ:sup_sot_eskf_error}
\end{equation}
where the local landmark parameterization is defined by
\begin{equation}
    \bfy
    =
    \varsigma^{-1}({^I}\bfp_f)
    =
    \varsigma^{-1}\left(\bfR^\top(\bff-\bfp)\right),
\end{equation}
and $\bff$ denotes the landmark position in the global frame. As in
\eqref{sup:equ:sup_eskf_sot_transformation}, the transformation from ESKF
error to SOT-ESKF error is
\begin{equation}
\begin{bmatrix}
\tilde{\bftheta}_{\mathrm{SOT-ESKF}}\\
\tilde{\bfv}_{\mathrm{SOT-ESKF}}\\
\tilde{\bfp}_{\mathrm{SOT-ESKF}}\\
\tilde{\bfb}_{\omega}\\
\tilde{\bfb}_{a}\\
\tilde{\bfy}_{\mathrm{SOT-ESKF}}
\end{bmatrix}
=
\underbrace{
    \left[
        \begin{array}{ccccc|c}
\bfI_3&\bf0&\bf0&\bf0&\bf0&\bf0\\
\bf0&\bfI_3&\bf0&\bf0&\bf0&\bf0\\
\bf0&\bf0&\bfI_3&\bf0&\bf0&\bf0\\
\bf0&\bf0&\bf0&\bfI_3&\bf0&\bf0\\
\bf0&\bf0&\bf0&\bf0&\bfI_3&\bf0\\
\hline
\bfA[\hat{\bfR}^{\top}(\hat{\bff}-\hat{\bfp})]_\times&
\bf0&-\bfA\hat{\bfR}^{\top}&\bf0&\bf0&\bfA\hat{\bfR}^{\top}
        \end{array}
    \right]
}_{\bfT_{\mathrm{ESKF}}^{\mathrm{SOT-ESKF}}}
\begin{bmatrix}
\tilde{\bftheta}_{\mathrm{ESKF}}\\
\tilde{\bfv}_{\mathrm{ESKF}}\\
\tilde{\bfp}_{\mathrm{ESKF}}\\
\tilde{\bfb}_{\omega}\\
\tilde{\bfb}_{a}\\
\tilde{\bff}_{\mathrm{ESKF}}
\end{bmatrix}.
\end{equation}
The resulting unobservable subspace of SOT-ESKF is state-dependent:
\begin{equation}
\begin{aligned}
    \operatorname{span}(\bfN_{\mathrm{SOT-ESKF}})
    &=\operatorname{span}\!\left(
    \bfT_{\mathrm{ESKF}}^{\mathrm{SOT-ESKF}}
    \begin{bmatrix}
        \bf0_{3\times3}&-\hat{\bfR}^{\top}\bfg\\
        \bf0_{3\times3}&[\hat{\bfv}]_\times\bfg\\
        \bfI_3&[\hat{\bfp}]_\times\bfg\\
        \bf0_{3\times3}&\bf0_{3\times1}\\
        \bf0_{3\times3}&\bf0_{3\times1}\\
        \hline
        \bfI_3&[\hat{\bff}]_\times\bfg
    \end{bmatrix}
    \right)\\
    &=\operatorname{span}\!\left(
    \begin{bmatrix}
        \bf0_{3\times3}&-\hat{\bfR}^{\top}\bfg\\
        \bf0_{3\times3}&[\hat{\bfv}]_\times\bfg\\
        \bfI_3&[\hat{\bfp}]_\times\bfg\\
        \bf0_{3\times3}&\bf0_{3\times1}\\
        \bf0_{3\times3}&\bf0_{3\times1}\\
        \hline
        \bf0_{3\times3}&\bf0_{3\times1}
    \end{bmatrix}
    \right),
\end{aligned}
    \label{sup:equ:sup_sot_unobservable2}
\end{equation}

Therefore, SOT-ESKF is inconsistent. To obtain a consistent formulation,
we apply the following transformation to SOT-ESKF error:
\begin{equation}
    \bfT^{\text{T-SOT-ESKF}}_{\text{SOT-ESKF}} =     \left[
        \begin{array}{ccccc|c}
\hat{\bfR}&\bf0&\bf0&\bf0&\bf0&\bf0\\
 {[\hat{\bfv}]_\times} \hat{\bfR}&\bfI_3&\bf0&\bf0&\bf0&\bf0\\
{[\hat{\bfp}]_\times} \hat{\bfR}&\bf0&\bfI_3&\bf0&\bf0&\bf0\\
\bf0&\bf0&\bf0&\bfI_3&\bf0&\bf0\\
\bf0&\bf0&\bf0&\bf0&\bfI_3&\bf0\\
\hline
\bf0&\bf0&\bf0&\bf0&\bf0&\bfI_3\\
        \end{array}
    \right].
\end{equation}
The resulting unobservable subspace of T-SOT-ESKF is state-independent:
\begin{equation}
\begin{aligned}
    \operatorname{span}(\bfN_{\mathrm{T-SOT-ESKF}})
    &=\operatorname{span}\!\left(
    \bfT_{\mathrm{SOT-ESKF}}^{\mathrm{T-SOT-ESKF}}
    \begin{bmatrix}
        \bf0_{3\times3}&-\hat{\bfR}^{\top}\bfg\\
        \bf0_{3\times3}&[\hat{\bfv}]_\times\bfg\\
        \bfI_3&[\hat{\bfp}]_\times\bfg\\
        \bf0_{3\times3}&\bf0_{3\times1}\\
        \bf0_{3\times3}&\bf0_{3\times1}\\
        \hline
        \bf0_{3\times3}&\bf0_{3\times1}
    \end{bmatrix}
    \right)\\
    &=\operatorname{span}\!\left(
    \begin{bmatrix}
        \bf0_{3\times3}&-\bfg\\
        \bf0_{3\times3}&\bf0_{3\times1}\\
        \bfI_3&\bf0_{3\times1}\\
        \bf0_{3\times3}&\bf0_{3\times1}\\
        \bf0_{3\times3}&\bf0_{3\times1}\\
        \hline
        \bf0_{3\times3}&\bf0_{3\times1}
    \end{bmatrix}
    \right),
\end{aligned}
    \label{sup:equ:sup_sot_unobservable3}
\end{equation}
Therefore, T-SOT-ESKF is consistent. This derivation does not require a
specific local landmark parameterization and thus applies to any locally
invertible parameterization (e.g., \path{anchored_3d},
\path{anchored_full_inverse_depth}, or
\path{anchored_msckf_inverse_depth}).

We compare the following EqFs based on local landmark parameterizations:
\begin{itemize}
    \item ESKF: an inconsistent baseline with a global 3D landmark parameterization;
    \item T-EqF: a consistent baseline with a global 3D landmark parameterization;
    \item SOT-EqF: a consistent filter with an $\mathbf{SOT}(3)$ local landmark parameterization;
    \item T-SOT-ESKF: the consistent, transformed version of SOT-ESKF;
    \item T-3D-ESKF: a consistent, transformed ESKF with a local 3D landmark parameterization, designed analogously to T-SOT-ESKF by replacing the $\mathbf{SOT}(3)$ parameterization with a local 3D parameterization;
    \item T-FID-ESKF: a consistent, transformed ESKF with the \path{anchored_full_inverse_depth} parameterization; and
    \item T-MID-ESKF: a consistent, transformed ESKF with the \path{anchored_msckf_inverse_depth} parameterization.
\end{itemize}
The results are reported in Table~\ref{sup:tab:rmse}. All consistent filters improve both orientation and position RMSEs relative to ESKF\@. Compared with T-EqF, the filters employing local landmark parameterizations generally achieve slightly lower RMSEs, while T-SOT-ESKF, T-3D-ESKF, T-FID-ESKF, and T-MID-ESKF perform comparably to SOT-EqF. These results demonstrate that improving consistency does not conflict with reducing linearization error. In particular, the proposed transformation framework can improve observability-related consistency while retaining the local landmark parameterization and its associated reduction in measurement linearization error.
\begin{table}[ht]
    \caption{Orientation (deg) and position (m) RMSEs in simulation.}
    \centering
    \scriptsize
    \setlength\tabcolsep{4pt}
    \setlength\aboverulesep{0pt}
    \setlength\belowrulesep{0pt}
    \begin{tabular}[]{ccc|ccccc}
            \toprule
            {\textbf{Trajectory}} & \textbf{ESKF} & \textbf{T-EqF} & \textbf{SOT-EqF} & \textbf{T-SOT-ESKF} & {\textbf{T-3D-ESKF}} & \textbf{T-MID-ESKF} & \textbf{T-FID-ESKF} \\
            \midrule
            {Udel-Gara} & 0.245 / 1.686 & 0.222 / 1.653 & \textbf{0.219} / 1.568 & \textbf{0.219} / \textbf{1.567} & \textbf{0.219} / 1.572 & \textbf{0.219} / \textbf{1.567} & \textbf{0.219} / \textbf{1.567} \\
            {Udel-Neig} & 1.625 / 10.94 &1.139 / 7.855 & 1.123 / 7.823 & 1.123 / 7.822 & \textbf{1.116} / \textbf{7.786} & 1.127 / 7.841 & 1.129 / 7.855 \\
            {TUM-Magi} &  0.456 / 0.420 & \textbf{0.376} / 0.391 & 0.380 / \textbf{0.372} & 0.380 / \textbf{0.372} & 0.382 / 0.382 & 0.377 / 0.386 & \textbf{0.376} / 0.386 \\
            \bottomrule
    \end{tabular}
    \label{sup:tab:rmse}
\end{table}

\newpage




\ifdefined\SUPPLEMENTINMAIN
\expandafter 
\else
\bibliographystyle{IEEEtran}
\bibliography{ref.bib,ref2.bib}
\fi






\end{document}